\pdfoutput=1
\documentclass{article} 
\usepackage{times}
\usepackage{geometry}

\usepackage[utf8]{inputenc} 
\usepackage[T1]{fontenc}    
\usepackage[colorlinks,citecolor=blue]{hyperref}       
\usepackage{url}            
\usepackage{booktabs}       
\usepackage{amsfonts}       
\usepackage{nicefrac}       
\usepackage{microtype}      

\usepackage{mkolar_definitions}
\usepackage{url}
\usepackage{authblk}

\usepackage{graphicx}
\usepackage{subfigure}
\usepackage{multirow}
\usepackage{floatrow}
\newfloatcommand{capbtabbox}{table}[][\FBwidth]

\PassOptionsToPackage{numbers}{natbib}
\usepackage{natbib}

\usepackage{algorithm,algorithmicx,algpseudocode}
\usepackage{xspace}
\usepackage{graphicx}

\usepackage{enumerate}
\usepackage{amsthm,amsmath,amssymb}
\usepackage{amsfonts,dsfont}
\usepackage{nicefrac}
\usepackage{microtype}
\usepackage{mathtools}

\usepackage{xcolor}
\newcount\Comments  
\Comments=1 
\definecolor{darkgreen}{rgb}{0,0.5,0}
\definecolor{darkred}{rgb}{0.7,0,0}
\definecolor{teal}{rgb}{0.3,0.8,0.8}
\newcommand{\kibitz}[2]{\ifnum\Comments=1\parbox{\linewidth}{\textcolor{#1}{\textsf{\footnotesize #2}}}\fi}


\newcounter{qcounter}
 {\end{list}}

\usepackage{ifthen}
\newcommand{\version}{arxiv}
\newcommand{\df}[1]{#1}
\newcommand{\dfc}[1]{}

\newcommand{\figsqueeze}{0.0cm}


\def\dis{\ensuremath{\textrm{DIS}}}

\def\dismin{\ensuremath{\widetilde{\textrm{DIS}}}}

\def\regconst{\ensuremath{\kappa}}

\newcommand{\logfactor}[1][n]{\ensuremath{\nu_{#1}}} 

\newcommand{\vf}{\ensuremath{f}}
\newcommand{\f}[2]{\ensuremath{\vf(#1;#2)}}
\newcommand{\fstar}[2]{\ensuremath{\vf^\star(#1;#2)}}

\def\vF{\mathcal{F}}
\def\vFcsr{\mathcal{F}_{\textrm{csr}}}
\def\vFmin{\widetilde{\mathcal{F}}}

\newcommand{\Qtree}{\ensuremath{\mathcal{Q}}}
\newcommand{\Qf}{\ensuremath{\textbf{Q}}}

\def\Rhat{\ensuremath{\widehat{R}}}

\newcommand{\cmin}[1]{\ensuremath{c_{-}(#1)}}
\newcommand{\cmax}[1]{\ensuremath{c_{+}(#1)}}
\newcommand{\cminhat}[1]{\ensuremath{\widehat{c_{-}}(#1)}}
\newcommand{\cmaxhat}[1]{\ensuremath{\widehat{c_{+}}(#1)}}


\newcommand{\oracle}{\ensuremath{\textsc{Oracle}}}
\newcommand{\order}{\ensuremath{\mathcal{O}}}
\newcommand{\otil}{\ensuremath{\tilde{\order}}}
\newcommand\ind[1]{\ensuremath{\mathds{1}\left(#1\right)}}
\newcommand{\alglong}{Cost Overlapped Active Learning\xspace}
\newcommand{\alg}{\textsc{COAL}\xspace}
\newcommand{\maxcost}{\textsc{MaxCost}\xspace}

\newcommand{\mincost}{\textsc{MinCost}\xspace}
\newcommand{\csoaa}{\textsc{csoaa}\xspace}
\newcommand{\aggravate}{\textsc{Aggravate}\xspace}

\def\perf{\ensuremath{\mathrm{perf}}}
\def\query{\ensuremath{\mathrm{query}}}

\def\auc{\ensuremath{\mathrm{AUC}}}
\def\median{\ensuremath{\mathrm{median}}}
\def\length{\ensuremath{len}}
\def\mel{\ensuremath{mel}}

\def\pdim{\ensuremath{\mathrm{Pdim}}}


\def\tol{\ensuremath{\textsc{tol}}}
\def\mw{\ensuremath{\mu}}

\newcommand\blfootnote[1]{%
  \begingroup
  \renewcommand\thefootnote{}\footnote{#1}%
  \addtocounter{footnote}{-1}%
  \endgroup
}

\begin{document} 

\title{Active Learning for Cost-Sensitive Classification}

\author[1]{
Akshay Krishnamurthy}
\author[2]{
Alekh Agarwal}
\author[3]{
Tzu-Kuo Huang}
\author[1]{
Hal Daum\'{e} III}
\author[1]{
John Langford}

\affil[1]{Microsoft Research, New York, NY}
\affil[2]{Microsoft Research, Redmond, WA}
\affil[3]{Uber Advanced Technology Center, Pittsburgh, PA}
\maketitle
\blfootnote{akshay@cs.umass.edu, alekha@microsoft.com, tkhuang@protonmail.com, hal@umiacs.umd.edu, jcl@microsoft.com}
\vspace{-1cm}

\begin{abstract}
We design an active learning algorithm for cost-sensitive multiclass
classification: problems where different errors have different
costs. Our algorithm, \alg, makes predictions by regressing to each
label's cost and predicting the smallest.  On a new example, it uses a
set of regressors that perform well on past data to estimate possible
costs for each label. It queries only the labels that \emph{could be}
the best, ignoring the sure losers. We prove \alg can be efficiently
implemented for any regression family that admits squared loss
optimization; it also enjoys strong guarantees with respect to
predictive performance and labeling effort.  We empirically compare
\alg to passive learning and several active learning baselines,
showing significant improvements in labeling effort and test cost on
real-world datasets.
\end{abstract}

\section{Introduction}
The field of active learning studies how to efficiently elicit
relevant information so learning algorithms can make good
decisions. Almost all active learning algorithms are designed for
binary classification problems, leading to the natural question: How
can active learning address more complex prediction problems?
Multiclass and importance-weighted classification require only minor
modifications but we know of no active learning algorithms that enjoy
theoretical guarantees for more complex problems.

One such problem is cost-sensitive multiclass classification
(CSMC). In CSMC with $K$ classes, passive learners receive input
examples $x$ and cost vectors $c \in \mathbb{R}^K$, where $c(y)$ is
the cost of predicting label $y$ on $x$.\footnote{Cost here refers to
  prediction cost and not labeling effort or the cost of acquiring
  different labels.}  A natural design for an \emph{active} CSMC
learner then is to adaptively query the costs of only a (possibly
empty) subset of labels on each $x$.  Since measuring label complexity
is more nuanced in CSMC (e.g., is it more expensive to query three
costs on a single example or one cost on three examples?), we track
both the number of examples for which at least one cost is queried,
along with the total number of cost queries issued.  The first
corresponds to a fixed human effort for inspecting the example. The
second captures the additional effort for judging the cost of each
prediction, which depends on the number of labels queried. (By
querying a label, we mean querying the cost of predicting that label
given an example.)

In this setup, we develop a new active learning algorithm for CSMC
called \alglong (\alg). \alg assumes access to a set of regression
functions, and, when processing an example $x$, it uses the functions
with good past performance to compute the range of possible costs that
each label might take.  Naturally, \alg only queries labels with large
cost range, akin to uncertainty-based approaches in active
regression~\citep{castro2005faster}, but furthermore, it only queries
labels that could possibly have the smallest cost, avoiding the
uncertain, but surely suboptimal labels. The key algorithmic
innovation is an efficient way to compute the cost range realized by
good regressors. This computation, and \alg as a whole, only requires
that the regression functions admit efficient squared loss optimization, in
contrast with prior algorithms that require 0/1 loss
optimization~\citep{BeygelDL09,Hanneke2014}.

Among our results, we prove that when processing $n$ (unlabeled)
examples with $K$ classes and a regression class with pseudo-dimension
$d$ (See Definition~\ref{def:pdim}),
\begin{enumerate}
\item The algorithm needs to solve $\order(K n^5)$ regression problems
  over the function class (Corollary~\ref{cor:oracle_bound}). Thus
  \alg runs in polynomial time for convex regression sets.
\item With no assumptions on the noise in the problem, the algorithm
  achieves generalization error $\otil(\sqrt{Kd/n})$ and requests
  $\otil(n\theta_2\sqrt{Kd})$ costs from $\otil(n\theta_1\sqrt{Kd})$
  examples (Theorems~\ref{thm:reg_bound} and~\ref{thm:high_noise}) where
  $\theta_1,\theta_2 $ are the disagreement coefficients
  (Definition~\ref{def:dis_coeff})\footnote{$\otil(\cdot)$ suppresses
    logarithmic dependence on $n$, $K$, and $d$.}. The worst case
  offers minimal improvement over passive learning, akin to active
  learning for binary classification.
\item With a Massart-type noise assumption
  (Assumption~\ref{as:low_noise}), the algorithm has generalization
  error $\otil(Kd/n)$ while requesting $\otil(Kd (\theta_2 +K\theta_1)
  \log n )$ labels from $\otil(Kd \theta_1 \log n)$ examples
  (Corollary~\ref{cor:reg_low_noise}, Theorem~\ref{thm:massart}). Thus
  under favorable conditions, \alg requests \emph{exponentially fewer}
  labels than passive learning.
\end{enumerate}
We also derive generalization and label complexity bounds under a
milder Tsybakov-type noise condition
(Assumption~\ref{as:tsybakov}). Existing lower bounds from binary
classification~\citep{Hanneke2014} \df{suggest} that \dfc{all of} our results are
optimal in their dependence on $n$\df{, although these lower bounds do not directly apply to our setting}.  We also discuss some intuitive
examples highlighting the benefits of using \alg.

\begin{figure}[t]
\centering
\includegraphics[width=0.4\textwidth,height=0.3\textwidth]{./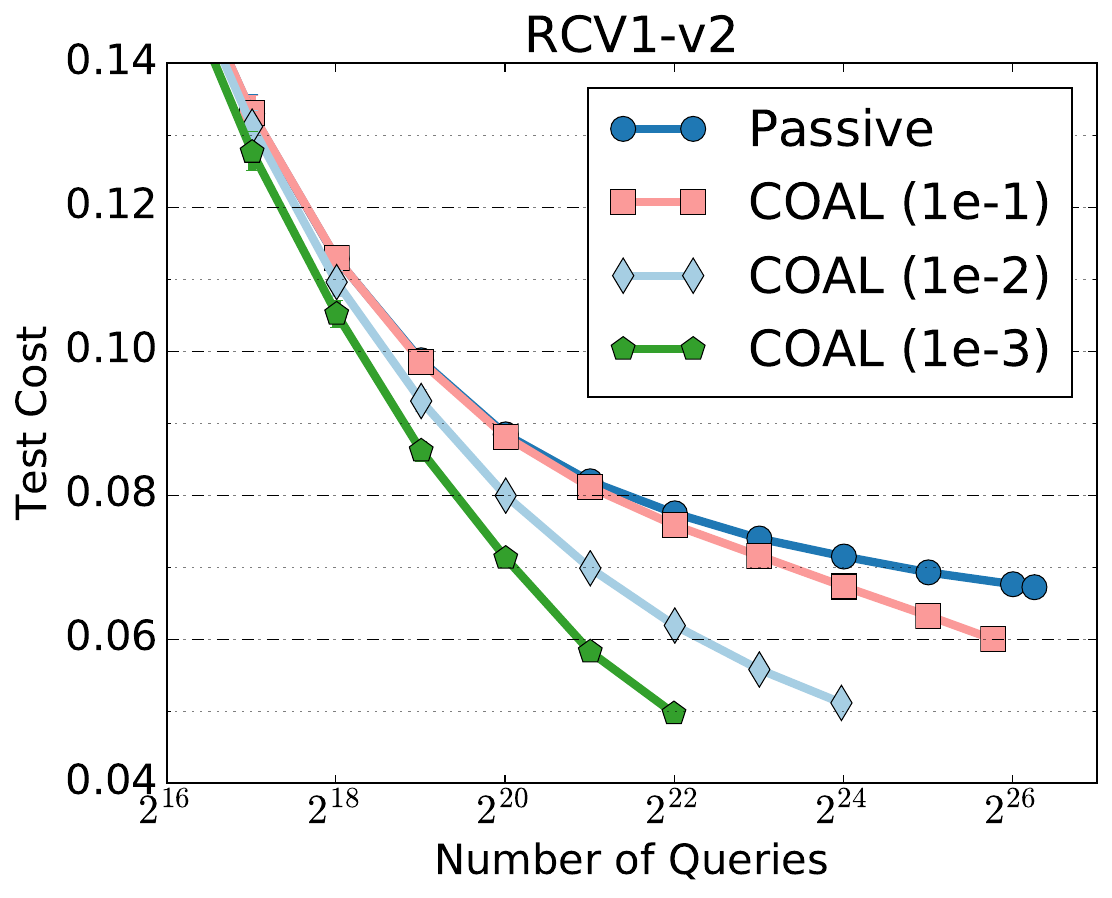}
\vspace{\figsqueeze}
\caption{Empirical evaluation of \alg on Reuters text categorization dataset. Active learning achieves \emph{better} test cost than passive, with a factor of $16$ fewer queries. See Section~\ref{sec:experiments} for details. }
\label{fig:rcv1_results}
\vspace{\figsqueeze}
\end{figure}

CSMC provides a more expressive language for success and failure than
multiclass classification, which allows learning algorithms to make
the trade-offs necessary for good performance and broadens potential
applications. For example, CSMC can naturally express partial failure
in hierarchical
classification~\citep{silla2011survey}. Experimentally, we show that
\alg substantially outperforms the passive learning baseline with
orders of magnitude savings in the labeling effort on a number of
hierarchical classification datasets (see
Figure~\ref{fig:rcv1_results} for comparison between passive learning
and \alg on Reuters text categorization).

CSMC also forms the basis of learning to avoid cascading failures in
joint prediction tasks like structured prediction and reinforcement
learning~\citep{daume09searn,ross2014reinforcement,chang2015learning}.
As our second application, we consider learning to search algorithms
for joint or structured prediction, which operate by a reduction to
CSMC. In this reduction, evaluating the cost of a class often involves
a computationally expensive ``roll-out,'' so using an active learning
algorithm inside such a passive joint prediction method can lead to
significant computational savings. We show that using \alg within the
\aggravate algorithm~\citep{ross2014reinforcement,chang2015learning}
reduces the number of roll-outs by a factor of $\frac 1 4$ to $\frac 3
4$ on several joint prediction tasks.

Our code is publicly available as part of the Vowpal Wabbit machine
learning library.\footnote{\url{http://hunch.net/~vw}}


\section{Related Work}
\label{sec:related}
Active learning is a thriving research area with many theoretical and
empirical studies. We recommend the survey
of~\citet{settles2012active} for an overview of more empirical
research. We focus here on theoretical results.



Our work falls into the framework of \emph{disagreement-based active
  learning}, which studies general hypothesis spaces typically in an
agnostic setup (see \citet{Hanneke2014} for an excellent
survey). Existing results study binary classification, while our work
generalizes to CSMC, assuming that we can accurately predict costs
using regression functions from our class. One difference that is
natural for CSMC is that our query rule checks the range of predicted
costs for a label.


The other main difference is that we use a square loss oracle to
search the version space.  In contrast, prior work either explicitly
enumerates the version
space~\citep{balcan2006agnostic,zhang2014beyond} or uses a 0/1 loss
\emph{classification} oracle for the
search~\citep{DasguptaHM07,BeygelDL09,BeygelHLZ10,huang2015efficient}.
In most instantiations, the oracle solves an NP-hard problem and so
does not directly lead to an efficient algorithm, although practical
implementations using heuristics are still quite effective. Our
approach instead uses a squared-loss \emph{regression} oracle, which
can be implemented efficiently via convex optimization and leads to a
polynomial time algorithm.

\df{In addition to disagreement-based approaches, much research has
  focused on plug-in rules for active learning in binary
  classification, where one estimates the class-conditional regression
  function~\citep{Castro2008,Minsker2012plug,hanneke2012surrogate,carpentier2017adaptivity}. Apart
  from~\citet{hanneke2012surrogate}, these works make smoothness
  assumptions and have a nonparametric
  flavor. Instead,~\citet{hanneke2012surrogate} assume a calibrated
  surrogate loss and abstract realizable function class, which is more
  similar to our setting. While the details vary, our work and these
  prior results employ the same algorithmic recipe of maintaining an
  implicit version space and querying in a suitably-defined
  disagreement region. Our work has two notable differences: (1) our
  algorithm operates in an oracle computational model, only accessing
  the function class through square loss minimization problems, (2)
  our results apply to general CSMC, which exhibit significant
  differences from binary classification. See
  Subsection~\ref{ssec:comparisons} for further discussion.}


\df{Focusing on linear
  representations,~\citet{balcan2007margin,balcan2013active} study
  active learning with distributional assumptions, while the selective
  sampling framework from the online learning community considers
  adversarial
  assumptions~\citep{Cavallanti2011,Dekel2010,Orabona2011,agarwal2013selective}. These
  methods use query strategies that are specialized to linear
  representations and do not naturally generalize to other hypothesis
  classes. }

Supervised learning oracles that solve NP-hard optimization problems
in the worst case have been used in other problems including
contextual bandits~\citep{agarwal2014taming,syrgkanis2016} and
structured prediction~\citep{daume09searn}. Thus we hope that our work
can inspire new algorithms for these settings as well.

Lastly, we mention that square loss regression has been used to
estimate costs for passive CSMC~\citep{langford2005sensitive}, but, to our knowledge,
using a square loss oracle for active CSMC is new.

\paragraph{Advances over~\citet{krishnamurthy2017active}.}
Active learning for CSMC was introduced recently
in~\citet{krishnamurthy2017active} with an algorithm that also uses
cost ranges to decide where to query. They compute cost ranges by
using the regression oracle to perform a binary search for the maximum
and minimum costs, but this computation results in a sub-optimal label
complexity bound. We resolve this sub-optimality with a novel cost
range computation that is inspired by the multiplicative weights
technique for solving linear programs. This algorithmic improvement
also requires a significantly more sophisticated statistical analysis
for which we derive a novel uniform Freedman-type inequality for
classes with bounded pseudo-dimension. This result may be of
independent interest.

\citet{krishnamurthy2017active} also introduce an online approximation
for additional scalability and use this algorithm for their
experiments. Our empirical results use this same online approximation
and are slightly more comprehensive. Finally, we also derive
generalization and label complexity bounds for our algorithm in a
setting inspired by Tsybakov's low noise
condition~\citep{mammen1999smooth,tsybakov2004optimal}.

\paragraph{\df{Comparison with~\citet{foster2018practical}.}}
\df{In a follow-up to the present paper,~\citet{foster2018practical}
  build on our work with a regression-based approach for contextual
  bandit learning, a problem that bears some similarities to active
  learning for CSMC. The results are incomparable due to the
  differences in setting, but it is worth discussing their
  techniques. As in our paper,~\citet{foster2018practical} maintain an
  implicit version space and compute maximum and minimum costs for
  each label, which they use to make predictions. They resolve the
  sub-optimality in~\citet{krishnamurthy2017active} with epoching,
  which enables a simpler cost range computation than our
  multiplicative weights approach. However, epoching incurs an
  additional $\log(n)$ factor in the label complexity, and under
  low-noise conditions where the overall bound is
  $\order(\textrm{polylog}(n))$, this yields a polynomially worse
  guarantee than ours.}

\section{Problem Setting and Notation}
We study cost-sensitive multiclass classification (CSMC) problems with
$K$ classes, where there is an instance space $\Xcal$, a label space
$\Ycal = \{1,\ldots, K\}$, and a distribution $\Dcal$ supported on $\Xcal
\times [0,1]^K$.\footnote{In general, labels just serve as indices for
  the cost vector in CSMC, and the data distribution is over $(x,c)$
  pairs instead of $(x,y)$ pairs as in binary and multiclass
  classification.}  If $(x,c) \sim \Dcal$, we refer to $c$ as the
\emph{cost-vector}, where $c(y)$ is the cost of predicting $y \in \Ycal$.
A classifier $h: \Xcal \rightarrow \Ycal$ has expected cost $\EE_{(x,c)
  \sim \Dcal}[c(h(x))]$ and we aim to find a classifier with minimal
expected cost.

Let $\Gcal \triangleq \{g: \Xcal \mapsto [0,1]\}$ denote a set of base
regressors and let $\Fcal \triangleq \Gcal^K$ denote a set of vector
regressors where the $y^{\textrm{th}}$ coordinate of $\vf \in \Fcal$
is written as $\f{\cdot}{y}$. The set of classifiers under
consideration is $\Hcal \triangleq \{h_{\vf} \mid \vf \in \Fcal\}$
where each $\vf$ defines a classifier $h_{\vf}: \Xcal
\mapsto \Ycal$ by
\begin{align}
  h_{\vf}(x) \triangleq \argmin_{y} \f{x}{y}. \label{eq:reg_to_class}
\end{align}

When using a set of regression functions for a classification task, it
is natural to assume that the expected costs under $\Dcal$ can be
predicted by some function in the set. This motivates the
following realizability assumption.
\begin{assum}[Realizability]
  Define the Bayes-optimal regressor $\vf^\star$, which has
  $\fstar{x}{y} \triangleq \EE_{c}[c(y)|x], \forall x \in \Xcal$ (with $\Dcal(x) > 0$), $y \in \Ycal$.  We
  assume that $\vf^\star \in \Fcal$.
  \label{assumption:realizability}
\end{assum}
While $\vf^\star$ is always well defined, note that the cost itself
may be noisy.  In comparison with our assumption, the existence of a
zero-cost classifier in $\Hcal$ (which is often assumed in active
learning) is stronger, while the existence of $h_{f^\star}$ in $\Hcal$
is weaker but has not been leveraged in active learning.

We also require assumptions on the complexity of the class $\Gcal$ for our
statistical analysis. To this end, we assume that $\Gcal$ is a
compact convex \df{sub}set \df{of $L_{\infty}(\Xcal)$} with finite pseudo-dimension, which is a natural
extension of VC-dimension for real-valued predictors.
\begin{definition}[Pseudo-dimension]
\label{def:pdim}
The pseudo-dimension $\pdim(\Fcal)$ of a function class $\Fcal: \Xcal
\rightarrow \RR$ is defined as the VC-dimension of the set of
threshold functions $\Hcal^+ \triangleq \{(x,\xi) \mapsto \one\{f(x) > \xi\} :
f \in \Fcal\} \subset \Xcal \times \RR \rightarrow \{0,1\}$.
\end{definition}
\begin{assum}
  We assume that $\Gcal$ is a compact convex set with $\pdim(\Gcal) = d < \infty$.
\end{assum}
\df{As an example, linear functions in some basis representation,
  e.g., $g(x) = \sum_{i=1}^dw_i\phi_i(x)$, where weights $w_i$ are
  bounded in some norm, have pseudodimension $d$. In fact, our result
  can be stated entirely in terms of covering numbers, and we
  translate to pseudo-dimension using the fact that such classes have
  ``parametric" covering numbers of the form $(1/\varepsilon)^d$. }
Thus, our
results extend to classes with ``nonparametric" growth rates as well
\df{(e.g., Holder-smooth functions)}, although we focus on the
parametric case for simplicity. Note that this is a significant
departure from~\citet{krishnamurthy2017active}, which assumed that
$\Gcal$ was finite.

Our assumption that $\Gcal$ is a compact convex set introduces a
computational challenging of managing this infinitely large
set.
\dfc{Indeed, several early results on active learning and other
information acquisition problems assume the hypothesis space is finite
and enumerate the
class,
which is
clearly not possible in our setting. }
To address this challenge, we
follow the trend in active learning of leveraging existing algorithmic
research on supervised
learning~\citep{DasguptaHM07,BeygelHLZ10,BeygelDL09} and access
$\Gcal$ exclusively through a \emph{regression oracle}. Given an
importance-weighted dataset $D = \{x_i,c_i,w_i\}_{i=1}^n$ where $x_i
\in \Xcal, c_i \in \RR, w_i \in \RR_+$, the regression oracle computes
\begin{align}
  \oracle(D) \in \argmin_{g \in \Gcal} \sum_{i=1}^n w_i (g(x_i) - c_i)^2.
  \label{eq:oracle}
\end{align}
Since we assume that $\Gcal$ is a compact convex set it is amenable to
standard convex optimization techniques, so this imposes no additional
restriction. However, in the special case of linear functions, this
optimization is just least squares and can be computed in closed
form. Note that this is fundamentally different from prior works that
use a 0/1-loss minimization
oracle~\citep{DasguptaHM07,BeygelHLZ10,BeygelDL09}, which involves an
NP-hard optimization in most cases of interest.

\begin{remark}
\df{Our assumption that $\Gcal$ is convex is only for computational
  tractability, as it is crucial in the efficient implementation of
  our query strategy, but is not required for our generalization and
  label complexity bounds. Unfortunately recent guarantees for
  learning with non-convex
  classes~\citep{liang2015learning,rakhlin2017empirical} do not
  immediately yield efficient active learning strategies. Note also
  that~\citet{krishnamurthy2017active} obtain an efficient algorithm
  without convexity, but this yields a suboptimal label complexity
  guarantee. }
\end{remark}

Given a set of examples and queried costs, we often restrict attention
to regression functions that predict these costs well and assess the
uncertainty in their predictions given a new example $x$. For a subset
of regressors $G \subset \Gcal$, we measure uncertainty over possible
cost values for $x$ with
\begin{align}
  \gamma(x, G) &\triangleq \cmax{x,G} - \cmin{x,G}, \qquad \cmax{x,G} \triangleq \max_{g \in G} g(x), \qquad  \cmin{x,G} \triangleq\min_{g \in G}g(x)\label{eq:max_cost}.
\end{align}
For vector regressors $F\subset\Fcal$, we define the \emph{cost range}
for a label $y$ given $x$ as $\gamma(x,y,F) \triangleq
\gamma(x,G_F(y))$ where $G_F(y) \triangleq \{f(\cdot;y)~|~f\in F\}$
are the base regressors induced by $F$ for $y$.  \df{Note that since
  we are assuming realizability, whenever $f^\star \in F$, the
  quantities $\cmax{x,G_F(y)}$ and $\cmin{x,G_F(y)}$ provide valid
  upper and lower bounds on $\EE[c(y)|x]$.}

To measure the labeling effort, we track the number of examples for
which even a single cost is queried as well as the total number of
queries. This bookkeeping captures settings where the editorial effort
for inspecting an example is high but each cost requires minimal
further effort, as well as those where each cost requires substantial
effort.  Formally, we define $Q_i(y) \in \{0,1\}$ to be the indicator
that the algorithm queries label $y$ on the $i^{\textrm{th}}$ example
and measure
\begin{align}
L_1 \triangleq \sum_{i=1}^n \bigvee_y Q_i(y), \textrm{ and } L_2
\triangleq \sum_{i=1}^n\sum_y Q_i(y).
\label{eq:label_effort}
\end{align}

\section{\alglong}

\begin{algorithm}[t]
  \caption{\alglong (\alg)}
  \label{alg:a_cs}
  \begin{algorithmic}[1]
    \State Input: Regressors $\Gcal$, failure probability $\delta \leq 1/e$.
    \State Set $\psi_i = 1 /\sqrt{i}$, $\regconst = 3$, $\logfactor = 324(d\log(n) + \log(8Ke(d+1)n^2/\delta))$.
    \State Set $\Delta_i = \regconst \min\{\frac{\logfactor}{i-1}, 1\}$.
    \For{$i=1,2,\ldots,n$}
    \State $g_{i,y} \gets \arg \min_{g \in \Gcal}
    \Rhat_i(g;y)$. (See~\eqref{eq:empirical_sq_loss}).
    \State Define $f_i \gets \{g_{i,y}\}_{y=1}^K$. \label{line:fi}
    \State (Implicitly define) $\Gcal_i(y) \gets \{g \in \Gcal_{i-1}(y) \mid \Rhat_i(g;y)
    \leq \Rhat_i(g_{i,y};y) +
    \Delta_{i}\}$.\label{line:version_space}
    \State Receive new example $x$. $Q_i(y) \gets 0, \forall y \in \Ycal$.
    \For{every $y \in \Ycal$}
    \State $\cmaxhat{y} \gets \maxcost((x,y),\psi_i/4)$ and $\cminhat{y} \gets \mincost((x,y), \psi_i/4)$.
    \EndFor
    \State $Y' \gets \{y \in \Ycal \mid \cminhat{y} \leq \min_{y'}
    \cmaxhat{y'}\}.$ \label{line:domination}
    \If{$|Y'| > 1$}
    \State $Q_i(y) \gets 1$ if $y \in Y'$ and $
    \cmaxhat{y} - \cminhat{y} > \psi_i$.\label{line:range}
    \EndIf
    \State Query costs of each $y$ with $Q_i(y) = 1$.
    \EndFor
  \end{algorithmic}
\end{algorithm}

The pseudocode for our algorithm, \alglong (\alg), is given in
Algorithm~\ref{alg:a_cs}.  Given an example $x$, \alg queries the
costs of some of the labels $y$ for $x$. These costs are chosen by (1)
computing a set of good regression functions based on the past data
(i.e., the version space), (2) computing the range of predictions
achievable by these functions for each $y$, and (3) querying each $y$
that could be the best label \emph{and} has substantial
uncertainty. We now detail each step.

To compute an approximate version space we first find the regression
function that minimizes the empirical risk for each label $y$, which
at round $i$ is:
\begin{align}
  \Rhat_i(g;y) \triangleq \frac{1}{i-1}\sum_{j=1}^{i-1}(g(x_j) -
  c_j(y))^2Q_j(y). \label{eq:empirical_sq_loss}
\end{align}
Recall that $Q_j(y)$ is the indicator that we query label $y$ on the
$j^{\textrm{th}}$ example.  Computing the minimizer requires one
oracle call. We implicitly construct the version space $\Gcal_i(y)$ in
Line~\ref{line:version_space} as the surviving regressors with low
square loss regret to the empirical risk minimizer. The tolerance on
this regret is $\Delta_i$ at round $i$, which scales like
$\otil(d/i)$, where recall that $d$ is the pseudo-dimension of the
class $\Gcal$.

\alg then computes the maximum and minimum costs predicted by the
version space $\Gcal_i(y)$ on the new example $x$. Since the true
expected cost is $f^\star(x;y)$ and, as we will see, $f^\star(\cdot;y)
\in \Gcal_i(y)$, these quantities serve as a confidence bound for this
value. The computation is done by the \maxcost and \mincost
subroutines which produce approximations to $\cmax{x,\Gcal_i(y)}$ and
$\cmin{x,\Gcal_i(y)}$ respectively (See~\eqref{eq:max_cost}).

Finally, using the predicted costs, \alg issues (possibly zero)
queries. The algorithm queries any \emph{non-dominated} label that has
a large \emph{cost range}, where a label is non-dominated if its
estimated minimum cost is smaller than the smallest maximum cost
(among all other labels) and the cost range is the difference between the
label's estimated maximum and minimum costs.

Intuitively, \alg queries the cost of every label which cannot be
ruled out as having the smallest cost on $x$, but only if there is
sufficient ambiguity about the actual value of the cost.  The idea is
that labels with little disagreement do not provide much information
for further reducing the version space, since by construction all
regressors would suffer similar square loss. Moreover, only the labels
that could be the best need to be queried at all, since the
cost-sensitive performance of a hypothesis $h_f$ depends only on the
label that it predicts. Hence, labels that are dominated or have small
cost range need not be queried.

Similar query strategies have been used in prior works on binary and multiclass
classification~\citep{Orabona2011,Dekel2010,agarwal2013selective}, but
specialized to linear representations. The key advantage of the linear
case is that the set $\Gcal_i(y)$ (formally, a different set with
similar properties) along with the maximum and minimum costs have
closed form expressions, so that the algorithms are easily
implemented. However, with a general set $\Gcal$ and a regression
oracle, computing these confidence intervals is less
straightforward. We use the \maxcost and \mincost subroutines, and
discuss this aspect of our algorithm next.

\subsection{Efficient Computation of Cost Range}
\label{sec:cost_range_exact}

\begin{algorithm}[t]
  \caption{\maxcost}
  \label{alg:mw_max_cost}
  \begin{algorithmic}[1]
    \State Input: $(x,y)$, tolerance $\tol$, (implicitly) risk functionals $\{\Rhat_j(\cdot;y)\}_{j=1}^i$.
    \State Compute $g_{j,y} = \argmin_{g \in \Gcal} \Rhat_j(g;y)$ for each $j$.
    \State Let $\Delta_j = \regconst \min\{\frac{\logfactor}{j-1},1\}$,  $\tilde{\Delta}_j = \Rhat_j(g_{j,y};y) + \Delta_j$ for each $j$.
    \State Initialize parameters: $c_\ell \gets  0 , c_h \gets 1, T \gets \frac{\log(i+1)(12/\df{\Delta_i})^2}{\tol^4}, \eta \gets \sqrt{\log(i+1)/T}$.
    \While{$|c_\ell - c_h| > \tol^2/2$} \label{line:mw_outer_while}
    \State $c \gets \frac{c_h - c_\ell}{2}$
    \State $\mw^{(1)} \gets \one \in \RR^{i+1}$.     \Comment Use MW to check feasibility of Program~\eqref{eq:max_cost_feasibility}.
    \For{$t = 1,\ldots,T$}
    \State Use the regression oracle to find
    \begin{align}
      g_t \gets \argmin_{g \in \Gcal} \mw^{(t)}_0(g(x)-1)^2 + \sum_{j=1}^i \mw^{(t)}_j \Rhat_j(g;y)
      \label{eq:mw_separation_problem}
    \end{align}
    \State If the objective in~\eqref{eq:mw_separation_problem} for $g_t$ is at least
    $\mw^{(t)}_0c + \sum_{j=1}^i \mw^{(t)}_j\tilde{\Delta}_j$, $c_\ell \gets c$, go to~\ref{line:mw_outer_while}. \label{line:mw_infeasible}
    \State Update
    \begin{align*}
      \mw^{(t+1)}_j \gets \mw^{(t)}_j \left( 1-\eta\frac{\tilde{\Delta}_j - \Rhat_j(g_t;y)}{\Delta_j+1} \right), \quad \mw_0^{(t+1)} \gets \mw_0^{(t)} \left(1-\eta\frac{c-(g_t(x)-1)^2}{2}\right).
    \end{align*}
    \EndFor
    \State $c_h \gets c$.
      \EndWhile
      \State Return $\cmaxhat{y} = 1 - \sqrt{c_\ell}$.
  \end{algorithmic}
\end{algorithm}

In this section, we describe the $\maxcost$ subroutine which uses the
regression oracle to approximate the maximum cost on label $y$ realized by
$\Gcal_i(y)$, as defined in~\eqref{eq:max_cost}. The minimum cost
computation requires only minor modifications that we discuss at the
end of the section.

Describing the algorithm requires some additional notation.  Let
$\tilde{\Delta}_j \triangleq \Delta_j + \Rhat_j(g_{j,y};y)$ be the
right hand side of the constraint defining the version space at round
$j$, where $g_{j,y}$ is the ERM at round $j$ for label $y$,
$\Rhat_j(\cdot;y)$ is the risk functional, and $\Delta_j$ is the
radius used in \alg. Note that this quantity can be efficiently
computed since $g_{j,y}$ can be found with a single oracle call.  Due
to the requirement that $g \in \Gcal_{i-1}(y)$ in the definition of
$\Gcal_i(y)$, an equivalent representation is $\Gcal_i(y) =
\bigcap_{j=1}^{i}\{g : \Rhat_j(g;y) \le \tilde{\Delta}_j\}$. Our
approach is based on the observation that given an example $x$ and a
label $y$ at round $i$, finding a function $g \in \Gcal_i(y)$ which
predicts the maximum cost for the label $y$ on $x$ is equivalent to
solving the minimization problem:
\begin{align}
\label{eq:max_cost_opt}
\textrm{minimize}_{g \in \Gcal} (g(x) - 1)^2 \quad \textrm{ such that
} \quad \forall 1 \le j \le i, \Rhat_j(g;y) \le \tilde{\Delta}_j.
\end{align}
Given this observation, our strategy will be to find an approximate
solution to the problem~\eqref{eq:max_cost_opt} and it is not
difficult to see that this also yields an approximate value for the
maximum predicted cost on $x$ for the label $y$.

In Algorithm~\ref{alg:mw_max_cost}, we show how to efficiently solve
this program using the regression oracle. We begin by exploiting the
convexity of the set $\Gcal$, meaning that we can further rewrite the
optimization problem~\eqref{eq:max_cost_opt} as
\begin{align}
\label{eq:max_cost_conv}
\textrm{minimize}_{P \in \Delta(\Gcal)} \EE_{g \sim P}\left[(g(x) -
  1)^2\right] \quad \textrm{ such that } \quad \forall 1 \le j \le i,
\EE_{g \sim P}\left[\Rhat_j(g;y)\right] \le \tilde{\Delta}_j.
\end{align}
The above rewriting is effectively cosmetic as $\Gcal = \Delta(\Gcal)$
by the definition of convexity, but the upshot is that our rewriting
results in both the objective and constraints being linear in the
optimization variable $P$. Thus, we effectively wish to solve a linear
program in $P$, with our computational tool being a regression oracle
over the set $\Gcal$. To do this, we create a series of feasibility
problems, where we repeatedly guess the optimal objective value for
the problem~\eqref{eq:max_cost_conv} and then check whether there is
indeed a distribution $P$ which satisfies all the constraints and
gives the posited objective value. That is, we check
\begin{align}
\label{eq:max_cost_feasibility}
?\exists P \in \Delta(\Gcal) \textrm{ such that } \EE_{g \sim
  P}(g(x)-1)^2 \le c \textrm{ and } \forall 1 \le j \le i, \EE_{g \sim
  P}\Rhat_j(g;y) \le \tilde{\Delta}_j.
\end{align}
If we find such a solution, we increase our guess, and otherwise we reduce the
guess and proceed until we localize the optimal value to a small enough
interval.

It remains to specify how to solve the feasibility
problem~\eqref{eq:max_cost_feasibility}. Noting that this is a linear
feasibility problem, we jointly invoke the Multiplicative Weights (MW)
algorithm and the regression oracle in order to either find an
approximately feasible solution or certify the problem as
infeasible. MW is an iterative algorithm that maintains weights $\mw$
over the constraints. At each iteration it (1) collapses the
constraints into one, by taking a linear combination weighted by
$\mw$, (2) checks feasibility of the simpler problem with a single
constraint, and (3) if the simpler problem is feasible, it updates the
weights using the slack of the proposed solution. Details of steps (1)
and (3) are described in Algorithm~\ref{alg:mw_max_cost}.

For step (2), the simpler problem that we must solve takes the form
\begin{align*}
?\exists P \in \Delta(\Gcal) \textrm{ such that } \mw_0 \EE_{g \sim
    P}(g(x)-1)^2 + \sum_{j=1}^i \mw_j \EE_{g \sim P}\Rhat_j(g;y) \le
  \mw_0c + \sum_{j=1}^i \mw_j\tilde{\Delta}_j.
\end{align*}
This program can be solved by a single call to the regression oracle,
since all terms on the left-hand-side involve square losses while the
right hand side is a constant. Thus we can efficiently implement the
MW algorithm using the regression oracle. Finally, recalling that the
above description is for a fixed value of objective $c$, and recalling
that the maximum can be approximated by a binary search over $c$ leads
to an oracle-based algorithm for computing the maximum cost. For this
procedure, we have the following computational guarantee.

\begin{theorem}
\label{thm:mw_guarantee}
Algorithm~\ref{alg:mw_max_cost} returns an estimate $\cmaxhat{x;y}$
such that $\cmax{x;y} \le \cmaxhat{x;y} \le \cmax{x;y} + \tol$ and
runs in polynomial time with $\order(\max\{1, i^2/\logfactor^2\}
\log(i)\log(1/\tol)/\tol^4)$ calls to the regression oracle.
\end{theorem}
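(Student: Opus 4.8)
The plan is to reduce the theorem to an analysis of a binary search whose feasibility tests are powered by the Multiplicative Weights (MW) / regression-oracle loop, and to separately account for correctness (the two-sided guarantee on $\cmaxhat{y}$) and for the oracle-call budget. Writing $c^\star \triangleq \min_{g \in \Gcali{y}{i}} (g(x)-1)^2$, the identity $\cmax{x;y} = \max_{g \in \Gcali{y}{i}} g(x) = 1 - \sqrt{c^\star}$ (valid since $g(x) \in [0,1]$) means it suffices to localize $c^\star$ to an interval of width $\order(\tol^2)$ and return $1 - \sqrt{c_\ell}$; because $\sqrt{a} - \sqrt{b} \le \sqrt{a-b}$ for $a \ge b \ge 0$, a width-$\tol^2$ bracket of $c^\star$ translates into an additive-$\tol$ error on the square-root scale. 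I would therefore maintain throughout the outer loop the invariant that $c_\ell \le c^\star$ and that $c_h$ does not fall more than $\order(\tol^2)$ below $c^\star$; since the loop exits with $c_h - c_\ell \le \tol^2/2$, this pins $c_\ell \in [c^\star - \order(\tol^2), c^\star]$ and yields both inequalities of the theorem at once.

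First I would establish the lower half, $\cmax{x;y} \le \cmaxhat{y}$, which is exact and needs no approximation analysis. The only way $c_\ell$ is raised is when the oracle solution $g_t$ of the collapsed program \eqref{eq:mw_separation_problem} has weighted objective at least $\mw_0^{(t)} c + \sum_{j} \mw_j^{(t)} \tilde\Delta_j$ (line~\ref{line:mw_infeasible}). Since $g_t$ exactly minimizes the weighted square-loss objective, this means every $g \in \Gcal$ — in particular every $g \in \Gcali{y}{i}$ — satisfies $\mw_0^{(t)}(g(x)-1)^2 + \sum_j \mw_j^{(t)} \Rhat_j(g;y) \ge \mw_0^{(t)} c + \sum_j \mw_j^{(t)} \tilde\Delta_j$. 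For $g \in \Gcali{y}{i}$ we have $\Rhat_j(g;y) \le \tilde\Delta_j$ for all $j$, so the constraint terms cancel and we are left with $(g(x)-1)^2 \ge c$ for all feasible $g$, i.e.\ $c^\star \ge c$. Hence each update $c_\ell \gets c$ preserves $c_\ell \le c^\star$, and therefore $\cmaxhat{y} = 1 - \sqrt{c_\ell} \ge 1 - \sqrt{c^\star} = \cmax{x;y}$.

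Next I would treat the upper half, which is where the MW regret analysis enters. Whenever the inner loop runs all $T$ rounds without certifying infeasibility, each $g_t$ is a feasible response to the aggregated constraint, so $\langle \mw^{(t)}, \text{slack}(g_t)\rangle \ge 0$ at every round. Feeding the normalized per-constraint slacks $\tfrac{\tilde\Delta_j - \Rhat_j(g_t;y)}{\Delta_j + 1}$ and $\tfrac{c - (g_t(x)-1)^2}{2}$ — which I would first check lie in $[-1,1]$, using $\Rhat_j(\cdot;y) \in [0,1]$, $0 \le \Rhat_j(g_{j,y};y) \le \Rhat_j(g_t;y)$, and $\Delta_j \le \regconst$ — into the standard Hedge regret bound with $\eta = \sqrt{\log(i+1)/T}$ shows that the uniform mixture $\bar P = \tfrac1T \sum_t \delta_{g_t}$ violates every constraint by at most $\epsilon_{\mathrm{MW}} = \order(\sqrt{\log(i+1)/T})$ on the normalized scale. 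By convexity of $\Gcal$ and of the square loss, Jensen lets me collapse $\bar P$ to its mean $\bar g = \EE_{\bar P} g \in \Gcal$, which satisfies $(\bar g(x)-1)^2 \le c + \order(\epsilon_{\mathrm{MW}})$ and $\Rhat_j(\bar g;y) \le \tilde\Delta_j + \order(\epsilon_{\mathrm{MW}})$ for every $j$. The delicate point — and the main obstacle — is that $\bar g$ is feasible only for a \emph{relaxed} version space, so I must bound the sensitivity of the optimal value $c^\star$ to relaxing the $i$ constraints by $\order(\epsilon_{\mathrm{MW}})$ each. I would control this via the slack $\Delta_j$ built into $\tilde\Delta_j = \Delta_j + \Rhat_j(g_{j,y};y)$: this margin bounds the relevant Lagrange multipliers by $\order(1/\Delta)$, so the value changes by at most $\order(i\,\epsilon_{\mathrm{MW}}/\Delta)$ when the constraints are tightened back, giving $c^\star \le c + \order(i\,\epsilon_{\mathrm{MW}}/\Delta)$. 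Choosing $T$ as in the algorithm so that $i\,\epsilon_{\mathrm{MW}}/\Delta = \order(\tol^2)$ makes every update $c_h \gets c$ preserve $c_h \ge c^\star - \order(\tol^2)$, closing the invariant.

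Finally I would assemble the oracle-call bound. The outer binary search on $[0,1]$ halts once the bracket has width $\le \tol^2/2$, which takes $\order(\log(1/\tol))$ iterations; each iteration runs at most $T$ MW rounds, and each MW round issues a single call to the regression oracle to solve \eqref{eq:mw_separation_problem} (the one-time $\order(i)$ calls to precompute the minimizers $g_{j,y}$ are lower order). Multiplying the binary-search depth by $T$ and substituting the value of $T$ forced by the sensitivity calculation above yields the claimed $\order\!\big(\max\{1,i^2/\Delta^2\}\log(i)\log(1/\tol)/\tol^4\big)$ oracle calls, and since each call is a convex squared-loss minimization the overall runtime is polynomial. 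I expect the sensitivity/robustness step of the previous paragraph — converting the normalized MW feasibility error into an error in the objective value $c^\star$, and exhibiting the $1/\Delta$ multiplier bound — to be the crux; the regret bound, the convexity reduction, and the square-root rescaling are all routine once that translation is in hand.
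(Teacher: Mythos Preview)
Your overall architecture is right and matches the paper's: binary search on the objective value, MW to test feasibility at each candidate $c$, infeasibility certificates giving the exact lower bound, and an approximate-feasibility-to-approximate-optimality translation for the upper bound. The lower-bound argument and the oracle-call accounting are both correct as you state them.

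The gap is in the step you yourself flag as the crux. You write that ``the slack $\Delta_j$ built into $\tilde\Delta_j = \Delta_j + \Rhat_j(g_{j,y};y)$'' bounds the Lagrange multipliers. But that slack is the slack of the \emph{round-$j$ ERM} $g_{j,y}$ in constraint $j$ alone; it says nothing about a single point satisfying all $i$ constraints with margin simultaneously, which is what any Lagrange-multiplier or perturbation bound requires. Without such a Slater point the multipliers are not controlled, and the passage from approximate feasibility of $\bar g$ to approximate optimality of $c$ does not go through. The paper fills exactly this hole by using the Bayes regressor $g^\star = f^\star(\cdot;y)$: under the high-probability deviation event (their Theorem on concentration of $M_j$, specifically the consequence $\Rhat_j(g^\star;y) \le \Rhat_j(g_{j,y};y) + \Delta_j/\regconst$), $g^\star$ satisfies every constraint with slack $\Delta_j(1-1/\regconst)$. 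They then make the sensitivity argument completely explicit by \emph{mixing} $\bar g$ with $g^\star$ in proportion $(1-\zeta):\zeta$ with $\zeta = \order\big(\sqrt{\log(i+1)/T}/\Delta_i\big)$, producing a point $g_\zeta$ that is exactly feasible and whose objective exceeds $c_h$ by $\order(\zeta)$. This mixing is the primal form of your dual (Lagrange) argument, and either route works --- but both need the same Slater point, and you have not identified one. Once you supply $g^\star$ (and acknowledge that the upper inequality therefore holds on the deviation event rather than deterministically), your proposal becomes complete and essentially equivalent to the paper's proof.
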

The minimum cost can be estimated in exactly the same way, replacing
the objective $(g(x)-1)^2$ with $(g(x)-0)^2$ in
Program~\eqref{eq:max_cost_opt}.  In \alg, we set $\tol = 1/\sqrt{i}$
at iteration $i$ and have $\logfactor = \otil(d)$. As a
consequence, we can bound the total oracle complexity after processing
$n$ examples.
\begin{corollary}
\label{cor:oracle_bound}
After processing $n$ examples, \alg makes $\otil(K(d^3 + n^5/d^2))$ calls
to the square loss oracle.
\end{corollary}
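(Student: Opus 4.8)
The plan is to apply the per-call guarantee of Theorem~\ref{thm:mw_guarantee} to every invocation of \maxcost and \mincost made by \alg, substitute the parameter settings from Algorithm~\ref{alg:a_cs}, and sum the resulting oracle counts over the $n$ rounds. First I would record the relevant settings: on round $i$, \alg invokes \maxcost and \mincost once for each of the $K$ labels, so there are $2K$ subroutine calls per round; each uses tolerance $\tol = \psi_i/4 = \Theta(1/\sqrt{i})$, and the confidence width $\Delta$ entering the complexity bound is of order $\logfactor = \otil(d)$ and is \emph{independent} of $i$. (The per-round version-space radius $\Delta_i$ shrinks like $\otil(d/i)$, but it is the fixed quantity $\logfactor$ that governs the oracle count, and I would flag this distinction carefully.) By Theorem~\ref{thm:mw_guarantee}, a single such call costs
\begin{align*}
\order\left(\max\{1, i^2/\Delta^2\}\,\frac{\log(i)\log(1/\tol)}{\tol^4}\right) = \otil\left(\max\{1, i^2/d^2\}\cdot i^2\right) = \otil\left(\max\{i^2, i^4/d^2\}\right),
\end{align*}
using $1/\tol^4 = \Theta(i^2)$ and $\log(1/\tol) = O(\log i)$.

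Next I would carry out the summation, splitting at the crossover $i = d$ where $i^2$ and $i^4/d^2$ coincide. For $i \le d$ the maximum equals $i^2$, and for $i > d$ it equals $i^4/d^2$, so the total over all $2K$ subroutine calls across all $n$ rounds is
\begin{align*}
\otil\left(K\sum_{i=1}^n \max\{i^2, i^4/d^2\}\right) = \otil\left(K\Big(\sum_{i\le d} i^2 + \frac{1}{d^2}\sum_{d<i\le n} i^4\Big)\right) = \otil\left(K\big(d^3 + n^5/d^2\big)\right),
\end{align*}
where I use $\sum_{i\le d} i^2 = \Theta(d^3)$ and $\sum_{i\le n} i^4 = \Theta(n^5)$. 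This already yields the claimed bound.

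Finally I would confirm that the remaining oracle calls are lower order and hence absorbed. The empirical risk minimizers $g_{i,y}$ computed in \alg cost one oracle call per label per round, totaling $\otil(Kn)$; the precomputation of the $g_{j,y}$ inside each \maxcost/\mincost call (for $j \le i$) costs $\order(i)$ per call, totaling $\otil(Kn^2)$ across all rounds even without caching. Both are dominated, since $d^3 + n^5/d^2 \ge 2\sqrt{d\,n^5} \ge n^{5/2} \ge n^2$ by AM--GM. The main obstacle is not any single hard estimate but getting the bookkeeping exactly right: correctly identifying the $\Delta$ of Theorem~\ref{thm:mw_guarantee} as the $i$-independent quantity $\logfactor = \otil(d)$ rather than the shrinking radius $\Delta_i$, and locating the crossover at $i = d$ so that the two partial sums produce precisely the $d^3$ and $n^5/d^2$ terms. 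An incorrect reading of $\Delta$ changes the exponents entirely, so I would make the substitution and the regime split explicit.
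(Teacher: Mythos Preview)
Your proposal is correct and follows exactly the derivation the paper intends: substitute the round-$i$ settings $\tol = \Theta(1/\sqrt{i})$ and $\Delta = \logfactor = \otil(d)$ into Theorem~\ref{thm:mw_guarantee}, multiply by the $2K$ subroutine calls per round, split the sum at $i=d$, and verify that the remaining ERM-type oracle calls are lower order. Your careful remark that $\Delta$ here is the $i$-independent quantity $\logfactor$ rather than the shrinking radius $\Delta_i$ is exactly the subtlety the paper glosses over in stating ``$\Delta = \otil(d)$'' after the theorem, and resolving it the way you do is what produces the $d^3 + n^5/d^2$ shape.
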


Thus \alg can be implemented in polynomial time for any set $\Gcal$
that admits efficient square loss optimization. Compared
to~\citet{krishnamurthy2017active} which required $\order(n^2)$ oracle
calls, the guarantee here is, at face value, worse, since the
algorithm is slower. However, the algorithm enforces a much stronger
constraint on the version space which leads to a much better
statistical analysis, as we will discuss next. Nevertheless, these
algorithms that use batch square loss optimization in an iterative or
sequential fashion are too computational demanding to scale to larger
problems. Our implementation alleviates this with an alternative
heuristic approximation based on a sensitivity analysis of the oracle,
which we detail in Section~\ref{sec:experiments}.

\section{Generalization Analysis}
In this section, we derive generalization guarantees for \alg. We
study three settings: one with minimal assumptions and two low-noise
settings.

Our first low-noise assumption is related to the Massart noise
condition~\citep{massart2006risk}, which in binary classification
posits that the Bayes optimal predictor is bounded away from $1/2$ for
all $x$.  Our condition generalizes this to CSMC and posits that the
expected cost of the best label is separated from the expected cost of
all other labels.
\begin{assum}
  \label{as:low_noise}
  A distribution $\Dcal$ supported over $(x,c)$ pairs satisfies the
  \emph{Massart noise condition} with parameter $\tau > 0$, if for all
  $x$ (with $\Dcal(x) > 0$),
  \begin{align*}
    \fstar{x}{y^\star(x)} \le \min_{y \ne y^\star(x)} \fstar{x}{y} - \tau,
  \end{align*}
  where $y^\star(x) \triangleq \mathop{\textrm{argmin}}_{y}\fstar{x}{y}$ is the
  true best label for $x$.
\end{assum}

The Massart noise condition describes favorable prediction problems
that lead to sharper generalization and label complexity bounds for
\alg. We also study a milder noise assumption, inspired by the
Tsybakov condition~\citep{mammen1999smooth,tsybakov2004optimal},
again generalized to CSMC. See also~\citet{agarwal2013selective}.
\begin{assum}
  \label{as:tsybakov}
  A distribution $\Dcal$ supported over $(x,c)$ pairs satisfies the \emph{Tsbyakov noise condition} with parameters $(\tau_0,\alpha,\beta)$ if for all $0 \le \tau \le \tau_0$,
  \begin{align*}
    \PP_{x \sim \Dcal}\left[\min_{y \ne y^\star(x)} \fstar{x}{y} - \fstar{x}{y^\star(x)} \le \tau \right]\le \beta \tau^\alpha,
  \end{align*}
  where $y^\star(x) \triangleq \mathop{\textrm{argmin}}_y\fstar{x}{y}$.
\end{assum}
Observe that the Massart noise condition in
Assumption~\ref{as:low_noise} is a limiting case of the Tsybakov
condition, with $\tau = \tau_0$ and $\alpha \rightarrow \infty$.
The Tsybakov condition states that it is polynomially unlikely for the
cost of the best label to be close to the cost of the other
labels. This condition has been used in previous work on
cost-sensitive active learning~\citep{agarwal2013selective} and is
also related to the condition studied by \citet{Castro2008} with the
translation that $\alpha = \frac{1}{\kappa-1}$, where $\kappa \in
[0,1]$ is their noise level.

Our generalization bound is stated in terms of the noise level in the
problem so that they can be readily adapted to the favorable
assumptions. We define the noise level using the following quantity,
given any $\zeta > 0$.
\begin{align}
  \label{eq:noise_level}
  P_\zeta \triangleq \PP_{x \sim \Dcal}\left[\min_{y \ne y^\star(x)}
    \vf^\star(x;y) - \vf^\star(x;y^\star(x)) \le \zeta\right].
\end{align}
$P_\zeta$ describes the probability that the expected cost of the best
label is close to the expected cost of the second best label. When
$P_\zeta$ is small for large $\zeta$ the labels are well-separated so
learning is easier. For instance, under a Massart condition $P_\zeta =
0$ for all $\zeta \leq \tau$.

We now state our generalization guarantee.
\begin{theorem}
  For any $\delta < 1/e$, for all $i \in [n]$, with probability at least $1-\delta$,
  we have
  \begin{align*}
    & \EE_{x,c}[c(h_{f_{i+1}}(x)) - c(h_{\vf^\star}(x))] \leq
    \min_{\zeta > 0}\left\{ \zeta P_\zeta + \frac{32 K
      \logfactor}{\zeta i} \right\},
  \end{align*}
  where $\logfactor$, $f_i$ are defined in Algorithm~\ref{alg:a_cs},
  and $h_{f_i}$ is defined in~\eqref{eq:reg_to_class}.
  \label{thm:reg_bound}
\end{theorem}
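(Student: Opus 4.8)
The plan is to control the pointwise regret $\rho(x) \triangleq \fstar{x}{h_{f_{i+1}}(x)} - \fstar{x}{y^\star(x)} \ge 0$ and then take expectations, where I write $\hat y \triangleq h_{f_{i+1}}(x)$. The backbone is a structural lemma, established from realizability (Assumption~\ref{assumption:realizability}) together with the concentration bounds that dictate the choice $\Delta_i = \regconst\,\logfactor/(i-1)$: with probability at least $1-\delta$, simultaneously for all rounds $j \le i+1$ and labels $y$, both $\vf^\star(\cdot;y)$ and the empirical risk minimizer $g_{j,y}$ lie in $\Gcali{y}{j}$, so that the predicted cost $g_{j,y}(x)$ and the truth $\fstar{x}{y}$ both sit inside $[\cmin{x,\Gcali{y}{j}},\, \cmax{x,\Gcali{y}{j}}]$. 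I condition on this event throughout; the large early radii $\Delta_j$ give enough slack for the full ERM to satisfy all the nested constraints. Whenever the algorithm errs ($\hat y \ne y^\star(x)$), the argmin inequality $g_{i+1,\hat y}(x) \le g_{i+1,y^\star}(x)$ combined with the sandwiching yields the key estimate $\rho(x) \le \gamma(x,\Gcali{\hat y}{i+1}) + \gamma(x,\Gcali{y^\star}{i+1}) =: S(x)$: a large cost regret forces a large cost range on at least one of the two relevant labels.

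Next I would split on a free threshold $\zeta > 0$. For the low-regret part, note that on any error $\rho(x) \ge \min_{y \ne y^\star(x)}\fstar{x}{y} - \fstar{x}{y^\star(x)}$, so $0 < \rho(x) \le \zeta$ forces $x$ into the event defining $P_\zeta$; hence $\EE_x[\rho(x)\ind{\rho(x)\le\zeta}] \le \zeta P_\zeta$, the first term of the bound. For the high-regret part, using $\rho \le S$ and the elementary $t\,\ind{t > \zeta} \le t^2/\zeta$,
\begin{align*}
\EE_x[\rho(x)\ind{\rho(x) > \zeta}] \;\le\; \frac{1}{\zeta}\,\EE_x\!\left[S(x)^2\,\ind{\hat y \ne y^\star(x)}\right] \;\le\; \frac{2}{\zeta}\,\EE_x\!\left[\left(\gamma(x,\Gcali{\hat y}{i+1})^2 + \gamma(x,\Gcali{y^\star}{i+1})^2\right)\ind{\hat y \ne y^\star(x)}\right].
\end{align*}

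Here the query rule does the work. On any error both $\hat y$ and $y^\star(x)$ are non-dominated and $|Y'| \ge 2$: the label $y^\star(x)$ because $\cmin{x,\Gcali{y^\star}{i+1}} \le \fstar{x}{y^\star(x)} \le \min_y \cmax{x,\Gcali{y}{i+1}}$, and $\hat y$ by the argmin property together with the sandwiching; by Theorem~\ref{thm:mw_guarantee} the computed estimates bracket these true quantities, so membership in $Y'$ (Line~\ref{line:domination}) is preserved. Consequently Line~\ref{line:range} is active for each such label, so either $Q_{i+1}(y) = 1$ or the estimated range is at most $\psi_{i+1}$, which (again since the estimates only widen the true range) forces $\gamma(x,\Gcali{y}{i+1}) \le \psi_{i+1}$. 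Thus for $y \in \{\hat y, y^\star(x)\}$ we have $\gamma(x,\Gcali{y}{i+1})^2\,\ind{\hat y \ne y^\star(x)} \le \gamma(x,\Gcali{y}{i+1})^2\, Q_{i+1}(y) + \psi_{i+1}^2$; bounding each $x$-dependent label by the nonnegative sum over all labels and recalling $\psi_{i+1}^2 = 1/(i+1)$ contributes only a lower-order term, the problem reduces to the single per-label inequality
\begin{align*}
\EE_x\!\left[\gamma(x,\Gcali{y}{i+1})^2\, Q_{i+1}(y)\right] \;\le\; \order\!\left(\frac{\logfactor}{i}\right).
\end{align*}

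The hard part is exactly this last inequality, and it is where the novel uniform Freedman-type inequality for bounded-pseudo-dimension classes is needed. Two obstacles must be cleared. First, since $\gamma(x,\Gcali{y}{i+1})^2 \le 4\sup_{g \in \Gcali{y}{i+1}}(g(x) - \fstar{x}{y})^2$, I must control an \emph{expected supremum} over the version space weighted by $Q_{i+1}(y)$, not just the deviation of a fixed $g$; I would handle this by covering $\Gcal$ using its pseudo-dimension $d$ (the source of the $d\log n$ term inside $\logfactor$) and invoking a variance-sensitive Freedman bound whose variance proxy is the excess risk itself, the self-bounding that converts the slow $\sqrt{\logfactor/i}$ rate into the fast $\logfactor/i$ rate. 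Second, the population cost range is measured under the round-$(i+1)$ query rule on a fresh example, whereas the version space is carved out by empirical risks accumulated under the earlier, evolving query rules; I would reconcile these through the martingale structure — $Q_j(y)$ is fixed before $c_j(y)$ is revealed, so the cross term in the square-loss excess vanishes in conditional expectation and $\vf^\star$ remains feasible at every round — together with the shrinkage of the querying region as the version space contracts. Granting this inequality, summing the $K$ labels contributes the factor $K\logfactor$; combining the two regret parts and minimizing over $\zeta$ yields $\min_{\zeta>0}\{\zeta P_\zeta + 32K\logfactor/(\zeta i)\}$, with the constant $32$ obtained by tracking $\regconst = 3$ through the covering and Freedman constants.
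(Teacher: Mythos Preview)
Your decomposition into $\zeta P_\zeta$ plus a squared-error term is the right skeleton, and the argmin-plus-sandwiching inequality $\rho(x)\le S(x)$ is fine. But the step you flag as ``the hard part'' hides two genuine gaps that the paper's proof handles very differently.

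First, by passing through the cost range $\gamma(x,\Gcal_{i+1}(y))$ you have placed a supremum over $g\in\Gcal_{i+1}(y)$ \emph{inside} the expectation. The uniform Freedman inequality (Theorem~\ref{thm:deviation}) only delivers $\sup_g\sum_j\EE_jM_j(g;y)\le O(\logfactor)$, i.e.\ sup outside; covering $\Gcal$ via pseudo-dimension gives a union bound, not an interchange of sup and expectation. The paper avoids this entirely by bounding $\rho(x)$ not by cost ranges but by the \emph{specific} ERM error, $|f_{i+1}(x;\hat y)-\vf^\star(x;\hat y)|+|f_{i+1}(x;y^\star)-\vf^\star(x;y^\star)|$ (this is Case~2 of Lemma~\ref{lem:refined_generalization_bd}), which matches $\EE_jM_j(f_{i+1};y)$ with no sup. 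Second, and more fundamentally, the Freedman bound controls \emph{sums} $\sum_{j=1}^i\EE_jM_j$, not a single round; your target inequality is a round-$(i{+}1)$ statement and does not follow, since the query functions $Q_j$ are not monotone in $j$ (both the version space and the threshold $\psi_j$ shrink). The paper's fix is to first prove, via a contradiction argument that uses the two-sided Freedman bound on an arbitrary subinterval $\sum_{j=t+1}^iM_j$, that $f_{i+1}\in\vF_j$ for \emph{every} $j\le i$; it then applies Lemma~\ref{lem:refined_generalization_bd} at each round $j=1,\ldots,i$ and averages. The averaging is precisely what converts $\sum_j\EE_jM_j(f_{i+1};y)\le 2\logfactor$ (from~\eqref{eq:expectation_bd} and $\sum_jM_j(f_{i+1};y)\le0$ for the ERM) into the $O(\logfactor/i)$ rate. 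Your sentence ``large early radii give enough slack for the full ERM to satisfy all the nested constraints'' is exactly this step, but it is not a slack argument; without the subinterval contradiction you cannot place $f_{i+1}$ in the earlier version spaces, and without that you cannot average over rounds.
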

In the worst case, we bound $P_\zeta$ by $1$ and optimize for $\zeta$
to obtain an $\otil(\sqrt{Kd\log(1/\delta)/i})$ bound after $i$
samples, where recall that $d$ is the pseudo-dimension of
$\Gcal$. This agrees with the standard generalization bound of
$O(\sqrt{\pdim(\Fcal)\log(1/\delta)/i})$ for VC-type classes because
$\Fcal = \Gcal^K$ has $O(Kd)$ statistical complexity.  However, since
the bound captures the difficulty of the CSMC problem as measured by
$P_\zeta$, we can obtain sharper results under
Assumptions~\ref{as:low_noise} and~\ref{as:tsybakov} by appropriately
setting $\zeta$.
\begin{corollary}
  \label{cor:reg_low_noise}
  Under Assumption~\ref{as:low_noise}, for any $\delta < 1/e$, with
  probability at least $1-\delta$, for all $i \in [n]$, we have
    \begin{align*}
      \EE_{x,c}[c(h_{f_{i+1}}(x)) - c(h_{\vf^\star}(x))] \leq \frac{32 K\logfactor}{i\tau}.
    \end{align*}
\end{corollary}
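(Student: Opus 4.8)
The plan is to obtain this corollary as an immediate specialization of Theorem~\ref{thm:reg_bound}: the Massart condition removes precisely the ``boundary mass'' that the quantity $P_\zeta$ measures, so the only real work is to make the correct choice of $\zeta$ in the minimization. Concretely, Theorem~\ref{thm:reg_bound} already guarantees that, with probability at least $1-\delta$ and simultaneously for all $i\in[n]$,
\[
\EE_{x,c}[c(h_{f_{i+1}}(x)) - c(h_{\vf^\star}(x))] \le \min_{\zeta > 0}\left\{ \zeta P_\zeta + \frac{32 K \logfactor}{\zeta i} \right\},
\]
so it suffices to upper bound the right-hand side by a single well-chosen evaluation of the bracketed expression.

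First I would translate Assumption~\ref{as:low_noise} into a statement about $P_\zeta$. The Massart condition asserts that for every $x$ in the support (with $\Dcal(x)>0$) the separation gap $\min_{y \ne y^\star(x)} \fstar{x}{y} - \fstar{x}{y^\star(x)}$ is at least $\tau$. Comparing this directly with the definition~\eqref{eq:noise_level} of $P_\zeta$, the event whose probability $P_\zeta$ records is that this same gap is at most $\zeta$; hence for every $\zeta < \tau$ the event is empty on the support and $P_\zeta = 0$, which is exactly the remark recorded after~\eqref{eq:noise_level}. This is the only nontrivial ingredient, and it is a one-line consequence of the two definitions.

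Given $P_\zeta = 0$ for $\zeta$ up to $\tau$, the proof concludes by sending $\zeta \to \tau$ in the minimization: for each $\zeta < \tau$ the first term $\zeta P_\zeta$ vanishes, leaving the bound $32 K \logfactor / (\zeta i)$, and letting $\zeta \uparrow \tau$ (using continuity of $32 K \logfactor/(\zeta i)$ in $\zeta$) yields the claimed $32 K \logfactor / (i\tau)$. I do not expect a genuine obstacle here; the subtle point, and the reason I would take a limit rather than simply substitute $\zeta = \tau$, is the equality case. At $\zeta = \tau$ the event reduces to $\{\text{gap} = \tau\}$, which need not carry zero mass under the non-strict form of Assumption~\ref{as:low_noise}, so $P_\tau$ is not literally guaranteed to be $0$; the limiting argument over $\zeta < \tau$ sidesteps this cleanly and produces the identical constant. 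An alternative, equally valid, is to invoke a strict reading of the Massart inequality to conclude $P_\tau = 0$ outright and substitute $\zeta = \tau$ directly.
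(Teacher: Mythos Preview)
Your proposal is correct and follows essentially the same approach as the paper, which simply sets $\zeta = \tau$ in Theorem~\ref{thm:reg_bound} so that $P_\zeta = 0$. Your limiting argument $\zeta \uparrow \tau$ is in fact more careful than the paper's one-line proof, since it handles the possibility that the gap equals $\tau$ with positive probability under the non-strict inequality in Assumption~\ref{as:low_noise}.
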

\begin{corollary}
\label{cor:reg_tsybakov}
Under Assumption~\ref{as:tsybakov}, for any $\delta < 1/e$, with
probability at least $1-\delta$, for all
$\frac{32\df{K}\logfactor}{\beta\tau_0^{\alpha+2}} \le i \le n$, we have
\begin{align*}
      \EE_{x,c}[c(h_{f_{i+1}}(x)) - c(h_{\vf^\star}(x))] \leq 2\beta^{\frac{1}{\alpha+2}}\left(\frac{32\df{K}\logfactor}{i}\right)^{\frac{\alpha+1}{\alpha+2}}.
\end{align*}
\end{corollary}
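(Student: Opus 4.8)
The plan is to obtain Corollary~\ref{cor:reg_tsybakov} directly from the distribution-dependent bound of Theorem~\ref{thm:reg_bound} by specializing the noise quantity $P_\zeta$ to the Tsybakov regime and then choosing the free parameter $\zeta$ to balance the two competing terms. Theorem~\ref{thm:reg_bound} already supplies, on a single event of probability at least $1-\delta$ and simultaneously for all $i \in [n]$, the inequality
\[
\EE_{x,c}[c(h_{f_{i+1}}(x)) - c(h_{\vf^\star}(x))] \le \min_{\zeta>0}\Big\{\zeta P_\zeta + \tfrac{32\logfactor}{\zeta i}\Big\},
\]
so the uniformity over $i$ and the high-probability guarantee are inherited verbatim; everything that remains is a deterministic manipulation of the right-hand side.

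First I would translate Assumption~\ref{as:tsybakov} into a tail bound on $P_\zeta$. By~\eqref{eq:noise_level}, $P_\zeta = \Pr_{x}[\min_{y \ne y^\star(x)} \fstar{x}{y} - \fstar{x}{y^\star(x)} \le \zeta]$, which is exactly the probability controlled by the Tsybakov condition, so for every $\zeta$ in the admissible range $0 \le \zeta \le \tau_0$ we have $P_\zeta \le \beta \zeta^\alpha$. Substituting this into the bracketed expression and restricting the minimization to $\zeta \le \tau_0$ yields, for any fixed such $\zeta$,
\[
\EE_{x,c}[c(h_{f_{i+1}}(x)) - c(h_{\vf^\star}(x))] \le \beta\,\zeta^{\alpha+1} + \frac{32\logfactor}{\zeta i}.
\]

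Next I would balance the two terms rather than take the exact stationary point. Setting $\zeta = \big(\tfrac{32\logfactor}{\beta i}\big)^{1/(\alpha+2)}$ makes each term equal to $\beta^{1/(\alpha+2)}\big(\tfrac{32\logfactor}{i}\big)^{(\alpha+1)/(\alpha+2)}$, so their sum is precisely the claimed bound $2\beta^{1/(\alpha+2)}\big(\tfrac{32\logfactor}{i}\big)^{(\alpha+1)/(\alpha+2)}$. This choice is cleaner than the true minimizer (which would carry an extra factor of $\alpha+1$) and loses only a constant, so it is the convenient value to report.

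The one substantive point to verify—and the only place the lower bound on $i$ enters—is that this $\zeta$ lies in the range $[0,\tau_0]$ where the tail bound $P_\zeta \le \beta\zeta^\alpha$ is licensed. Requiring $\big(\tfrac{32\logfactor}{\beta i}\big)^{1/(\alpha+2)} \le \tau_0$ is equivalent to $i \ge \tfrac{32\logfactor}{\beta\tau_0^{\alpha+2}}$, which is exactly the hypothesis of the corollary. Thus the main (and essentially only) obstacle is bookkeeping this feasibility constraint: for smaller $i$ the balancing $\zeta$ would exceed $\tau_0$, the substitution would fail, and one would instead clamp $\zeta$ at $\tau_0$ and recover a weaker Massart-type $O(1/i)$ rate governed by the boundary behaviour. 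Everything else is a one-line substitution and an elementary optimization.
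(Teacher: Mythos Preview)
Your proposal is correct and mirrors the paper's own proof exactly: invoke Theorem~\ref{thm:reg_bound}, bound $P_\zeta \le \beta\zeta^\alpha$ via Assumption~\ref{as:tsybakov}, choose $\zeta = (32\logfactor/(\beta i))^{1/(\alpha+2)}$ to equate the two terms, and use the lower bound on $i$ to ensure $\zeta \le \tau_0$. The paper's proof is the single sentence ``set $\zeta = \min\{\tau_0, (32\logfactor/(i\beta))^{1/(\alpha+2)}\}$, so that for $i$ sufficiently large the second term is selected and we obtain the bound,'' which is precisely what you have spelled out (note that both you and the stated corollary drop the factor $K$ present in Theorem~\ref{thm:reg_bound}; this appears to be a typo in the corollary rather than a flaw in your argument).
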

Thus, Massart and Tsybakov-type conditions lead to a faster
convergence rate of $\otil(1/n)$ and
$\otil(n^{-\frac{\alpha+1}{\alpha+2}})$. This agrees with the
literature on active learning for
classification~\citep{massart2006risk} and can be viewed as a
generalization to CSMC. Both generalization bounds match the optimal
rates for binary classification under the analogous low-noise
assumptions~\citep{massart2006risk,tsybakov2004optimal}. We emphasize
that \alg obtains these bounds as is, without changing any parameters,
and hence \alg is \emph{adaptive} to favorable noise conditions.


\section{Label Complexity Analysis}
Without distributional assumptions, the label complexity of \alg can
be $\order(n)$, just as in the binary classification case, since there
may always be confusing labels that force querying. In line with prior
work, we introduce two \emph{disagreement coefficients} that
characterize favorable distributional properties. We first define a set of good classifiers, the \emph{cost-sensitive regret} ball:
\begin{align*}
  \vFcsr(r) &\triangleq \left\{\vf \in \Fcal \;\Big |\; \EE\left[c(h_{\vf}(x)) -
    c(h_{\vf^\star}(x))\right] \le r \right\}.
\end{align*}
We also recall our earlier notation $\gamma(x,y,F)$ (see~\eqref{eq:max_cost} and the subsequent discussion) for a subset $F \subseteq \Fcal$ which indicates the range of expected costs for $(x,y)$ as predicted by the regressors corresponding to the classifiers in $F$. We now define the disagreement coefficients.
\begin{definition}[Disagreement coefficients]
  \label{def:dis_coeff}
  Define
  \begin{align*}
    \dis(r,y) &\triangleq \left\{x \mid \exists f,f' \in \vFcsr(r), h_f(x) =y\ne h_{f'}(x)\right\}.
  \end{align*}
  Then the disagreement coefficients are defined as:
  \begin{align*}
    \theta_1 &\triangleq \sup_{ \psi, r > 0} \frac{\psi}{r}
    \PP\left(\exists y \mid \gamma(x,y,\vFcsr(r)) > \psi \wedge x \in \dis(r,y)\right),\\
    \theta_2 &\triangleq \sup_{ \psi , r > 0} \frac{\psi}{r}
    \sum_y \PP\left(\gamma(x,y,\vFcsr(r)) > \psi \wedge x \in \dis(r,y)\right).
  \end{align*}
\end{definition}

Intuitively, the conditions in both coefficients correspond to the
checks on the \emph{domination} and \emph{cost range} of a label in
Lines~\ref{line:domination} and~\ref{line:range} of
Algorithm~\ref{alg:a_cs}. Specifically, when $x \in \dis(r,y)$, there
is confusion about whether $y$ is the optimal label or not, and hence
$y$ is not dominated. The condition on $\gamma(x,y,\vFcsr(r))$
additionally captures the fact that a small cost range provides little
information, even when $y$ is non-dominated. Collectively, the
coefficients capture the probability of an example $x$ where the good
classifiers disagree on $x$ in both predicted costs and
labels. Importantly, the notion of good classifiers is via the
algorithm-independent set $\vFcsr(r)$, and is only a property of
$\Fcal$ and the data distribution.

The definitions are a natural adaptation from binary
classification~\citep{Hanneke2014}, where a similar disagreement
region to $\dis(r,y)$ is used.  Our definition asks for confusion
about the optimality of a specific label $y$, which provides more
detailed information about the cost-structure than simply asking for
any confusion among the good classifiers. The $1/r$ scaling \df{is in agreement with previous related definitions~\citep{Hanneke2014}}, and we also
scale by the cost range parameter $\psi$, so that the favorable
settings for active learning can be concisely expressed as having
$\theta_1,\theta_2$ bounded, as opposed to a complex function of
$\psi$.

The next three results bound the labeling effort~\eqref{eq:label_effort}, in the high noise and low noise cases
respectively. The low noise assumptions enable significantly
sharper bounds. Before stating the bounds, we recall that $L_1$ corresponds to the number of examples where at least one cost is queried, while $L_2$ is the total number of costs queried across all examples.
\begin{theorem}
  \label{thm:high_noise}
  With probability at least $1-\delta$, the label complexity of the
  algorithm over $n$ examples is at most
\ifthenelse{\equal{\version}{arxiv}}{
  \begin{align*}
    L_1 = \order\left(n\theta_1\sqrt{K\logfactor}+ \log(1/\delta)\right), \qquad
    L_2 = \order\left(n\theta_2\sqrt{K\logfactor} + K\log(1/\delta)\right).
  \end{align*}
}{
  \begin{align*}
    L_1 &= \order\left(n\theta_1\sqrt{K\logfactor}+ \log(1/\delta)\right),\\
    L_2 &= \order\left(n\theta_2\sqrt{K\logfactor} + K\log(1/\delta)\right).
  \end{align*}
}
\end{theorem}

\begin{theorem}
  \label{thm:massart}
  Assume the Massart noise condition holds.  With probability at least
  $1-\delta$ the label complexity of the algorithm over $n$ examples
  is at most
  \begin{align*}
    L_1 & = \order\left(\frac{K\log(n)\logfactor}{\tau^2}\theta_1 + \log(1/\delta)\right), \df{\qquad L_2 = \order\left(KL_1\right)} 
  \end{align*}
\end{theorem}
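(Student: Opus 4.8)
The plan is to establish a per-round query probability and then sum over the $n$ rounds, following the same template as Theorem~\ref{thm:high_noise} but sharpening two steps using the Massart margin $\tau$. Throughout I condition on the high-probability event, inherited from the generalization analysis, that $\fstar{\cdot}{y}\in\Gcal_i(y)$ for all $i,y$ and that every surviving regressor has small population square-loss regret. Concretely, I would first prove a \emph{containment lemma}: with probability $1-\delta$, $\Gcal_i(y)\subseteq G_{\vFcsr(r_i)}(y)$ for all $i,y$, where under the Massart condition $r_i=\order(K\logfactor/(\tau i))$. This combines (a) the uniform Freedman-type deviation bound applied to the adaptively query-weighted risk $\Rhat_i(\cdot;y)$, which certifies $\EE[(g-\fstar{\cdot}{y})^2]\lesssim Kd/i$ for every $g\in\Gcal_i(y)$, with (b) a self-bounding transfer to cost-sensitive regret: replacing the $y$-th coordinate of $\vf^\star$ by such a $g$ can only flip the predicted label on examples that cost at least $\tau$, so Cauchy--Schwarz gives $\mathrm{regret}\le\sqrt{\EE[(g-\fstar{\cdot}{y})^2]}\sqrt{\mathrm{regret}/\tau}$, i.e.\ $\mathrm{regret}\lesssim \epsilon_i/\tau$ with $\epsilon_i\approx K\logfactor/i$. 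This is exactly the rate behind Corollary~\ref{cor:reg_low_noise}, now extended uniformly over the version space. Because $\vFcsr(r_i)$ is deterministic, the containment makes a fresh $x_i$ independent of the comparison sets.

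The key Massart-specific step is to show that issuing \emph{any} query on a fresh $x_i$ forces a cost range of at least $\tau/2$. Calibrating the \maxcost/\mincost tolerances to $\psi_i/4$ yields $\cmaxhat{y}\in[\cmax{x,\Gcal_i(y)},\cmax{x,\Gcal_i(y)}+\psi_i/4]$ and the symmetric bound for \mincost. If a suboptimal label $y\ne y^\star(x)$ survives the domination test of Line~\ref{line:domination}, then $\cminhat{y}\le\cmaxhat{y^\star}$, and using $\fstar{\cdot}{y}\in\Gcal_i(y)$ to sandwich the endpoints gives $\fstar{x}{y}-\fstar{x}{y^\star(x)}\le \gamma(x,\Gcal_i(y))+\gamma(x,\Gcal_i(y^\star))+\psi_i/2$. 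The Massart separation $\fstar{x}{y}-\fstar{x}{y^\star(x)}\ge\tau$ then forces $\gamma(x,\Gcal_i(y))+\gamma(x,\Gcal_i(y^\star))\ge\tau-\psi_i/2\ge\tau/2$ once $\psi_i\le\tau$ (the $\order(1/\tau^2)\le\order(K\logfactor/\tau^2)$ earlier rounds are absorbed into the main term). Since $|Y'|>1$ always furnishes such a confused suboptimal label, and the containment lifts $\gamma(x,\Gcal_i(\cdot))$ to $\gamma(x,\cdot,\vFcsr(r_i))$ with the corresponding $x\in\dis(r_i,\cdot)$, the event ``\alg queries anything on $x_i$'' sits inside $\{\exists y':\gamma(x,y',\vFcsr(r_i))>\tau/2\wedge x\in\dis(r_i,y')\}$, whose probability is at most $\theta_1\cdot\frac{r_i}{\tau/2}$ by Definition~\ref{def:dis_coeff}. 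The effective threshold is thus the \emph{constant} $\tau/2$ rather than the shrinking $\psi_i$ of Theorem~\ref{thm:high_noise}, which is what turns the worst-case rate into a logarithmic one.

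For $L_2$ I would bound $\sum_y\PP(Q_i(y)=1\mid\text{history})$ by a two-case split of each queried label $y$. In Case A, $\gamma(x,\Gcal_i(y))\ge\tau/2$, so $y$ is itself a $\tau/2$-range non-dominated label, and these contribute at most $\sum_y\PP(\gamma(x,y,\vFcsr(r_i))>\tau/2\wedge x\in\dis(r_i,y))\le\theta_2\frac{r_i}{\tau/2}$ through the second coefficient. In Case B, $\gamma(x,\Gcal_i(y))<\tau/2$ yet $y$ survives domination, which by the displayed inequality forces $\gamma(x,\Gcal_i(y^\star))\ge\tau/2$; here one large-range event at $y^\star$ can be charged for up to $K$ simultaneously non-dominated labels, so Case B contributes at most $K\cdot\theta_1\frac{r_i}{\tau/2}$. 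Summing the cases gives a per-round total-query bound of $(\theta_2+K\theta_1)\frac{2r_i}{\tau}$, which is the origin of the $K\theta_1+\theta_2$ factor.

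Finally I would sum and convert expectations into the actual counts. With $r_i=\order(K\logfactor/(\tau i))$ the per-round bounds scale like $\frac{K\logfactor}{\tau^2 i}$ (clipped at $1$ for the first rounds), and $\sum_{i=1}^n\frac1i=\order(\log n)$ produces the $\frac{K\log(n)\logfactor}{\tau^2}$ main term. Since $\Gcal_i(y)$ depends on the history, the indicators $Q_i(\cdot)$ are not independent, so I would control $L_1,L_2$ via a Bernstein/Freedman inequality for the martingale $\sum_i(\mathbf{1}(\cdot)-\PP(\cdot\mid\text{history}))$; this contributes the additive $\log(1/\delta)$ term (respectively $K\log(1/\delta)$ for $L_2$, whose summands range over $K$ labels) and only inflates the main term by a constant. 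A union bound with the $\delta$-probability containment event then yields the stated $1-2\delta$ guarantee. I expect the main obstacle to be the containment lemma under adaptive querying --- making the uniform Freedman-type inequality interact correctly with the query-weighted risk $\Rhat_i$ and chaining the self-bounding regret transfer \emph{uniformly} over the version space --- whereas the Massart margin argument and the $L_2$ case analysis are comparatively mechanical once the containment is in hand.
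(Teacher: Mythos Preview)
Your overall architecture matches the paper's: establish $\vF_i\subset\vFcsr(r_i)$ with $r_i=\order(K\logfactor/(\tau i))$, use the Massart margin to replace the shrinking threshold $\psi_i$ by the constant $\tau/4$, split the $L_2$ count into a ``$y$ itself has large range'' part (controlled by $\theta_2$) and a ``$y^\star$ has large range'' part (charged $K$-to-one, controlled by $K\theta_1$), then apply Freedman and sum the harmonic series. The margin argument, the $L_2$ case split, and the concentration step are all essentially the paper's proof, up to constants (your $\tau/2$ thresholds should be $\tau/4$, since from $\gamma(y)+\gamma(y^\star)\ge\tau/2$ you only get that one of the two is at least $\tau/4$; this does not affect the order).

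The real gap is in your containment lemma, precisely where you flagged the difficulty. Your step (a) asserts that the deviation bound certifies $\EE[(g-\fstar{\cdot}{y})^2]\lesssim Kd/i$ for every $g\in\Gcal_i(y)$, but the Freedman-type inequality only controls the \emph{query-weighted} quantity $\tfrac{1}{i}\sum_j\EE_j[Q_j(y)(g(x_j)-\fstar{x_j}{y})^2]$, not the unweighted population square loss. On the unqueried region the version space imposes no constraint whatsoever, so $\EE[(g-\fstar{\cdot}{y})^2]$ need not be small. Your Cauchy--Schwarz self-bounding step (b) then uses exactly the unweighted expectation you do not have, so the argument as written does not close.

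The paper's resolution is Lemma~\ref{lem:refined_generalization_bd}: it bounds the cost-sensitive regret of any $f\in\vF_i$ by a case analysis on whether $\hat{y}=h_f(x)$ and $y^\star$ were queried at round $i$. When both are queried, the regret is controlled by the query-weighted square loss $\EE_i[M_i(f;y)]$ as you want. When a label is \emph{not} queried, the algorithm's own cost-range test guarantees the deviation from $\vf^\star$ on that label is at most $\psi_i$, which contributes the small residual terms $\one\{\zeta\le 2\psi_i\}2\psi_i$ and $4\psi_i^2/\zeta$. Averaging this bound over rounds $1,\ldots,i-1$ (using monotonicity of the version space so that $f\in\vF_j$ for all $j\le i$) and applying the Freedman bound to the query-weighted part yields Lemma~\ref{lem:version_to_csr}. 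In short, the unqueried region is handled not by square-loss concentration but by the design of the query rule itself; your sketch is missing this ingredient.
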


\begin{theorem}
  \label{thm:tsybakov}
  Assume the Tsybakov noise condition holds.  With probability at least
  $1-\delta$ the label complexity of the algorithm over $n$ examples
  is at most
  \begin{align*}
    L_1 &= \order\left(\theta_1^{\frac{\alpha}{\alpha+1}}(K\logfactor)^{\frac{\alpha}{\alpha+2}}n^{\frac{2}{\alpha+2}} + \log(1/\delta)\right), \df{\qquad L_2 = \order\left(KL_1\right)} 
  \end{align*}
\end{theorem}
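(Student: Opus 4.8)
The plan is to run the label-complexity argument underlying Theorem~\ref{thm:high_noise}, but to sharpen the per-round query probability using the Tsybakov margin and then integrate across rounds with the faster regret rate from Corollary~\ref{cor:reg_tsybakov}. The backbone, inherited from the high-noise proof, is the containment $\Fcal_i \subseteq \vFcsr(r_i)$, where $\Fcal_i = \prod_y \Gcal_i(y)$ is the vector version space at round $i$ and $r_i$ is the cost-sensitive regret radius it certifies. On the high-probability event of the generalization analysis, every $g\in\Gcal_i(y)$ has small population squared-loss regret for label $y$, and the same Tsybakov-to-squared-loss translation that yields Corollary~\ref{cor:reg_tsybakov} converts this into $r_i = \otil\big((K\logfactor/i)^{(\alpha+1)/(\alpha+2)}\big)$. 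Because the \maxcost/\mincost estimates are accurate to $\psi_i/4$ and $\Gcal_i(y)=G_{\vFcsr(r_i)}(y)$, a query on label $y$ at round $i$ forces $\gamma(x,y,\vFcsr(r_i)) > \psi_i/2$ together with $x\in\dis(r_i,y)$ — exactly the events defining the coefficients in Definition~\ref{def:dis_coeff}.

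The new ingredient is that, under a margin, a \emph{suboptimal} label is rarely non-dominated. Write $m(x) \triangleq \min_{y\ne y^\star(x)}\fstar{x}{y}-\fstar{x}{y^\star(x)}$ for the Bayes margin. Since $f^\star(\cdot;y)\in\Gcal_i(y)$, the domination test in Line~\ref{line:domination} gives, for any $y\in Y'$ with $y\ne y^\star(x)$, the inequality $\gamma(x,y,\Gcal_i(y))+\gamma(x,y^\star,\Gcal_i(y^\star)) \ge m(x)-\psi_i/2$; thus at least one of these two cost ranges is $\gtrsim m(x)$, and by the product structure of $\Fcal_i$ both labels lie in their disagreement regions. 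As the algorithm queries nothing unless $|Y'|>1$, the effective querying threshold for the whole example is $\max\{\psi_i,\,c\,m(x)\}$ rather than $\psi_i$, and it is this margin-dependent threshold that lets Assumption~\ref{as:tsybakov} damp the label complexity.

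Concretely, for $L_1$ I would bound the per-round query probability by splitting the margin at a free parameter $\zeta_i>0$:
\begin{align*}
\PP(\text{a query occurs at round } i)\ \le\ \PP\big(m(x)\le\zeta_i\big) + \PP\big(\exists\, y:\ \gamma(x,y,\vFcsr(r_i))>\zeta_i \wedge x\in\dis(r_i,y)\big).
\end{align*}
The first term is $\le\beta\zeta_i^\alpha$ by Assumption~\ref{as:tsybakov}, and the second is $\le\theta_1 r_i/\zeta_i$ by Definition~\ref{def:dis_coeff}. Balancing at $\zeta_i=(\theta_1 r_i/\beta)^{1/(\alpha+1)}$ gives a per-round bound of order $\beta^{1/(\alpha+1)}(\theta_1 r_i)^{\alpha/(\alpha+1)}$; summing over $i\in[n]$ with $r_i^{\alpha/(\alpha+1)}\asymp (K\logfactor)^{\alpha/(\alpha+2)} i^{-\alpha/(\alpha+2)}$ and $\sum_{i\le n} i^{-\alpha/(\alpha+2)}=\order(n^{2/(\alpha+2)})$ yields the claimed $L_1$. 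For $L_2$ I would repeat the split but sum the cost-range/disagreement event over $y$, so the second term is controlled by $\theta_2$ instead of $\theta_1$; this accounts for the suboptimal labels. The optimal and near-optimal labels that are swept in whenever $|Y'|>1$ are charged with a crude factor $K$ against the $\theta_1$ (confusable-example) rate, producing the $K\theta_1+\theta_2$ combination. The additive $\log(1/\delta)$ and $K\log(1/\delta)$ terms arise from converting these per-round expectations into high-probability counts via a Freedman-type martingale bound, exactly as in Theorem~\ref{thm:high_noise}.

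The main obstacle is making the opening of the second paragraph rigorous: certifying $\Fcal_i\subseteq\vFcsr(r_i)$ at the sharp Tsybakov radius, uniformly over all rounds. The empirical constraint defining $\Gcal_i(y)$ sees only \emph{queried} examples, so I must argue that (i) on unqueried examples the surviving regressors already agree to within the prevailing cost range, and (ii) on queried examples the uniform Freedman-type inequality for pseudo-dimension-$d$ classes transfers the empirical radius $\Delta_i$ to a population squared-loss radius of the same order; only then does the Tsybakov translation deliver the containment. A secondary difficulty is the bookkeeping for $L_2$: making precise that a suboptimal $y$ needs cost range $\gtrsim m(x)$ to survive domination — so that $\theta_2$ enters at scale $\zeta_i$, not $\psi_i$ — while the at-most-$K$ near-optimal labels are cleanly absorbed into the $K\theta_1$ term.
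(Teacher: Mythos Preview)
Your proposal is correct and follows essentially the same route as the paper: contain the version space in $\vFcsr(r_i)$ with the Tsybakov radius $r_i\asymp (K\logfactor/i)^{(\alpha+1)/(\alpha+2)}$ (the paper's Lemma~\ref{lem:version_to_csr}), use the domination argument of Lemma~\ref{lem:cost_range_translations} to replace the threshold $\psi_i$ by the margin, then split on the event $\{m(x)\le\zeta\}$ and balance the Tsybakov probability against the disagreement-coefficient term. The only cosmetic difference is that you optimize $\zeta_i$ per round whereas the paper picks a single global $\tau$ after summing $\sum_i r_i$; both yield the same final rates, and your two ``obstacles'' are precisely the content of Lemmas~\ref{lem:version_to_csr} and~\ref{lem:cost_range_translations} (note the containment $\Gcal_i(y)\subseteq G_{\vFcsr(r_i)}(y)$ is one-sided, not an equality).
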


In the high-noise case, the bounds scales with $n\theta$ for the
respective coefficients. \df{In comparison, for binary classification
  the leading term is $\otil\rbr{n\theta \textrm{error}(h_{f^\star})}$
  which involves a different disagreement coefficient and which scales
  with the error of the optimal classifier
  $h_{f^\star}$~\citep{Hanneke2014,huang2015efficient}. Qualitatively
  the bounds have similar worst-case behavior, demonstrating minimal
  improvement over passive learning, but by scaling with
  $\textrm{error}(h_{f^\star})$ the binary classification bound
  reflects improvements on benign instances. For the special case of multiclass classification, we
  are able to recover the dependence on $\textrm{error}(h_{f^\star})$ and the standard disagreement coefficient
  with a simple modification to our proof, which we discuss in detail
  in the next subsection.}


On the other hand, in both low noise cases the label complexity scales
sublinearly with $n$\df{. With bounded disagreement coefficients, this
  improves over the standard passive learning analysis where all
  labels are queried on $n$ examples to achieve the generalization
  guarantees in Theorem~\ref{thm:reg_bound},
  Corollary~\ref{cor:reg_low_noise}, and
  Corollary~\ref{cor:reg_tsybakov} respectively.}
In
particular, under the Massart condition, both $L_1$ and $L_2$ bounds
scale with $\theta \log(n)$ for the respective disagreement
coefficients, which is an \emph{exponential improvement} over \df{the }passive
learning \df{ analysis}. Under the milder Tsybakov condition, the bounds scale with
$\theta^{\frac{\alpha}{\alpha+1}}n^{\frac{2}{\alpha+2}}$, which
improves polynomially over passive learning. These label complexity
bounds agree with analogous results from binary
classification~\citep{Castro2008,Hanneke2014,hanneke2015minimax} in
their dependence on $n$. 

Note that $\theta_2 \df{ \leq K\theta_1}$ \df{always and it can be much smaller}, as
demonstrated through an example in the next section. In such cases,
only a few labels are ever queried and the $L_2$ bound in the high
noise case reflects this additional savings over passive
learning. Unfortunately, in low noise conditions, 
\dfc{the $L_2$ bound
depends directly on $K\theta_1$, so that }
we do not benefit when
$\theta_2 \ll K\theta_1$. This can be resolved by letting $\psi_i$ in the algorithm
depend on the noise level $\tau$, but we prefer to use the more robust
choice $\psi_i = 1/\sqrt{i}$ which still allows \alg to partially
adapt to low noise and achieve low label complexity.

The main improvement over~\citet{krishnamurthy2017active} is
demonstrated in the label complexity bounds under low noise
assumptions. For example, under Massart noise, our bound has the
optimal $\log(n)/\tau^2$ rate, while the bound
in~\citet{krishnamurthy2017active} is exponentially worse, scaling
with $n^\beta/\tau^2$ for $\beta \in (0,1)$. This improvement comes
from explicitly enforcing monotonicity of the version space, so that
once a regressor is eliminated it can never force \alg to query
again. Algorithmically, computing the maximum and minimum costs with
the monotonicity constraint is much more challenging and requires the
new subroutine using MW.

\subsection{\df{Recovering Hanneke's Disagreement Coefficient}}
\label{ssec:comparisons}
\df{In this subsection we show that in many cases we can obtain
  guarantees in terms of Hanneke's disagreement
  coefficient~\citep{Hanneke2014}, which has been used extensively in
  active learning for binary classification. We also show that, for
  multiclass classification, the label complexity scales with the
  error of the optimal classifier $h^\star$, a refinement on
  Theorem~\ref{thm:high_noise}. The guarantees require no
  modifications to the algorithm and enable a precise comparison with
  prior results. Unfortunately, they do not apply to the general CSMC
  setting, so they have not been incorporated into our main theorems.}


\df{We start with defining Hanneke's disagreement
  coefficient~\citep{Hanneke2014}. Define the \emph{disagreement ball}
  $\vFmin(r) \triangleq \{f \in \Fcal: \PP[h_f(x) \ne
    h_{f^\star}(x)]\leq r\}$ and the disagreement region $\dismin(r)
  \triangleq \{x \mid \exists f, f' \in \vFmin(r), h_f(x)\ne
  h_{f'}(x)\}$. The coefficient is defined as
\begin{align}
\theta_0 \triangleq \sup_{r > 0} \frac{1}{r} \PP\sbr{x \in \dismin(r)}. \label{eq:bc_dis}
\end{align}
This coefficient is known to be $O(1)$ in many cases, for example when
the hypothesis class consists of linear separators and the marginal
distribution is uniform over the unit sphere~\citep[Chapter
  7]{Hanneke2014}. In comparison with Definition~\ref{def:dis_coeff},
the two differences are that $\theta_1,\theta_2$ include the
cost-range condition and involve the cost-sensitive regret ball
$\vFcsr(r)$ rather than $\vFmin(r)$. As $\vFmin(r) \subset \vFcsr(r)$,
we expect that $\theta_1$ and $\theta_2$ are typically larger than
$\theta_0$, so bounds in terms of $\theta_0$ are more desirable. We
now show that such guarantees are possible in many cases.}

\paragraph{\df{The low noise case.}}
\df{For general CSMC, low noise conditions admit the following:
\begin{proposition}
\label{prop:bc_disagreement}
Under Massart noise, with probability at least $1-\delta$ the label
complexity of the algorithm over $n$ examples is at most $L_1 =
\order\rbr{\frac{\log(n)\nu_n}{\tau^2}\theta_0 +
  \log(1/\delta)}$. Under Tsybakov noise, the label complexity is at
most $L_1 = \order\rbr{\theta_0 n^{\frac{2}{\alpha+2}} \rbr{\log(n)
    \nu_n}^{\frac{\alpha}{\alpha+2}} + \log(1/\delta)}$. In both cases
we have $L_2 = O(KL_1)$.
\end{proposition}}

\df{That is, for any low noise CSMC problem, \alg obtains a label
  complexity bound in terms of Hanneke's disagreement coefficient
  $\theta_0$ directly. Note that this adaptivity requires no change to
  the algorithm. Proposition~\ref{prop:bc_disagreement} enables a
  precise comparison with disagreement-based active learning for
  binary classification. In particular, this bound matches the
  guarantee for \textsc{CAL}~\citep[Theorem 5.4]{Hanneke2014} with the
  caveat that our measure of statistical complexity is the
  pseudodimension of the $\Fcal$ instead of the VC-dimension of the
  hypothesis class. As a consequence, under low noise assumptions,
  \alg has favorable label complexity in all examples where $\theta_0$
  is small. }

\paragraph{\df{The high noise case.}}
\df{Outside of the low noise setting, we can introduce $\theta_0$ into
  our bounds, but only for multiclass classification, where we always
  have $c \triangleq \one - e_y$ for some $y \in [K]$. Note that $f(x; y)$ is now interpreted as a prediction for $1 - P(y | x)$, so that the least cost prediction $y^\star(x)$ corresponds to the most likely label.  We also obtain
  a further refinement by introducing $\textrm{error}(h_{f^\star})
  \triangleq \EE_{(x,c)} [c(h_{f^\star}(x))]$.
\begin{proposition}
\label{prop:bc_dis_high_noise}
For multiclass classification, with probability at least $1-\delta$,
the label complexity of the algorithm over $n$ examples is at most
\begin{align*}
L_1 = 4\theta_0 n \cdot \textrm{error}(h_{f^\star}) + \order\rbr{ \theta_0 \rbr{\sqrt{Kn\logfactor \cdot \textrm{error}(h_{f^\star})} + K\regconst\logfactor\log(n)} + \log(1/\delta)}.
\end{align*}
\end{proposition}}

\df{This result exploits two properties of the multiclass cost
  structure. First we can relate $\vFcsr(r)$ to the disagreement ball
  $\vFmin(r)$, which lets us introduce Hanneke's disagreement
  coefficient $\theta_0$. Second, we can bound $P_\zeta$ in
  Theorem~\ref{thm:reg_bound} in terms of
  $\textrm{error}(h_{f^\star})$. Together the bound is comparable to
  prior results for active learning in binary
  classification~\citep{hsu2010algorithms,hanneke2012surrogate,Hanneke2014}, with a
  slight generalization to the multiclass setting. Unfortunately, both
  of these refinements do not apply for general CSMC.}

\df{\paragraph{Summary.} In important special cases, \alg achieves
  label complexity bounds directly comparable with results for active
  learning in binary classification, scaling with $\theta_0$ and
  $\textrm{error}(h_{f^\star})$. In such cases, whenever $\theta_0$ is
  bounded --- for which many examples are known --- \alg has favorable
  label complexity. However, in general CSMC without low-noise
  assumptions, we are not able to obtain a bound in terms of these
  quantities, and we believe a bound involving $\theta_0$ does not
  hold for \alg. We leave understanding natural settings where
  $\theta_1$ and $\theta_2$ are small, or obtaining sharper guarantees
  as intriguing future directions.}

\subsection{\df{Three} Examples}
\label{sec:examples}
We now describe \df{three} examples to give more intuition for \alg and
our label complexity bounds.  Even in the low noise case, our label
complexity analysis does not demonstrate all of the potential benefits
of our query rule. In this section we give \df{three} examples to
further demonstrate these advantages.

Our first example shows the benefits of using the domination criterion
in querying, in addition to the cost range condition. Consider a
problem under Assumption~\ref{as:low_noise}, where the optimal cost is
predicted perfectly, the second best cost is $\tau$ worse and all the
other costs are substantially worse, but with variability in the
predictions. Since all classifiers predict the correct label, we get
$\theta_1 = \theta_2 = 0$, so our label complexity bound is
$\order(1)$. Intuitively, since every regressor is certain of the
optimal label and its cost, we actually make zero queries. On the
other hand, all of the suboptimal labels have large cost ranges, so
querying based solely on a cost range criteria, as would happen with
an active regression algorithm~\citep{castro2005faster}, leads to a
large label complexity.

A related example demonstrates the improvement in our query rule over
more na\"ive approaches where we query either no label or all labels,
which is the natural generalization of query rules from multiclass
classification~\citep{agarwal2013selective}.  In the above example, if
the best and second best labels are confused occasionally $\theta_1$
may be large, but we expect $\theta_2 \ll K\theta_1$ since no other
label can be confused with the best.  Thus, the $L_2$ bound in
Theorem~\ref{thm:high_noise} is a factor of $K$ smaller than with a
na\"{i}ve query rule since \alg only queries the best and second best
labels. Unfortunately, without setting $\psi_i$ as a function of the
noise parameters, the bounds in the low noise cases do not reflect
this behavior.

\df{The third example shows that both $\theta_0$ and $\theta_1$ yield
  pessimistic bounds on the label complexity of \alg in some
  cases. The example is more involved, so we describe it in detail. We
  focus on statistical issues, using a finite regressor class
  $\Fcal$. Note that our results on generalization and label
  complexity hold in this setting, replacing $d \log (n)$ with $\log
  |\Fcal|$, and the algorithm can be implemented by enumerating
  $\Fcal$. Throughout this example, we use $\tilde{O}(\cdot)$ to further
  suppress logarithmic dependence on $n$.}

\df{Let $\Xcal \triangleq \{x_1,\ldots,x_M\}$, $\Ycal \triangleq \{0,1\}$, and consider
  functions $\Fcal \triangleq \{f^\star, f_1,\ldots,f_M\}$. We have $f^\star(x)
  \triangleq (1/4,1/2), \forall x \in \Xcal$ and $f_i(x_i) \triangleq (1/4,0)$ and
  $f_i(x_j) \triangleq (1/4,1)$ for $i \ne j$. The marginal distribution is
  uniform and the true expected costs are given by $f^\star$ so that
  the problem satisfies the Massart noise condition with
  $\tau=1/4$. The key to the construction is that $f_i$s have high
  square loss on labels that they do not predict.}

\df{Observe that as $\PP[h_{f_i}(x) \ne h_{f^\star}(x)] = 1/M$ and
  $h_{f_i}(x_i) \ne h_{f^\star}(x_i)$ for all $i$, the probability of
  disagreement is $1$ until all $f_i$ are eliminated. As such, we have
  $\theta_0 = M$. Similarly, we have $\EE [c(h_{f_i}(x)) -
  c(h_{f^\star}(x))] = \frac{1}{4M}$ and
  $\gamma(x,1,\vFcsr(\frac{1}{4M})) = 1$, so $\theta_1 =
  4M$. Therefore, the bounds in Theorem~\ref{thm:massart} and
  Proposition~\ref{prop:bc_disagreement} are both
  $\tilde{O}(M\log|\Fcal|) = \tilde{O}(|\Fcal|)$. On the other hand,
  since $(f_i(x_j,1) - f^\star(x_j,1))^2 = 1/4$ for all $i,j \in [M]$,
  \alg eliminates every $f_i$ once it has made a total of
  $\tilde{O}(\log |\Fcal|)$ queries to label $y=1$. Thus the label
  complexity is actually just $\tilde{O}(\log |\Fcal|)$, which is
  exponentially better than the disagreement-based analyses. Thus,
  \alg can perform much better than suggested by the
  disagreement-based analyses, and an interesting future direction is
  to obtain refined guarantees for cost-sensitive active learning. }

\section{Experiments}
\label{sec:experiments}
We now turn to an empirical evaluation of \alg. For further
computational efficiency, we implemented an approximate version of
\alg using\df{: 1) a relaxed version space
$\Gcal_i(y) \gets \{ g \in \Gcal \mid \Rhat_i(g;y) \leq \Rhat_i(g_{i,y};y) + \Delta_{i}\}$, which
does not enforce monotonicity, and 2)} {\em online optimization}, based on online linear
least-squares regression. The algorithm processes the data in one
pass, and the idea is to (1) replace $g_{i,y}$, the ERM, with an
approximation $g_{i,y}^o$ obtained by online updates, and (2) compute
the minimum and maximum costs via a sensitivity analysis of the online
update. We describe this algorithm in detail in
Subsection~\ref{ssec:online_approx}. Then, we present our experimental
results, first for simulated active learning
(Subsection~\ref{ssec:active_experiments}) and then for learning to
search for joint prediction (Subsection~\ref{ssec:l2s_experiments}).

\subsection{Finding Cost Ranges with Online Approximation}
\label{ssec:online_approx}
Consider the maximum and minimum costs for a fixed example $x$ and
label $y$ at round $i$, all of which may be suppressed.
\df{We ignore all the constraints
on the empirical square losses for the past rounds.}
First, define
$\Rhat(g,w,c;y) \triangleq \Rhat(g;y) + w(g(x)-c)^2$, which is the
risk functional augmented with a fake example with weight $w$ and cost
$c$. Also define
\begin{align*}
  \underline{g}_{w} \triangleq \arg \min_{g \in \Gcal} \Rhat(g,w,0;y), \qquad  \overline{g}_{w} \triangleq \arg \min_{g \in \Gcal} \Rhat(g,w,1;y),
\end{align*}
and recall that $g_{i,y}$ is the ERM given in
Algorithm~\ref{alg:a_cs}.  The functional $\Rhat(g,w,c;y)$ has a
monotonicity property that we exploit here, proved in
Appendix~\ref{app:lemmata}.
\begin{lemma}
\label{lemma:monotone}
For any $c$ and for $w' \ge w \ge 0$, define
$g=\argmin_{g}\Rhat(g,w,c)$ and $g' = \argmin_{g}
\Rhat(g,w',c)$. Then
\begin{align*}
\Rhat(g')\ge \Rhat(g) \textrm{ and } (g'(x) -c)^2 \le (g(x) - c)^2.
\end{align*}
\end{lemma}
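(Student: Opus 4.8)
The plan is to prove both inequalities at once from a single variational argument that uses \emph{only} the fact that $g$ and $g'$ are minimizers of their respective objectives; neither convexity nor differentiability is needed, merely that minimizers exist (guaranteed by compactness of $\Gcal$). To streamline notation, write $A(g) \triangleq \Rhat(g)$ for the base risk and $B(g) \triangleq (g(x)-c)^2$ for the augmenting fake-example loss, both nonnegative, so that $g = \argmin_g \{A(g) + wB(g)\}$ and $g' = \argmin_g \{A(g) + w'B(g)\}$.

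First I would record the two optimality inequalities. Minimality of $g$ gives $A(g) + wB(g) \le A(g') + wB(g')$, and minimality of $g'$ gives $A(g') + w'B(g') \le A(g) + w'B(g)$. Adding these and cancelling the common $A(g)+A(g')$ terms collapses everything to $wB(g) + w'B(g') \le wB(g') + w'B(g)$, which rearranges to $(w-w')B(g) \le (w-w')B(g')$. Since $w' \ge w$, the factor $w-w' \le 0$, so dividing through reverses the inequality and yields $B(g') \le B(g)$, which is exactly the claim $(g'(x)-c)^2 \le (g(x)-c)^2$; the degenerate case $w=w'$ is trivial. For the first inequality I would then feed this back into the optimality condition for $g$: from $A(g) + wB(g) \le A(g') + wB(g')$ we get $A(g)-A(g') \le w\bigl(B(g')-B(g)\bigr)$, and since $B(g') \le B(g)$ and $w \ge 0$ the right-hand side is nonpositive, giving $A(g) \le A(g')$, i.e.\ $\Rhat(g) \le \Rhat(g')$.

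There is essentially no hard step: the whole argument is the standard ``add the two optimality inequalities'' trick, and the only subtlety to flag is the sign reversal when dividing by $w-w' \le 0$ (together with the trivial $w=w'$ case). The statement is the quantitative form of an intuitive monotonicity: increasing the weight on the fake example forces the minimizer to fit that example better, lowering $B$, at the unavoidable cost of a larger base risk $A$.
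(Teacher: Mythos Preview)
Your proof is correct and is essentially the same argument as the paper's: both use the two optimality inequalities for $g$ and $g'$, combine them to obtain $(w'-w)\bigl(B(g')-B(g)\bigr)\le 0$, and then feed $B(g')\le B(g)$ back into the optimality of $g$ to conclude $\Rhat(g)\le\Rhat(g')$. The only cosmetic difference is that the paper chains the two inequalities instead of explicitly adding them.
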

As a result, an alternative to \mincost and \maxcost is to find
\begin{align}
  \underline{w} & \triangleq \max \{ w \mid \Rhat(\underline{g}_w)  - \Rhat(g_{i,y}) \leq \Delta_i\},   \label{eq:barw}\\
  \overline{w} & \triangleq \max \{ w \mid \Rhat(\overline{g}_w)  - \Rhat(g_{i,y}) \leq \Delta_i\}, \label{eq:bbarw}
\end{align}
and return $\underline{g}_{\underline{w}}(x)$ and
$\overline{g}_{\overline{w}}(x)$ as the minimum and maximum costs.  We
use two steps of approximation here.  Using the definition of
$\overline{g}_w$ and $\underline{g}_w$ as the minimizers of
$\Rhat(g,w,1;y)$ and $\Rhat(g,w,0;y)$ respectively, we have
\begin{align*}
  \Rhat(\underline{g}_w) - \Rhat(g_{i,y}) \leq w\cdot g_{i,y}(x)^2 - w\cdot \underline{g}_{w}(x)^2, \\
    \Rhat(\overline{g}_w) - \Rhat(g_{i,y}) \leq w\cdot (g_{i,y}(x)-1)^2 - w\cdot (\overline{g}_{w}(x)-1)^2.
\end{align*}
We use this upper bound in place of $\Rhat(g_w) - \Rhat(g_{i,y})$
in~\eqref{eq:barw} and \eqref{eq:bbarw}.  Second, we replace
$g_{i,y}$, $\underline{g}_{w}$, and $\overline{g}_{w}$ with
approximations obtained by online updates.  More specifically, we
replace $g_{i,y}$ with $g_{i,y}^o$, the current regressor produced by
all online linear least squares updates so far, and approximate the
others by
\begin{align*}
  \underline{g}_{w}(x) \approx g_{i,y}^o(x) -  w \cdot s(x,0,g_{i,y}^o), \qquad
  \overline{g}_{w}(x) \approx g_{i,y}^o(x) +  w \cdot s(x,1,g_{i,y}^o),
\end{align*}
where $s(x,y,g_{i,y}^o) \geq 0$ is a {\em sensitivity} value that
approximates the change in prediction on $x$ resulting from an online
update to $g_{i,y}^o$ with features $x$ and label $y$.  The
computation of this sensitivity value is governed by the actual online
update where we compute the derivative of the change in the prediction
as a function of the importance weight $w$ for a hypothetical example
with cost $0$ or cost $1$ and the same features.  This is possible for
essentially all online update rules on importance weighted examples,
and it corresponds to taking the limit as $w \rightarrow 0$ of the
change in prediction due to an update, divided by $w$. Since we are
using linear representations, this requires only $\order(s)$ time per
example, where $s$ is the average number of non-zero features.  With
these two steps, we obtain approximate minimum and maximum costs using
\begin{align*}
  g_{i,y}^o(x) - \underline{w}^o \cdot s(x, 0, g_{i,y}^o), \qquad
  g_{i,y}^o(x) + \overline{w}^o \cdot s(x, 1, g_{i,y}^o),
\end{align*}
where
\begin{align*}
  \underline{w}^o \;\triangleq\; \max \{w \mid w \left( g_{i,y,}^o(x)^2 - (g_{i,y}^o(x) - w \cdot s(x,0,g_{i,y}^o))^2\right) \leq \Delta_i\} \\
  \overline{w}^o \;\triangleq\; \max \{w \mid w \left( (g_{i,y,}^o(x)-1)^2 - (g_{i,y}^o(x) + w \cdot s(x,1,g_{i,y}^o)-1)^2\right) \leq \Delta_i\}.
\end{align*}
The online update guarantees that $g_{i,y}^o(x) \in [0,1]$. Since the
minimum cost is lower bounded by 0, we have $\underline{w}^o\in
\left(0,\frac{g_{i,y}^o(x)}{s(x,0,g_{i,y}^o)}\right]$. Finally, because
  the objective $w(g_{i,y}^o(x))^2 - w(g_{i,y}^o(x) - w \cdot
  s(x,0,g_{i,y}^o))^2$ is increasing in $w$ within this range (which can
    be seen by inspecting the derivative), we can find
    $\underline{w}^o$ with binary search. Using the same techniques,
    we also obtain an approximate maximum cost.

    \dfc{It is worth noting that the approximate cost ranges, without the
    sensitivity trick, are contained in the exact cost ranges because
    we approximate the difference in squared error by an {\em upper
      bound}. Hence, the query rule in this online algorithm should be
    more aggressive than the query rule in Algorithm \ref{alg:a_cs}.}

\subsection{Simulated Active Learning}
  \label{ssec:active_experiments}
  We performed simulated active learning experiments with three
  datasets.  ImageNet 20 and 40 are sub-trees of the ImageNet
  hierarchy covering the 20 and 40 most frequent classes, where each
  example has a single zero-cost label, and the cost for an incorrect
  label is the tree-distance to the correct one.  The feature vectors
  are the top layer of the Inception neural
  network~\citep{Inception15}.  The third,
  RCV1-v2~\citep{lewis2004rcv1}, is a multilabel text-categorization
  dataset, which has 103 labels, organized as a tree with a
  similar tree-distance cost structure as the ImageNet data.  Some
  dataset statistics are in Table~\ref{tab:stats}.

\begin{figure*}[t]
\begin{center}
  \includegraphics[width=0.32\textwidth,height=0.25\textwidth]{./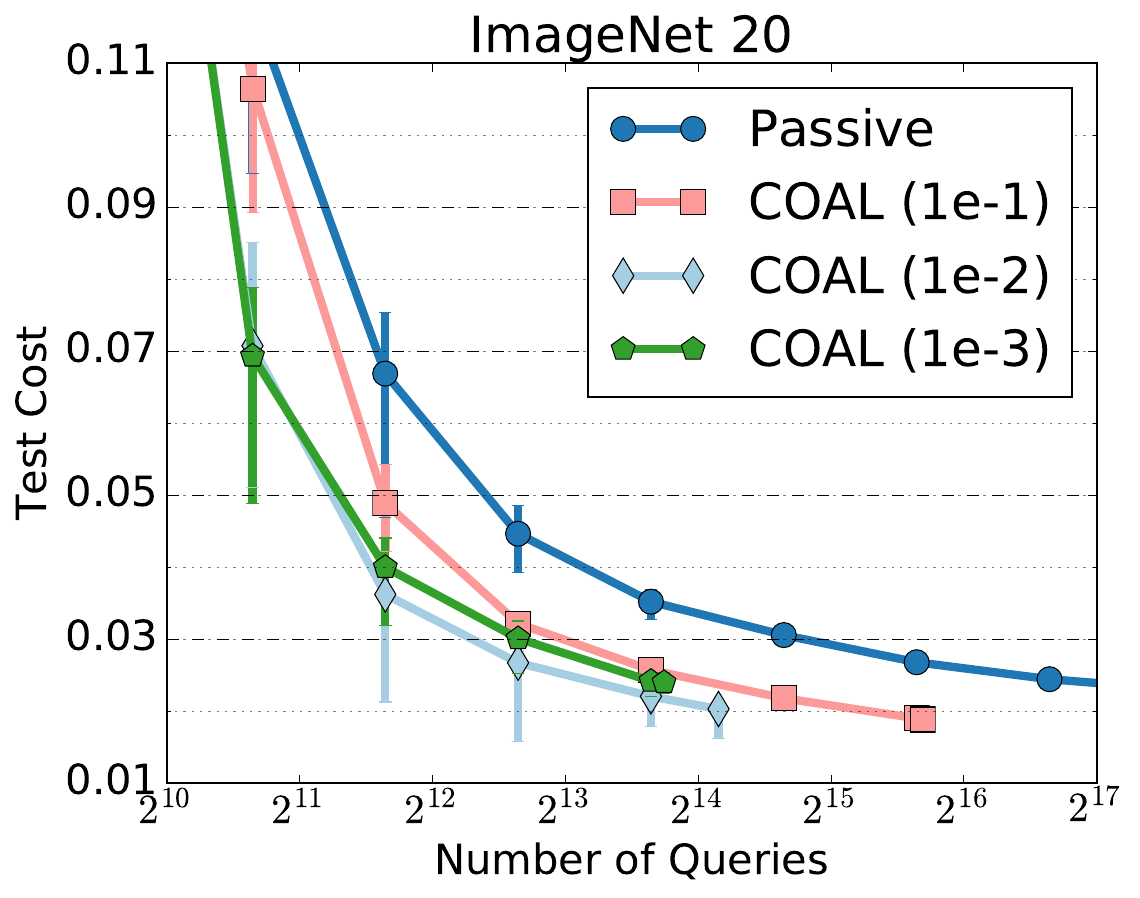}
  \includegraphics[width=0.32\textwidth,height=0.25\textwidth]{./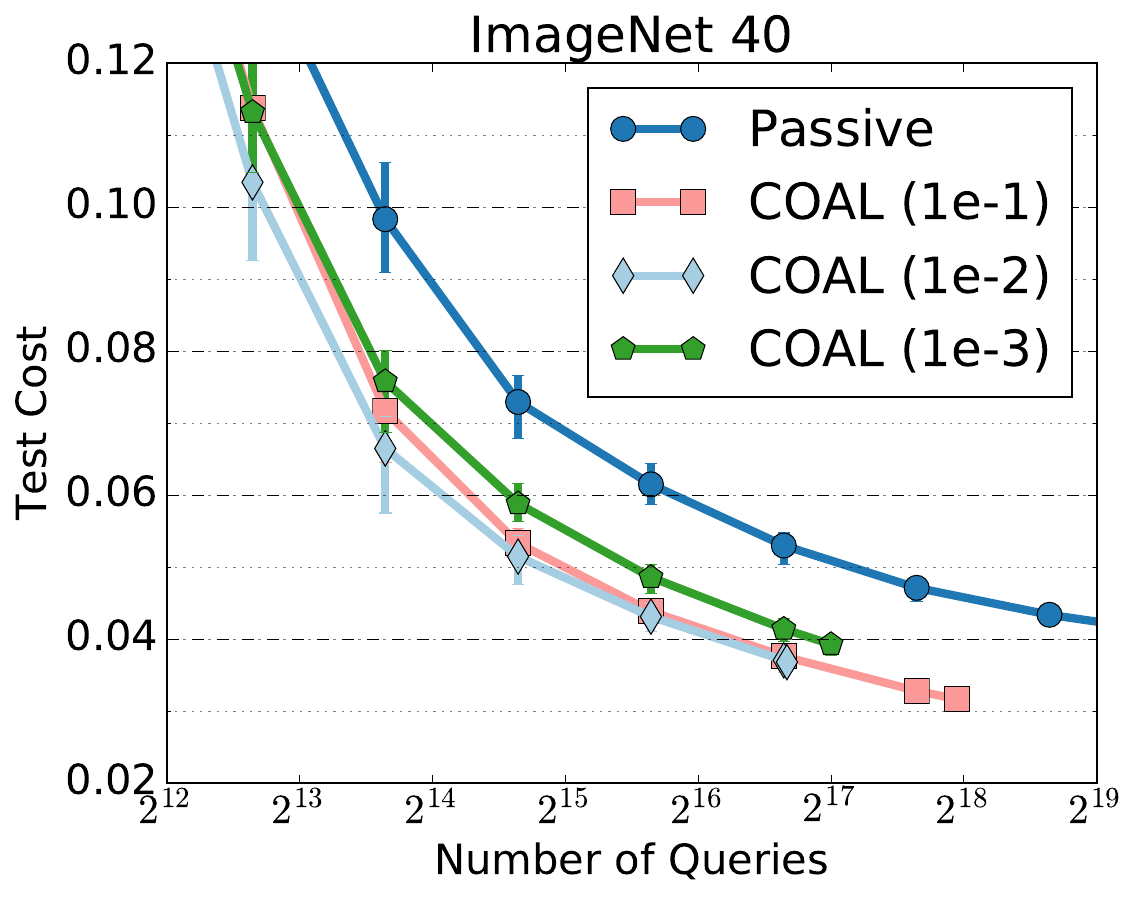}
  \includegraphics[width=0.32\textwidth,height=0.25\textwidth]{./rcv1_error.pdf}
  \\
  \includegraphics[width=0.32\textwidth,height=0.25\textwidth]{./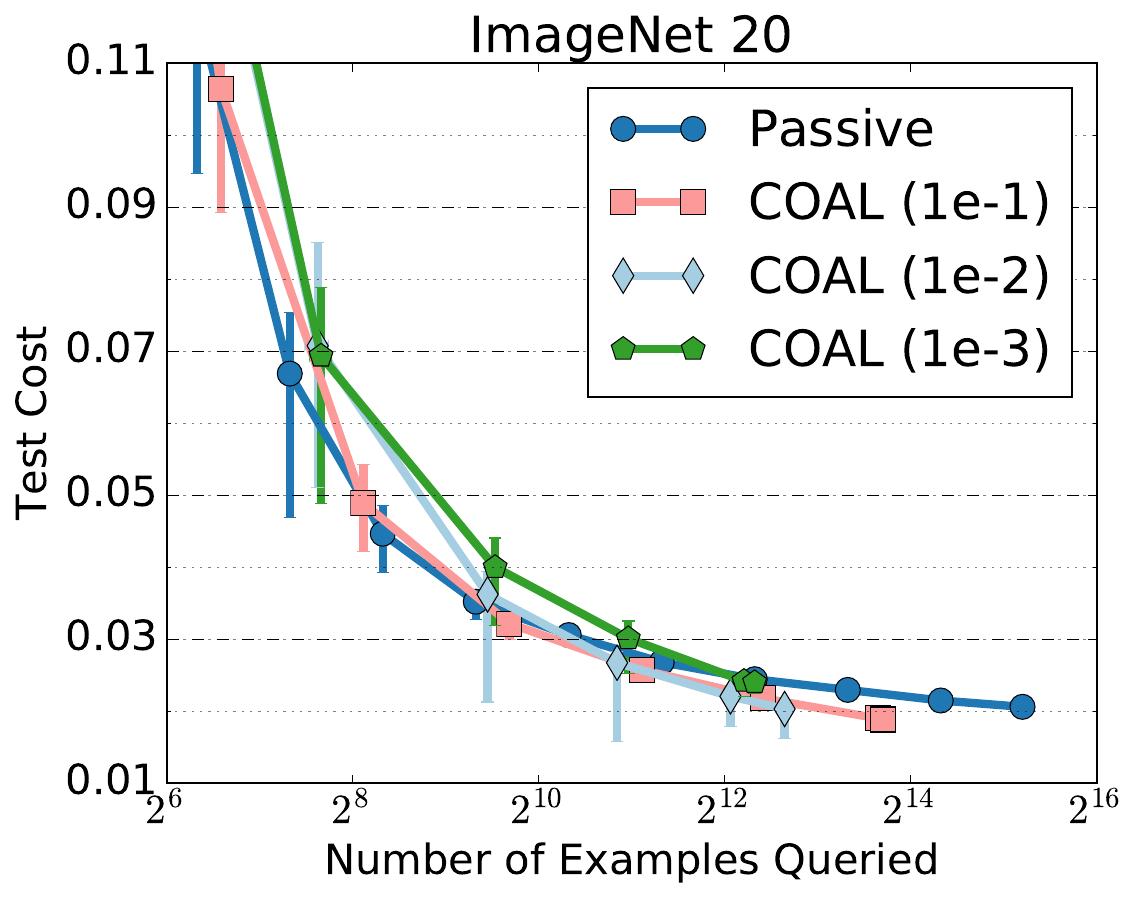}
  \includegraphics[width=0.32\textwidth,height=0.25\textwidth]{./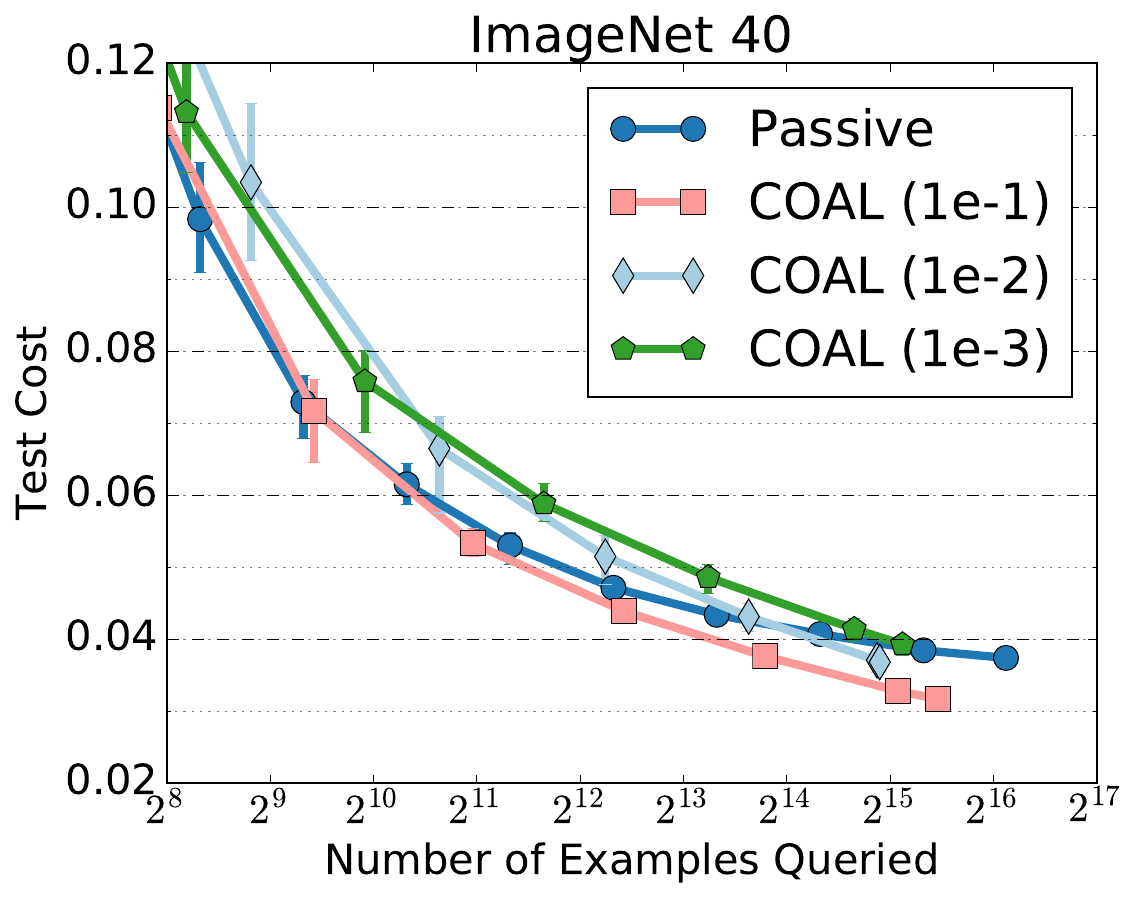}
  \includegraphics[width=0.32\textwidth,height=0.25\textwidth]{./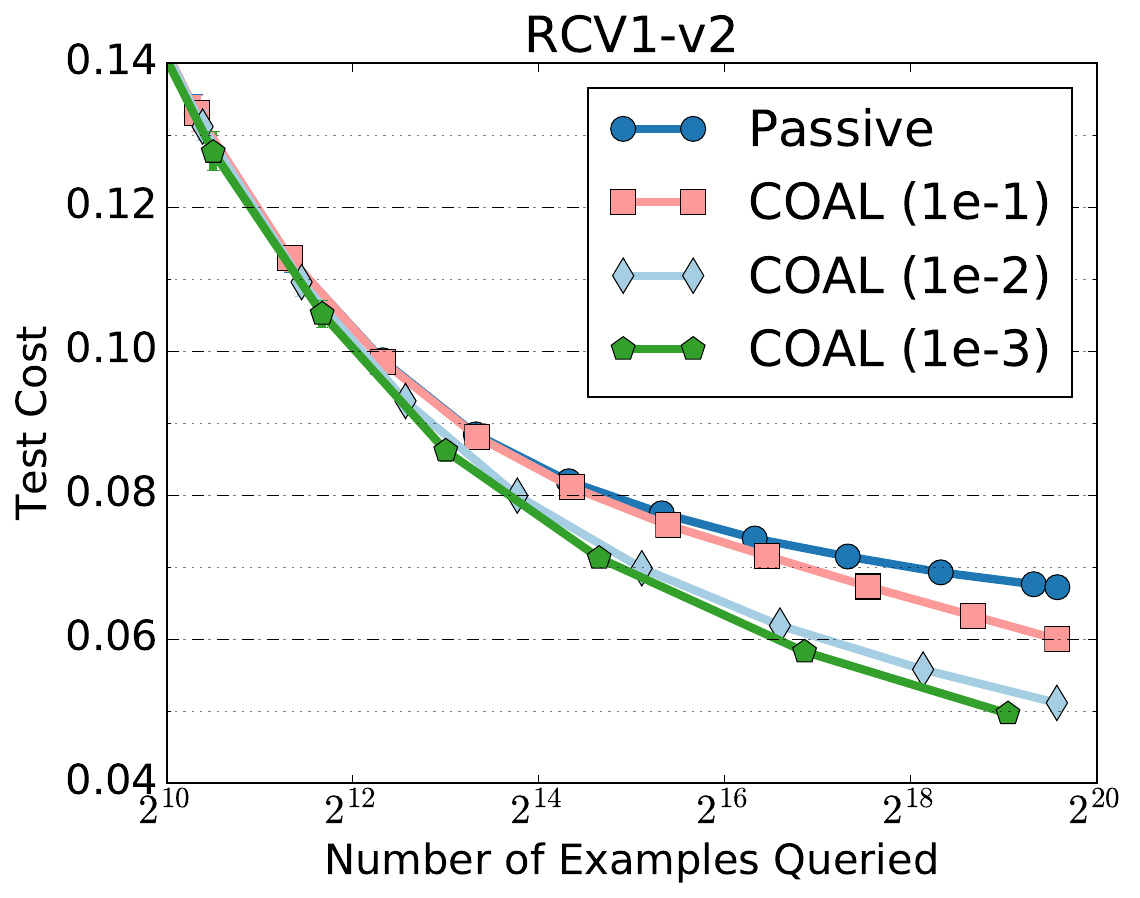}
  \caption{Experiments with \alg. Top row shows test cost vs. number of
    queries for simulated active learning experiments. Bottom row shows
    test cost vs. number of examples with even a single label query for simulated active learning experiments. }
  \label{fig:simulated_results}
\end{center}
\end{figure*}

\begin{table}
    \centering
    \begin{tabular}{l@{~~}r@{~~}r@{~~}r@{~~}r@{~~}} \toprule
      & $K$ & $n$~~ & feat & density \\ \midrule
      ImageNet 20 & 20 & 38k &  6k & 21.1\% \\
      ImageNet 40 & 40 & 71k &  6k & 21.0\% \\
      RCV1-v2     &103 & 781k & 47k & 0.16\% \\
      \bottomrule
    \end{tabular}
    \begin{tabular}{l@{~~}r@{~~}r@{~~}r@{~~}r} \toprule
      & $K$ & $n$~~ & feat & $\length$ \\ \midrule
      POS & 45 & 38k & 40k & 24 \\
      NER & 9 & 15k & 15k & 14 \\
      Wiki & 9 & 132k & 89k & 25 \\
      \bottomrule
    \end{tabular}
    \caption{Dataset statistics. $\length$ is the average sequence length
      and density is the percentage of non-zero features.}
    \label{tab:stats}
  \end{table}

We compare our online version of \alg to passive online learning.  We
use the cost-sensitive one-against-all (\csoaa) implementation in
Vowpal Wabbit\footnote{\url{http://hunch.net/~vw}}, which performs
online linear regression for each label separately.  There are two
tuning parameters in our implementation.  First, instead of
$\Delta_i$, we set the radius of the version space to $\Delta_i' =
\frac{\kappa \logfactor[i-1]}{i-1}$ (i.e. the $\log(n)$ term in the
definition of $\logfactor$ is replaced with $\log(i)$) and instead
tune the constant $\kappa$. This alternate ``mellowness" parameter
controls how aggressive the query strategy is. The second parameter is
the learning rate used by online linear regression\footnote{We use the
  default online learning algorithm in Vowpal Wabbit, which is a
  scale-free~\citep{normalized} importance weight
  invariant~\citep{invariant} form of AdaGrad~\citep{adagrad}.}.

For all experiments, we show the results obtained by the best learning
rate for each mellowness on each dataset, which is tuned as follows.
We randomly permute the training data 100 times and make one pass
through the training set with each parameter setting. For each dataset
let $\perf(\mel,l,q,t)$ denote the test performance of the algorithm
using mellowness $\mel$ and learning rate $l$ on the $t^{\textrm{th}}$
permutation of the training data under a query budget of $2^{(q-1)}
\cdot 10 \cdot K, q \geq 1$.  Let $\query(\mel,l,q,t)$ denote the number
of queries actually made. Note that $\query(\mel,l,q,t) < 2^{(q-1)} \cdot
10 \cdot K$ if the algorithm runs out of the training data before
reaching the $q^{\textrm{th}}$ query budget\footnote{In fact, we check
  the test performance only in between examples, so $\query(\mel,l,q,t)$
  may be larger than $2^{(q-1)}\cdot 10 \cdot K$ by an additive factor
  of $K$, which is negligibly small.}.  To evaluate the trade-off
between test performance and number of queries, we define the
following performance measure:
  \begin{equation}
    \auc(\mel,l,t) = \frac{1}{2} \sum_{q = 1}^{q_{\max}} \Big(\perf(\mel,l,q+1,t) +  \perf(\mel,l,q,t)\Big) \cdot \log_2 \frac{\query(\mel,l,q+1,t)}{\query(\mel,l,q,t)},\label{eq:auc}
  \end{equation}
where $q_{\max}$ is the minimum $q$ such that $2^{(q-1)}\cdot 10$ is
larger than the size of the training data.  This performance measure
is the area under the curve of test performance against number of
queries in $\log_2$ scale.  A large value means the test performance
quickly improves with the number of queries. The best learning rate
for mellowness $\mel$ is then chosen as
  \begin{equation*}
    l^\star(\mel) \triangleq \arg \max_{l} \median_{1 \leq t \leq 100} \quad \auc(\mel,l,t).
  \end{equation*}
  The best learning rates for different datasets and mellowness settings are in Table~\ref{tbl:learning_rates}.

  \begin{table}[t]
    \centering
    \begin{tabular}{l | c | c | c | c | c | c |}
      & ImageNet 20 & ImageNet 40 & RCV1-v2 & POS & NER & NER-wiki\\ \hline
      passive            & 1           & 1           & 0.5     & 1.0 & 0.5 & 0.5 \\ \hline
      active ($10^{-1}$) & 0.05        & 0.1         & 0.5     & 1.0 & 0.1 & 0.5\\ \hline
      active ($10^{-2}$) & 0.05        & 0.5         & 0.5     & 1.0 & 0.5 & 0.5 \\ \hline
      active ($10^{-3}$) & 1           & 10          & 0.5     & 10  & 0.5 & 0.5
    \end{tabular}
    \caption{Best learning rates for each learning algorithm and each dataset.}
    \label{tbl:learning_rates}
  \end{table}

In the top row of Figure~\ref{fig:simulated_results}, we plot, for each
dataset and mellowness, the number of queries against the median test
cost along with bars extending from the $15^{\textrm{th}}$ to
$85^{\textrm{th}}$ quantile.  Overall, \alg achieves a better
trade-off between performance and queries.  With proper mellowness
parameter, active learning achieves similar test cost as passive
learning with a factor of 8 to 32 fewer queries.  On ImageNet 40 and
RCV1-v2 (reproduced in Figure~\ref{fig:rcv1_results}), active learning
achieves \emph{better} test cost with a factor of 16 fewer queries.  On
RCV1-v2, \alg queries like passive up to around $256k$ queries, since
the data is very sparse, and linear regression has the property that
the cost range is maximal when an example has a new unseen feature.
Once \alg sees all features a few times, it queries much more
efficiently than passive. These plots correspond to the label
complexity $L_2$.

In the bottom row, we plot the test error as a function of the number
of examples for which at least one query was requested, for each
dataset and mellowness, which experimentally corresponds to the $L_1$
label complexity. In comparison to the top row, the improvements
offered by active learning are slightly less dramatic here. This
suggests that our algorithm queries just a few labels for each
example, but does end up issuing at least one query on most of the
examples. Nevertheless, one can still achieve test cost competitive
with passive learning using a factor of 2-16 less labeling effort, as
measured by $L_1$.

\begin{figure*}[t]
\begin{center}
  \includegraphics[width=0.32\textwidth,height=0.25\textwidth]{./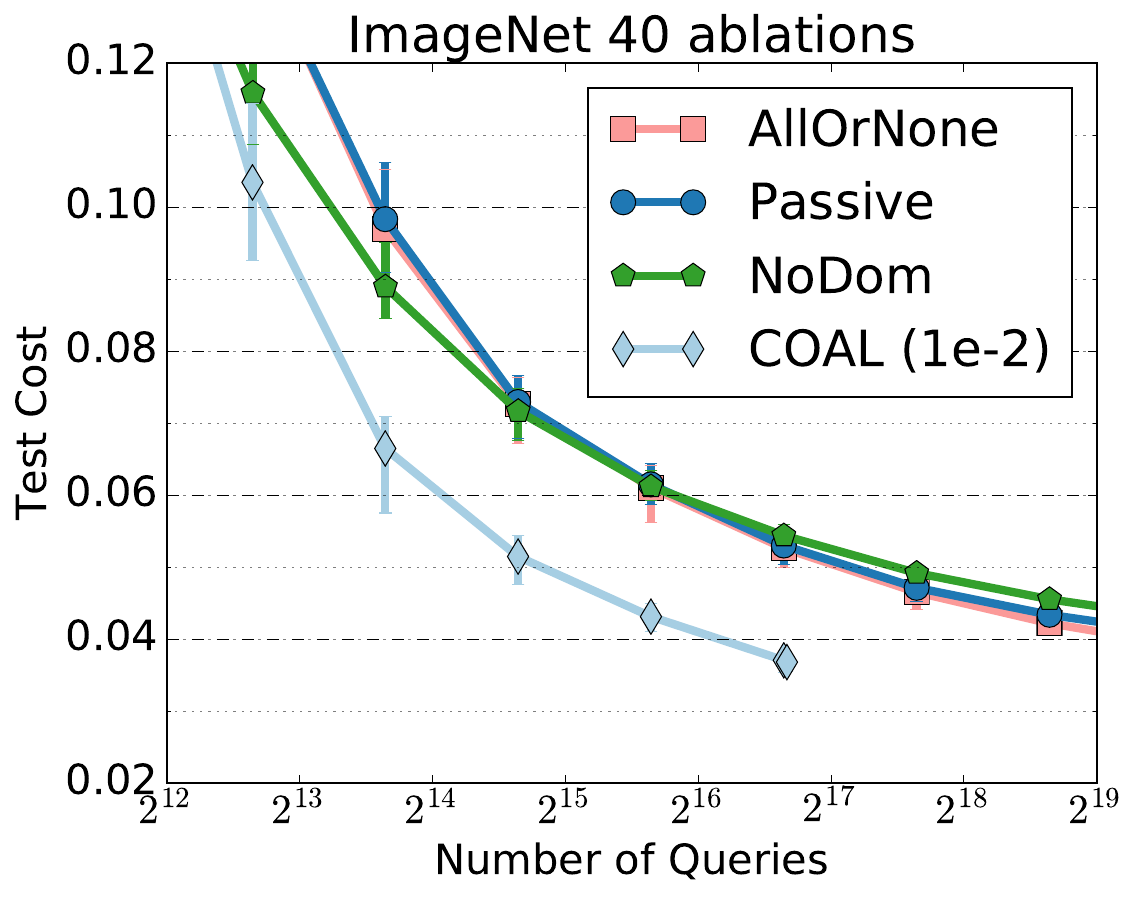}
  \includegraphics[width=0.32\textwidth,height=0.25\textwidth]{./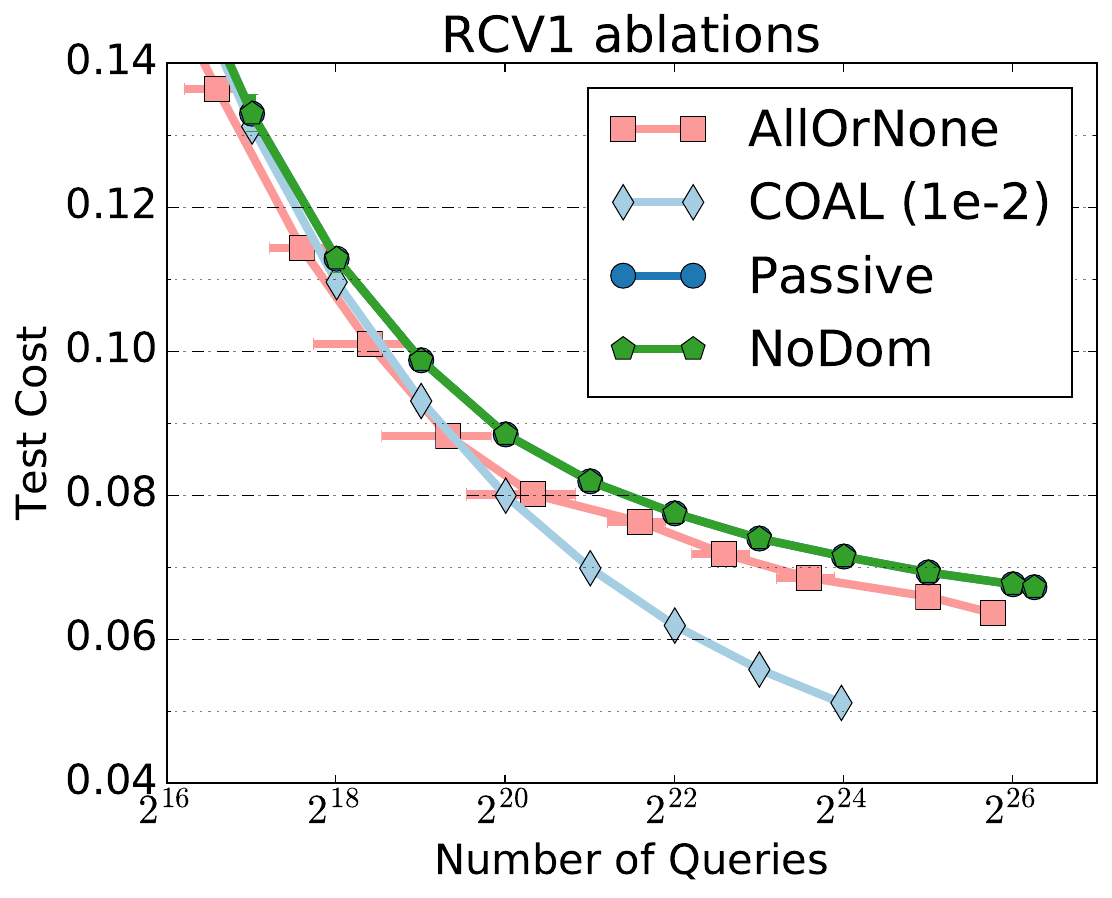}
\end{center}
\caption{Test cost versus number of queries for \alg, in comparison
  with active and passive baselines on the ImageNet40 and RCV1-v2
  dataset. On RCV1-v2, passive learning and \textsc{NoDom} are nearly
  identical.}
\label{fig:simulated_ablations}
\end{figure*}

We also compare \alg with two active learning baselines. Both
algorithms differ from \alg only in their query
rule. \textsc{AllOrNone} queries either all labels or no labels using
both domination and cost-range conditions and is an adaptation of
existing multiclass active
learners~\citep{agarwal2013selective}. \textsc{NoDom} just uses the
cost-range condition, inspired by active
regression~\citep{castro2005faster}. The results for ImageNet 40 and
RCV1-v2 are displayed in Figure~\ref{fig:simulated_ablations}, where
we use the AUC strategy to choose the learning rate. We choose the
mellowness by visual inspection for the baselines and use $0.01$ for
\alg\footnote{We use $0.01$ for \textsc{AllOrNone} and $10^{-3}$ for
  \textsc{NoDom}.}. On ImageNet 40, the ablations provide minimal
improvement over passive learning, while on RCV1-v2,
\textsc{AllOrNone} does provide marginal improvement. However, on both
datasets, \alg substantially outperforms both baselines and passive
learning.

While not always the best, we recommend a mellowness setting of $0.01$
as it achieves reasonable performance on all three datasets. This is
also confirmed by the learning-to-search experiments, which we discuss
next.

\subsection{Learning to Search}
\label{ssec:l2s_experiments}
We also experiment with \alg as the base leaner in
\emph{learning-to-search}~\citep{daume09searn,chang2015learning}, which
reduces joint prediction problems to CSMC. A joint prediction
example defines a search space, where a sequence of decisions are made
to generate the structured label. We focus here on sequence labeling
tasks, where the input is a sentence and the output is a sequence of
labels, specifically, parts of speech or named entities.

Learning-to-search solves such problems by generating the
output one label at a time, conditioning on all past
decisions.  Since mistakes may lead to compounding errors, it is
natural to represent the decision space as a CSMC problem, where the
classes are the ``actions'' available (e.g., possible labels for a
word) and the costs reflect the long term loss of each choice.
Intuitively, we should be able to avoid expensive computation of long
term loss on decisions like ``is \textsf{\small`the'} a
\textsc{determiner}?'' once we are quite sure of the answer. Similar
ideas motivate adaptive sampling for structured
prediction~\citep{shi15sampling}.

We specifically use
\aggravate~\citep{ross2014reinforcement,chang2015learning,sun2017deeply},
which runs a learned policy to produce a backbone sequence of labels.
For each position in the input, it then considers all possible
deviation actions and executes an oracle for the rest of the
sequence. The loss on this complete output is used as the cost for the
deviating action.  Run in this way, \aggravate requires $\length \times K$
roll-outs when the input sentence has $\length$ words and each word can
take one of $K$ possible labels.

\begin{figure}
\begin{center}
  \includegraphics[width=0.32\textwidth,height=0.25\textwidth]{./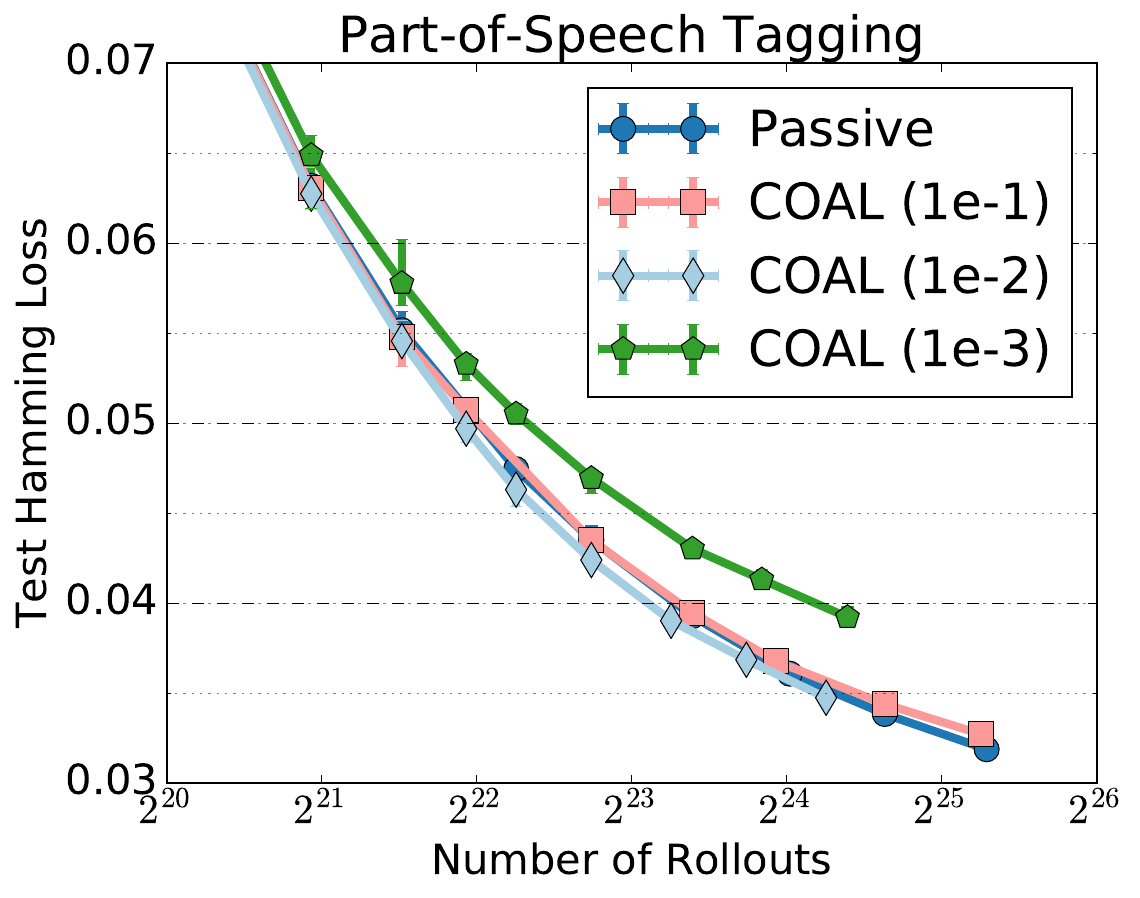}
  \includegraphics[width=0.32\textwidth,height=0.25\textwidth]{./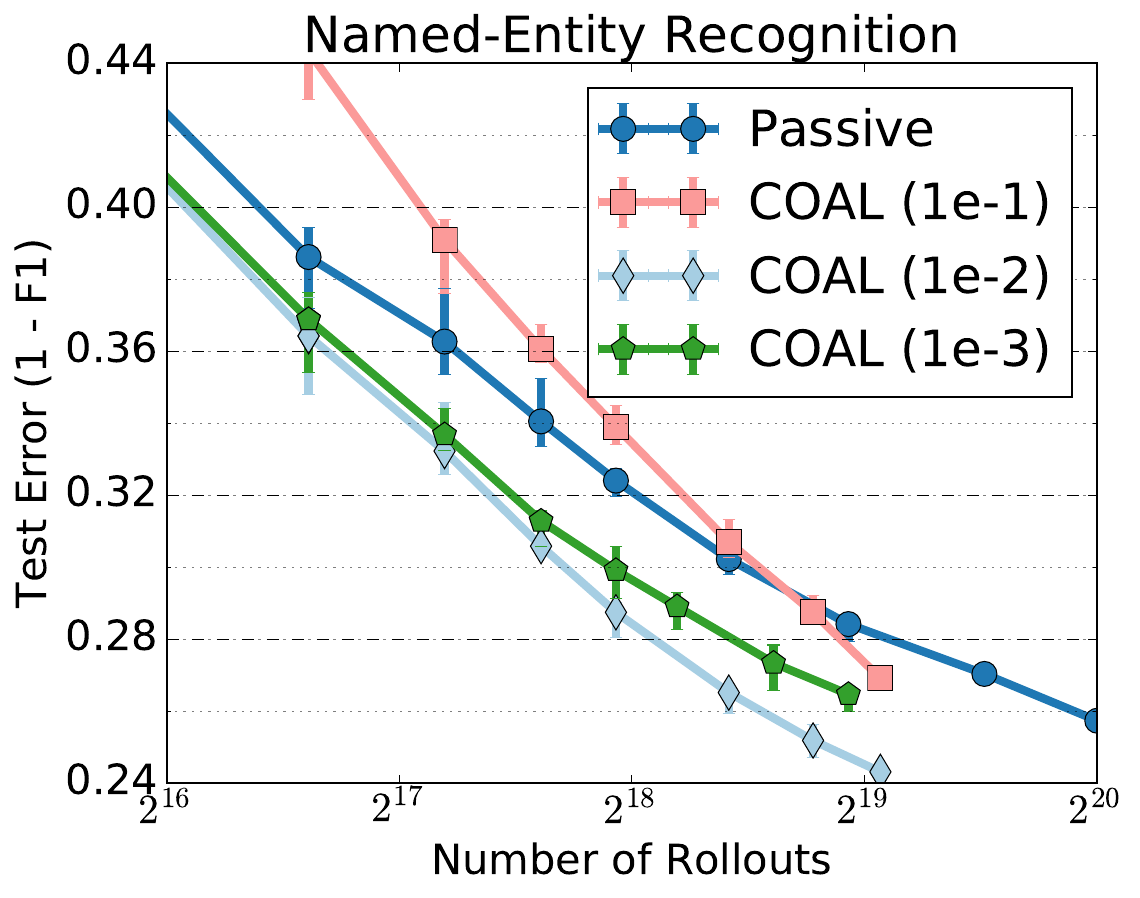}
  \includegraphics[width=0.32\textwidth,height=0.25\textwidth]{./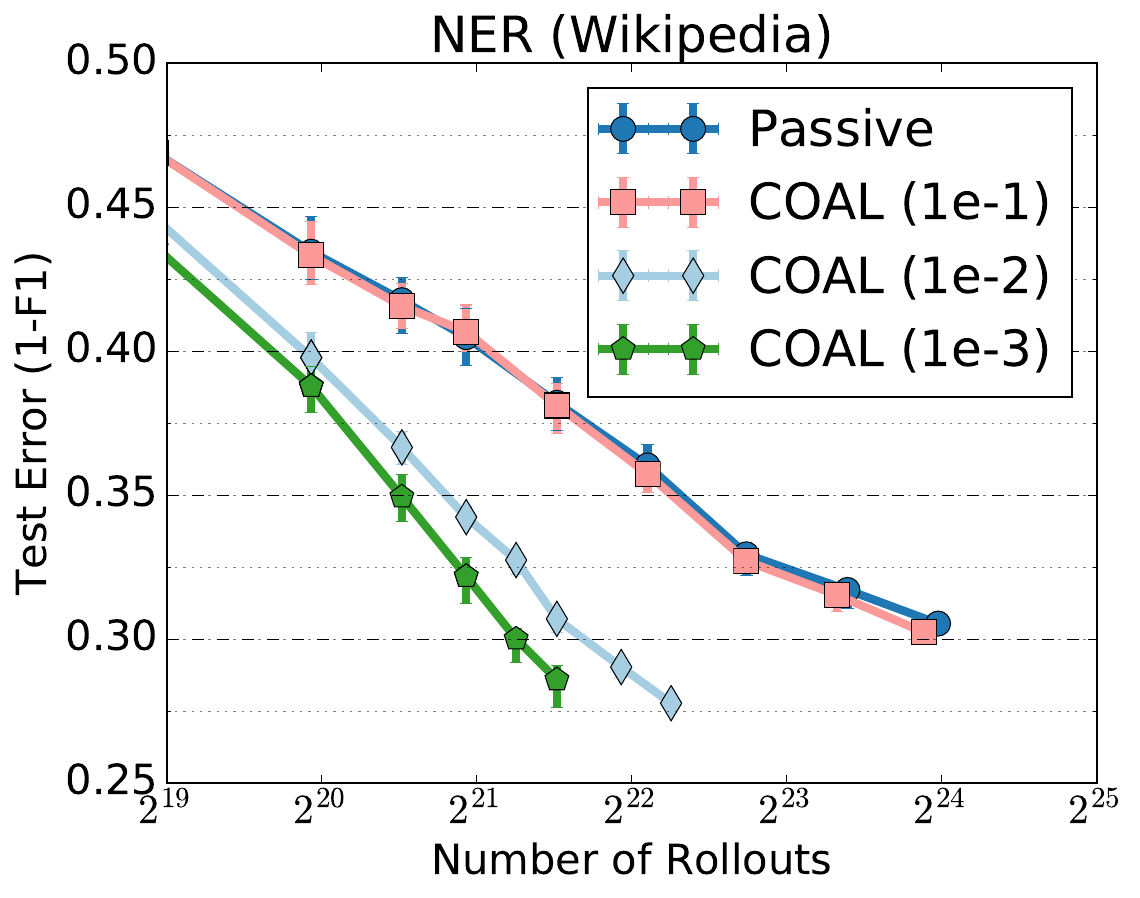}%
\caption{Learning to search experiments with \alg. Accuracy vs. number
  of rollouts for active and passive learning as the CSMC algorithm in
  learning-to-search.}%
\label{fig:l2s_results}%
\end{center}
\end{figure}

Since each roll-out takes $\order(\length)$ time, this can be
computationally prohibitive, so we use active learning to reduce the
number of roll-outs. We use \alg and a passive learning baseline
inside \aggravate on three joint prediction datasets (statistics are
in Table~\ref{tab:stats}). As above, we use several mellowness values
and the same AUC criteria to select the best learning rate (see
Table~\ref{tbl:learning_rates}). The results are in
Figure~\ref{fig:l2s_results}, and again our recommended mellowness is
$0.01$.

Overall, active learning reduces the number of roll-outs required, but
the improvements vary on the three datasets. On the Wikipedia data,
\alg performs a factor of 4 fewer rollouts to achieve similar
performance to passive learning and achieves substantially better test
performance. A similar, but less dramatic, behavior arises on the NER
task.  On the other hand, \alg offers minimal improvement over passive
learning on the POS-tagging task. This agrees with our theory and
prior empirical results~\citep{hsu2010algorithms}, which show that
active learning may not always improve upon passive learning.

\section{Proofs}
In this section we provide proofs for the main results, the
oracle-complexity guarantee and the generalization and label
complexity bounds. We start with some supporting results, including a
new uniform freedman-type inequality that may be of independent
interest. The proof of this inequality, and the proofs for several
other supporting lemmata are deferred to the appendices.

\subsection{Supporting Results}

\paragraph{A deviation bound.}
For both the computational and statistical analysis of \alg, we
require concentration of the square loss functional
$\Rhat_j(\cdot;y)$, uniformly over the class $\Gcal$. To describe
the result, we introduce the central random variable in the analysis:
\begin{align}
\label{eq:excess_sq_loss}
M_j(g;y) \triangleq Q_j(y)\left[(g(x_j)-c_j(y))^2 - (f^\star(x_j;y) - c_j(y))^2\right],
\end{align}
where $(x_j,c_j)$ is the $j^{\textrm{th}}$ example and cost presented
to the algorithm and $Q_j(y) \in \{0,1\}$ is the query indicator. For
simplicity we often write $M_j$ when the dependence on $g$ and $y$ is
clear from context. Let $\EE_j[\cdot]$ and $\Var_j[\cdot]$ denote the
expectation and variance conditioned on all randomness up to and
including round $j-1$.

\begin{theorem}
\label{thm:deviation}
Let $\Gcal$ be a function class with $\pdim(\Gcal) = d$, let $\delta
\in (0,1)$ and define $\logfactor \triangleq 324(d\log(n) +
\log(8Ke(d+1)n^2/\delta))$. Then with probability at least $1-\delta$,
the following inequalities hold simultaneously for all $g \in \Gcal$,
$y \in [K]$, and $i < i' \in [n]$.
\begin{align}
\sum_{j=i}^{i'} M_j(g;y) & \le \frac{3}{2}\sum_{j=i}^{i'} \EE_j M_j(g;y) + \logfactor,\label{eq:sample_bd}\\
\frac{1}{2} \sum_{j=i}^{i'} \EE_j M_j(g;y) &\le \sum_{j=i}^{i'} M_j(g;y) + \logfactor \label{eq:expectation_bd}.
\end{align}
\end{theorem}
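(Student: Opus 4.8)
The plan is to obtain both \eqref{eq:sample_bd} and \eqref{eq:expectation_bd} from a single two-sided \emph{relative} deviation bound: it suffices to show that, with the stated probability and uniformly over $g,y,i,i'$,
\[
\Big|\sum_{j=i}^{i'}\big(M_j(g;y)-\EE_j M_j(g;y)\big)\Big|\;\le\;\tfrac{1}{2}\sum_{j=i}^{i'}\EE_j M_j(g;y)+\logfactor .
\]
Rearranging the two sides of this inequality and adding $\sum_j \EE_j M_j$ to the dominated side yields exactly the constants $3/2$ and $1/2$ in the theorem. The quantities $D_j \triangleq M_j(g;y)-\EE_j M_j(g;y)$ form a martingale difference sequence with respect to the filtration generated by the rounds, so the whole statement reduces to a uniform Bernstein/Freedman inequality for this sequence in which the variance budget is calibrated to the predictable mean.

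The first key step is a \emph{self-bounding} variance estimate. Writing $\Delta_j \triangleq g(x_j)-\fstar{x_j}{y}$ and $\varepsilon_j \triangleq \fstar{x_j}{y}-c_j(y)$, the algebraic identity for squared losses gives $M_j(g;y)=Q_j(y)\big[\Delta_j^2+2\Delta_j\varepsilon_j\big]$. Since $Q_j(y)$ is determined by $x_j$ and the past, conditioning first on $x_j$ and using $\EE[\varepsilon_j\mid x_j]=0$ (which is precisely the Bayes-optimality of $\vf^\star$ from Assumption~\ref{assumption:realizability}) kills the cross term, so $\EE_j M_j(g;y)=\EE_{x_j}\big[Q_j(y)\Delta_j^2\big]\ge 0$. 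Because $g,\vf^\star\in[0,1]$ we have $|\Delta_j|\le 1$ and $|\varepsilon_j|\le1$, whence $\EE_j M_j(g;y)^2\le \EE_{x_j}\big[Q_j(y)(\Delta_j^4+4\Delta_j^2)\big]\le 5\,\EE_j M_j(g;y)$, and also $|M_j|\le1$, $|D_j|\le2$. Thus the predictable variance $V\triangleq\sum_{j=i}^{i'}\EE_j D_j^2$ is bounded by $5S$, where $S\triangleq\sum_{j=i}^{i'}\EE_j M_j(g;y)$ is exactly the quantity appearing on the right-hand side.

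For a \emph{fixed} $(g,y,i,i')$ I would then apply Freedman's inequality to $\{D_j\}$ with bound $b=2$ and variance proxy $V\le 5S$. Because $S$ is itself data-dependent, I stratify over dyadic ranges $S\in(2^{k-1},2^k]$ for $k=0,\dots,\lceil\log_2 n\rceil$ (each $\EE_j M_j\in[0,1]$ so $S\le n$), applying Freedman on each stratum with the corresponding fixed budget; this costs only an $O(\log n)$ factor in the union. On each stratum this gives $|\sum_j D_j|\le\sqrt{20\,S\log(1/\delta')}+\tfrac{4}{3}\log(1/\delta')$, and AM--GM ($\sqrt{20\,S\,L}\le \tfrac12 S+10L$) converts this into the desired multiplicative form $|\sum_j D_j|\le \tfrac12 S+C\log(1/\delta')$. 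To remove the dependence on a fixed $g$ I use the Lipschitz reduction $|M_j(g;y)-M_j(g';y)|\le 2|g(x_j)-g'(x_j)|$ (and the same for $\EE_j M_j$), so that an empirical $\ell_1$-cover of $\Gcal$ at scale $\epsilon\asymp 1/n^2$ approximates both $\sum_j M_j$ and $\sum_j\EE_j M_j$ up to $o(1)$; by Haussler's bound a class of pseudo-dimension $d$ (Definition~\ref{def:pdim}) admits such a cover of size $e(d+1)(2e/\epsilon)^d$. A union bound over the cover, over the $K$ labels, over the $\le n^2$ endpoint pairs $(i,i')$, and over the $O(\log n)$ strata, with $\delta'$ set accordingly, produces $\log(1/\delta')=O\!\big(d\log n+\log(Ke(d+1)n^2/\delta)\big)$, matching the definition of $\logfactor$ after absorbing all numerical constants into the factor $324$.

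The main obstacle is the interaction between the three requirements that must hold \emph{simultaneously}: the bound must be (i) multiplicative in the predictable mean $S$ rather than in a worst-case variance, which is only possible because of the self-bounding estimate $\EE_j M_j^2\le 5\EE_j M_j$ and requires the dyadic peeling over $S$; (ii) uniform over the infinite class $\Gcal$ while retaining the tight $d\log n$ rate, which forces an $\ell_1$-covering argument controlling both the empirical sum and its predictable mean (a crude sup-norm cover would be too lossy); and (iii) valid for all windows $(i,i')$ at once, necessitating the $n^2$ union factor. Reconciling the data-dependent variance budget with the cover — i.e.\ checking that the variance control $V\le 5S$ is preserved, up to constants, when passing from $g$ to its cover element — is the delicate bookkeeping step; everything else is routine once the self-bounding property and Freedman's inequality are in place.
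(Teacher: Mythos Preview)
Your high-level reduction (a two-sided relative deviation plus the self-bounding variance estimate $\Var_j M_j\le C\,\EE_j M_j$) is exactly right, and the paper proceeds along the same conceptual line. But the route you propose for making the bound \emph{uniform} over $\Gcal$ --- Freedman for each cover element plus a union bound over an empirical $\ell_1$-cover --- has a real gap, and it is precisely the gap the paper is built to close.

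The empirical $\ell_1$-cover from Haussler's lemma is constructed \emph{on the realized sample} $x_{1:n}$; its elements are therefore data-dependent. You cannot apply Freedman's inequality to a fixed cover element $v$ and then union over the cover, because $v$ is not fixed: which function plays the role of ``the cover element near $g$'' depends on the very sample whose martingale fluctuations you are trying to control. Moreover, even granting that issue, your Lipschitz step for the predictable mean is wrong as stated: $|\EE_j M_j(g)-\EE_j M_j(g')|\le 2\,\EE_{x\sim\Dcal}|g(x)-g'(x)|$ involves the \emph{population} $L_1$ distance, not the empirical one, so an empirical $\ell_1$-cover at scale $1/n^2$ does not force $\sum_j\EE_j M_j(g)$ and $\sum_j\EE_j M_j(v)$ to be close. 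Finite pseudo-dimension gives you no control over population (or sup-norm) covers, which is why the paper remarks that uniform Freedman-type bounds ``in general require much stronger assumptions (e.g., sequential complexity measures).''

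The paper's fix is to avoid Freedman entirely and instead do a ghost-sample / symmetrization argument that exploits the special structure of this particular martingale: the $(x_j,c_j)$ are i.i.d., and the \emph{only} source of dependence is the binary query indicator $Q_j$. After introducing a tangent sequence and Rademacher signs, the data-dependent $Q_j$'s are replaced by a supremum over a $\{0,1\}$-valued \emph{tree process} $\Qtree_j(\epsilon)$; conditional on the tree, the remaining process is an honest i.i.d.\ Rademacher process over $\Gcal$, and \emph{that} is where the (data-dependent) empirical $\ell_1$-cover and Haussler's bound are legitimately applied. The variance self-bounding that you identified is then encoded directly in the Chernoff step (Lemma~\ref{lem:finite_class}) rather than through peeling. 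So your peeling-over-$S$ and AM--GM computations are unnecessary once the correct symmetrization is in place; the hard part you misidentified as ``bookkeeping'' is actually the structural one of decoupling $Q_j$ from $(x_j,c_j)$.
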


This result is a uniform Freedman-type inequality for the martingale
difference sequence $\sum_i M_i - \EE_i M_i$. In general, such bounds
require much stronger assumptions (e.g., sequential complexity
measures~\citep{rakhlin2017equivalence}) on $\Gcal$ than the finite
pseudo-dimension assumption that we make. However, by exploiting the
structure of our particular martingale, specifically that the
dependencies arise \emph{only} from the query indicator, we are able
to establish this type of inequality under weaker assumptions. The
result may be of independent interest, but the proof, which is based
on arguments from~\citet{liang2015learning}, is quite technical and
deferred to Appendix~\ref{app:deviation}. Note that we did not optimize the constants.

\paragraph{The Multiplicative Weights Algorithm.}
We also use the standard analysis of multiplicative weights for
solving linear feasibility problems. We state the result here and, for
completeness, provide a proof in Appendix~\ref{app:mw}. See
also~\citet{arora2012multiplicative,plotkin1995fast} for more details.

Consider a linear feasibility problem with decision variable $v \in
\RR^d$, explicit constraints $\langle a_i, v\rangle \le b_i$ for $i
\in [m]$ and some implicit constraints $v \in S$ (e.g., $v$ is
non-negative or other simple constraints). The MW algorithm either
finds an approximately feasible point or certifies that the program is
infeasible assuming access to an oracle that can solve a simpler
feasibility problem with just one explicit constraint $\sum_i\mw_i
\langle a_i,v\rangle \le \sum_i \mw_i b_i$ for any non-negative
weights $\mw \in \RR^m_+$ and the implicit constraint $v \in
S$. Specifically, given weights $\mw$, the oracle either reports that
the simpler problem is infeasible, or returns any feasible point $v$ that
further satisfies $\langle a_i, v \rangle - b_i \in [-\rho_i,\rho_i]$
for parameters $\rho_i$ that are known to the MW algorithm.

The MW algorithm proceeds iteratively, maintaining a weight vector
$\mw^{(t)} \in \RR^m_+$ over the constraints. Starting with
$\mw^{(1)}_i = 1$ for all $i$, at each iteration, we query the oracle
with the weights $\mw^{(t)}$ and the oracle either returns a point
$v_t$ or detects infeasibility. In the latter case, we simply report
infeasibility and in the former, we update the weights using the rule
\begin{align*}
\mw^{(t+1)}_i \gets \mw^{(t)}_i \times \left(1 - \eta \frac{b_i - \langle a_i, v_t\rangle}{\rho_i}\right).
\end{align*}
Here $\eta$ is a parameter of the algorithm. The intuition is that if
$v_t$ satisfies the $i^{\textrm{th}}$ constraint, then we down-weight
the constraint, and conversely, we up-weight every constraint that is
violated. Running the algorithm with appropriate choice of $\eta$ and
for enough iterations is guaranteed to approximately solve the
feasibility problem.

\begin{theorem}[\citet{arora2012multiplicative,plotkin1995fast}]
  \label{thm:mw}
  Consider running the MW algorithm with parameter $\eta =
  \sqrt{\log(m)/T}$ for $T$ iterations on a linear feasibility problem
  where oracle responses satisfy $\langle a_i, v\rangle - b_i \in
  [-\rho_i,\rho_i]$. If the oracle fails to find a feasible point in
  some iteration, then the linear program is infeasible. Otherwise the
  point $\bar{v} \triangleq \frac{1}{T}\sum_{t=1}^Tv_t$ satisfies $\langle a_i,
  \bar{v}\rangle \le b_i + 2\rho_i\sqrt{\log(m)/T}$ for all $i \in
      [m]$.
\end{theorem}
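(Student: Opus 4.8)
The plan is to follow the classical potential-function analysis of multiplicative weights for linear feasibility, adapting it to the per-constraint widths $\rho_i$. First I would record the two facts that drive the argument. When the oracle returns a point $v_t$ at iteration $t$, it has certified the single collapsed constraint, so $\sum_i \mw^{(t)}_i\langle a_i,v_t\rangle \le \sum_i \mw^{(t)}_i b_i$, equivalently $\sum_i \mw^{(t)}_i(b_i - \langle a_i,v_t\rangle)\ge 0$. Writing the normalized per-round loss $m^{(t)}_i \triangleq (b_i - \langle a_i,v_t\rangle)/\rho_i$, which lies in $[-1,1]$ by the width assumption, the update rule in the theorem is exactly $\mw^{(t+1)}_i = \mw^{(t)}_i(1-\eta m^{(t)}_i)$, and the oracle guarantee reads $\sum_i \rho_i\mw^{(t)}_i m^{(t)}_i\ge 0$.

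The infeasibility direction is immediate, and I would dispatch it first: if some $v^\star\in S$ satisfies all original constraints $\langle a_i,v^\star\rangle\le b_i$, then for any nonnegative weights the same nonnegative combination holds, so $v^\star$ is feasible for every collapsed problem. Contrapositively, if the oracle ever declares a collapsed problem infeasible, the original program has no feasible point. (By convexity of $S$, the averaged iterate $\bar v$ also lies in $S$, so it is a genuine approximate solution.)

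For the approximate-feasibility direction I would introduce the width-weighted potential $\Psi^{(t)} \triangleq \sum_i \rho_i\mw^{(t)}_i$. The key computation is $\Psi^{(t+1)} = \Psi^{(t)} - \eta\sum_i\rho_i\mw^{(t)}_i m^{(t)}_i = \Psi^{(t)} - \eta\sum_i \mw^{(t)}_i(b_i-\langle a_i,v_t\rangle)\le \Psi^{(t)}$, where the last step is precisely the oracle guarantee; hence $\Psi$ is non-increasing and $\Psi^{(T+1)}\le \Psi^{(1)} = \sum_i\rho_i$. On the other hand, fixing any constraint $i^\star$ and using $\ln(1-x)\ge -x-x^2$ for $|x|\le 1/2$ (valid once $\eta\le 1/2$, i.e. $T\ge 4\log m$), I obtain $\mw^{(T+1)}_{i^\star} = \prod_{t}(1-\eta m^{(t)}_{i^\star}) \ge \exp(-\eta\sum_t m^{(t)}_{i^\star}-\eta^2 T)$, where I used $|m^{(t)}_{i^\star}|\le 1$ to bound the second-order term by $\eta^2 T$. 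Combining $\rho_{i^\star}\mw^{(T+1)}_{i^\star}\le \Psi^{(T+1)}\le \sum_i\rho_i$ and taking logarithms gives $\sum_t m^{(t)}_{i^\star}\ge -\frac1\eta\log(\sum_i\rho_i/\rho_{i^\star})-\eta T$. Dividing by $T$, using $\frac1T\sum_t m^{(t)}_{i^\star} = (b_{i^\star}-\langle a_{i^\star},\bar v\rangle)/\rho_{i^\star}$, and plugging in $\eta=\sqrt{\log m/T}$ yields the claimed $\langle a_{i^\star},\bar v\rangle\le b_{i^\star}+2\rho_{i^\star}\sqrt{\log m/T}$; since $i^\star$ is arbitrary this holds for all constraints.

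The one place that needs care, and what I expect to be the \emph{main obstacle}, is the width bookkeeping in the final step: the weighted-potential argument naturally produces the factor $\log(\sum_i\rho_i/\rho_{i^\star})$ rather than $\log m$, so to obtain the stated bound I would either pass to a single uniform width $\rho=\max_i\rho_i$ (the standard textbook form, and enough here since in Algorithm~\ref{alg:mw_max_cost} every $\rho_i$ lies in $[1,4]$, so all widths agree up to a constant) or absorb the constant width-ratio into $\log m$. The remaining ingredients, namely the boundedness $|m^{(t)}_i|\le 1$ and the condition $\eta\le 1/2$, are routine, and I would verify them explicitly but anticipate no difficulty.
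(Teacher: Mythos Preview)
Your argument is correct and follows the same potential-function template as the paper's Appendix~\ref{app:mw}. The one genuine difference is your choice of potential: you take the width-weighted $\Psi^{(t)}=\sum_i\rho_i\mw_i^{(t)}$, while the paper uses the unweighted $\Phi^{(t)}=\sum_i\mw_i^{(t)}$ together with the normalized distribution $q^{(t)}\propto\mw^{(t)}$. The paper's final chain $0\le\sum_t\sum_i q_i^{(t)}\ell_t(i)\le\log(m)/\eta+\sum_t\ell_t(i)+\eta T$ tacitly uses $\sum_i q_i^{(t)}\ell_t(i)\ge 0$, which only follows from the oracle guarantee $\sum_i\mw_i^{(t)}(b_i-\langle a_i,v_t\rangle)\ge 0$ when all the widths $\rho_i$ coincide---as they do in the paper's application, where $\rho_j=\regconst$ throughout. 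Your weighted potential handles non-uniform widths cleanly and honestly surfaces the $\log\!\big(\sum_i\rho_i/\rho_{i^\star}\big)$ factor you flag; your proposed fix of reverting to a single uniform width is exactly right and matches how the theorem is actually invoked in the proof of Theorem~\ref{thm:mw_guarantee}. Beyond this bookkeeping point the two proofs are the same.
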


\paragraph{Other Lemmata.}
Our first lemma evaluates the conditional expectation and variance of
$M_j$, defined in~\eqref{eq:excess_sq_loss}, which we will use heavily
in the proofs. Proofs of the results stated here are deferred to
Appendix~\ref{app:lemmata}.
\begin{lemma}[Bounding variance of regression regret]
  \label{lem:regret_transform}
  We have for all $(g,y) \in \Gcal \times \Ycal$,
  \begin{align*}
    \EE_j[M_j] = \EE_j\left[ Q_j(y) (g(x_j) - \fstar{x_j}{y})^2 \right], \qquad
    \Var_j[M_j] \leq 4\EE_i [M_j].
  \end{align*}
\end{lemma}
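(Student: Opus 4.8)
The plan is to reduce both claims to a single algebraic factorization of the per-round excess square loss, combined with the crucial observation that the query indicator $Q_j(y)$ is determined \emph{before} the cost $c_j$ is revealed. Writing $g = g(x_j)$ and $f^\star = \fstar{x_j}{y}$ for brevity, I would first expand
\[
(g-c_j(y))^2 - (f^\star - c_j(y))^2 = (g - f^\star)\bigl(g + f^\star - 2c_j(y)\bigr),
\]
so that $M_j(g;y) = Q_j(y)(g - f^\star)\bigl(g + f^\star - 2c_j(y)\bigr)$. This identity is the workhorse for both parts.

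The first equality then follows by iterated expectation, additionally conditioning on $x_j$ (beyond the round-$(j-1)$ history already built into $\EE_j$). The key point, which I expect to be the crux, is that by inspection of Algorithm~\ref{alg:a_cs} the indicator $Q_j(y)$ is a function only of $x_j$ and of quantities computable from the first $j-1$ rounds — the version space $\Gcal_j(y)$ and the \maxcost/\mincost estimates in Lines~\ref{line:domination}--\ref{line:range} — and in particular does \emph{not} depend on the realized cost $c_j$. Hence, conditioned on $x_j$ and the history, $Q_j(y)$, $g(x_j)$, and $\fstar{x_j}{y}$ are all fixed, and the only remaining randomness lies in $c_j(y)$. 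Using realizability (Assumption~\ref{assumption:realizability}), $\EE[c_j(y)\mid x_j] = \fstar{x_j}{y} = f^\star$, so $\EE_j[g + f^\star - 2c_j(y)\mid x_j] = g - f^\star$, giving $\EE_j[M_j \mid x_j] = Q_j(y)(g-f^\star)^2$. Averaging over $x_j$ yields $\EE_j[M_j] = \EE_j\bigl[Q_j(y)(g(x_j) - \fstar{x_j}{y})^2\bigr]$.

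For the variance bound I would start from $\Var_j[M_j] \le \EE_j[M_j^2]$. Since $Q_j(y)\in\{0,1\}$ we have $Q_j(y)^2 = Q_j(y)$, so the factorization gives $M_j^2 = Q_j(y)(g - f^\star)^2\bigl(g + f^\star - 2c_j(y)\bigr)^2$. Because $g, f^\star, c_j(y) \in [0,1]$, the factor $g + f^\star - 2c_j(y)$ lies in $[-2,2]$, hence $\bigl(g + f^\star - 2c_j(y)\bigr)^2 \le 4$ pointwise. Therefore $M_j^2 \le 4\,Q_j(y)(g-f^\star)^2$, and taking $\EE_j$ and invoking the first equality gives $\Var_j[M_j] \le \EE_j[M_j^2] \le 4\,\EE_j\bigl[Q_j(y)(g(x_j) - \fstar{x_j}{y})^2\bigr] = 4\,\EE_j[M_j]$, as claimed.

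The only genuine subtlety is the measurability of $Q_j(y)$ with respect to the information available before $c_j$ is observed; everything else is a routine bounded-range estimate. I would make this explicit by noting that the query decision uses only $x_j$ and the version space $\Gcal_j(y)$, which in turn depends solely on rounds $1,\dots,j-1$, so that $Q_j(y)$ may be pulled out of the inner conditional expectation as a constant.
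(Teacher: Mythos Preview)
Your proposal is correct and follows essentially the same approach as the paper: both use the difference-of-squares factorization of $M_j$, condition on $x_j$ (with $Q_j(y)$ fixed since it depends only on $x_j$ and the history), apply realizability to replace $\EE[c_j(y)\mid x_j]$ by $\fstar{x_j}{y}$, and then bound $(g+f^\star-2c_j(y))^2\le 4$ using the $[0,1]$ range. If anything, your version is slightly more explicit about why $Q_j(y)$ may be treated as a constant in the inner conditional expectation.
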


The next lemma relates the cost-sensitive error to the random
variables $M_j$. Define
\begin{align*}
\vF_i = \left\{f \in \Gcal^K \mid \forall y, f(\cdot;y) \in \Gcal_i(y)\right\},
\end{align*}
which is the version space of vector regressors at round $i$.
Additionally, recall that $P_\zeta$ captures the noise level in the
problem, defined in~\eqref{eq:noise_level} \df{and that $\psi_i=
  1/\sqrt{i}$ is defined in the algorithm pseudocode.}
\begin{lemma}
  \label{lem:refined_generalization_bd}
  For all $i > 0$, if $\vf^\star \in \vF_i$, then for all $\vf \in
  \vF_i$
  \begin{align*}
    \EE_{x,c}[c(h_{\vf}(x)) - c(h_{{\vf}^\star}(x))] \leq \min_{\zeta
      > 0}\left\{ \zeta P_\zeta + \ind{\zeta \le 2\psi_i} 2\psi_i +
    \frac{4\psi_i^2}{\zeta} + \frac{6}{\zeta}\sum_y
    \EE_i\left[\df{M_i}\right]\right\}.
  \end{align*}
\end{lemma}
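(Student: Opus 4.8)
The plan is to reduce the cost-sensitive regret to a pointwise statement, bound it by per-label squared prediction errors, and then exploit the structure of the query rule to discard the labels that \alg never queries. First I would condition on $x$ and integrate out the cost vector: writing $\hat y \triangleq h_{\vf}(x)$ and using $h_{\vf^\star}(x) = y^\star(x)$, the regret equals $\EE_x[\fstar{x}{\hat y} - \fstar{x}{y^\star(x)}]$. Define the pointwise regret $\mathrm{reg}(x) \triangleq \fstar{x}{\hat y} - \fstar{x}{y^\star(x)} \ge 0$, which vanishes unless $\vf$ and $\vf^\star$ disagree on $x$. To control it I would use the optimality of $\hat y$ for $\vf$, namely $\f{x}{\hat y} \le \f{x}{y^\star(x)}$, to telescope:
\[ \mathrm{reg}(x) \le e_{y^\star(x)} - e_{\hat y} \le |e_{\hat y}| + |e_{y^\star(x)}|, \]
where $e_y \triangleq \f{x}{y} - \fstar{x}{y}$ is the signed prediction error on label $y$. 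Thus the pointwise regret is dominated by the prediction errors on exactly the two labels involved in a possible mistake.

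Next I would split the expectation at the threshold $\zeta$. On any mistake the pointwise regret is at least the true gap $\min_{y \ne y^\star(x)}\fstar{x}{y} - \fstar{x}{y^\star(x)}$, so the event $\{0 < \mathrm{reg}(x) \le \zeta\}$ is contained in the event that this gap is at most $\zeta$, whose probability is exactly $P_\zeta$ by~\eqref{eq:noise_level}. Hence $\EE_x[\mathrm{reg}(x)\,\ind{\mathrm{reg}(x)\le\zeta}] \le \zeta P_\zeta$, yielding the first term. It remains to bound $\EE_x[\mathrm{reg}(x)\,\ind{\mathrm{reg}(x) > \zeta}]$ using only the squared errors \alg actually observes, i.e. those appearing in $\EE_i[M_i(\vf;y)] = \EE[Q_i(y) e_y^2]$ (Lemma~\ref{lem:regret_transform}).

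The crux is to argue that the two relevant labels are either queried or carry negligible error. Since $\vf,\vf^\star \in \vF_i$, both $\f{x}{y}$ and $\fstar{x}{y}$ lie in $[\cmin{x,G},\cmax{x,G}]$ for the base class $G = G_{\vF_i}(y)$ induced by $y$, so $|e_y| \le \gamma(x,y,\vF_i) = \cmax{x,G}-\cmin{x,G}$. From $\fstar{x}{y^\star(x)} \le \fstar{x}{y}$ for every $y$, the lower confidence value of $y^\star(x)$ falls below every label's upper confidence value, so $y^\star(x)$ survives the domination test of Line~\ref{line:domination}; the analogous argument using $\f{x}{\hat y} = \min_y \f{x}{y}$ places $\hat y$ in $Y'$ as well. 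Here the hypothesis $\vf^\star \in \vF_i$ is essential. Consequently a mistake forces $|Y'| > 1$, so the range test of Line~\ref{line:range} is reached for both labels. If such a label is \emph{not} queried, its estimated range is at most $\psi_i$, and because the two-sided $\psi_i/4$ accuracy of \maxcost/\mincost (Theorem~\ref{thm:mw_guarantee}) makes the true range a lower bound on the estimated range, $|e_y| \le \psi_i$. Therefore $e_y^2 \le Q_i(y) e_y^2 + (1-Q_i(y))\psi_i^2$ for $y \in \{\hat y, y^\star(x)\}$. On $\{\mathrm{reg}(x) > \zeta\}$ I then use $\mathrm{reg}(x) \le \mathrm{reg}(x)^2/\zeta \le 2(e_{\hat y}^2 + e_{y^\star(x)}^2)/\zeta$, substitute the previous bound, and take expectations to produce the $\frac{6}{\zeta}\sum_y \EE_i[M_i(\vf;y)] + \frac{4\psi_i^2}{\zeta}$ terms, with the constant absorbing the slack introduced by the approximate cost ranges. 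Finally, when $\zeta \le 2\psi_i$ both labels could be unqueried and the last step does not apply; but then $\mathrm{reg}(x) \le 2\psi_i$, which I charge to the $\ind{\zeta \le 2\psi_i}\,2\psi_i$ term. Taking the minimum over $\zeta$ gives the claim.

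I expect the main obstacle to be the third paragraph: correctly chaining the domination condition of Line~\ref{line:domination} and the cost-range condition of Line~\ref{line:range} together with the two-sided approximation guarantee of \maxcost/\mincost, so that every label left unqueried but capable of affecting the regret provably has prediction error at most $\psi_i$. Establishing that $\hat y$ and $y^\star(x)$ are both non-dominated — precisely where $\vf^\star \in \vF_i$ enters — is the linchpin, since only then does a mistake guarantee $|Y'| > 1$ and trigger the range-based query decision.
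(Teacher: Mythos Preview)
Your proposal is correct and follows essentially the same route as the paper: split at threshold $\zeta$, show that $\hat y$ and $y^\star(x)$ are both non-dominated because $\vf,\vf^\star\in\vF_i$, and conclude that any unqueried non-dominated label has $|e_y|\le\psi_i$. The paper organizes the remaining argument as a three-case split on which of $\hat y, y^\star(x)$ are queried (both, one, neither) and then sums the case bounds, which is where the constant $6/\zeta$ and the term $\ind{\zeta\le 2\psi_i}\,2\psi_i$ (from the ``neither'' case) come from. Your unified inequality $e_y^2 \le Q_i(y)\,e_y^2 + (1-Q_i(y))\,\psi_i^2$ handles all cases at once and in fact delivers the sharper bound $\tfrac{2}{\zeta}\sum_y\EE_i[M_i(\vf;y)] + \tfrac{4\psi_i^2}{\zeta}$ on $\{\mathrm{reg}(x)>\zeta\}$; the indicator term is then superfluous. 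Your remark that ``the last step does not apply'' when $\zeta\le 2\psi_i$, and your attribution of the constant $6$ to ``slack from the approximate cost ranges,'' are unnecessary --- there is no such slack, since Theorem~\ref{thm:mw_guarantee} makes the estimated range dominate the true one. These are harmless wrinkles: the lemma's stated bound is looser than what your argument actually proves.
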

Note that the lemma requires that both $\vf^\star$ and $\vf$ belong to
the version space $\vF_i$.

For the label complexity analysis, we will need to understand the
cost-sensitive performance of all $f \in \vF_i$, which requires a
different generalization bound. Since the proof is similar to that of
Theorem~\ref{thm:reg_bound}, we defer the argument to appendix.
\begin{lemma}
\label{lem:version_to_csr}
Assuming the bounds in Theorem~\ref{thm:deviation} hold, then for all
$i$, $\vF_i \subset \vFcsr(r_i)$ where $r_i \triangleq \min_{\zeta > 0}\left\{
\zeta P_\zeta + \frac{44K\Delta_i}{\zeta}\right\}.$
\end{lemma}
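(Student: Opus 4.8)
The plan is to show that every $\vf\in\vF_i$ has cost-sensitive regret at most $r_i$; since $\vFcsr(r)$ is by definition the set of regressors with regret at most $r$, this is exactly the claim $\vF_i\subseteq\vFcsr(r_i)$. The workhorse is Lemma~\ref{lem:refined_generalization_bd}, which already converts control of the per-label excess square loss $\EE_i[M_i(\vf;y)]$ into a cost-sensitive regret bound, but it requires both $\vf$ and $\vf^\star$ to lie in $\vF_i$. So the first step is to verify the hypothesis $\vf^\star\in\vF_i$. I would do this by induction on $i$ using Theorem~\ref{thm:deviation}: since $\EE_j[M_j(\vf^\star;y)]=0$ by Lemma~\ref{lem:regret_transform}, inequality~\eqref{eq:expectation_bd} applied to the ERM $g_{i,y}$ over the range $[1,i-1]$ gives $-\sum_{j<i}M_j(g_{i,y};y)\le\logfactor$, hence $\Rhat_i(\fstar{\cdot}{y};y)-\Rhat_i(g_{i,y};y)\le\logfactor/(i-1)\le\Delta_i$ (recalling $\Delta_i=\regconst\min\{\logfactor/(i-1),1\}$ with $\regconst=3$). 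Thus $\fstar{\cdot}{y}$ survives into $\Gcal_i(y)$, and $\vf^\star\in\vF_i$.

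The main step is to bound $\EE_i[M_i(\vf;y)]=O(\Delta_i)$ uniformly over $\vf\in\vF_i$. Membership in the version space gives empirical control: $f(\cdot;y)\in\Gcal_i(y)$ means $\Rhat_i(f(\cdot;y);y)\le\Rhat_i(g_{i,y};y)+\Delta_i\le\Rhat_i(\fstar{\cdot}{y};y)+\Delta_i$ (as $g_{i,y}$ is the empirical minimizer), i.e. $\sum_{j<i}M_j(\vf;y)\le(i-1)\Delta_i$. Converting this empirical statement to an expectation via~\eqref{eq:expectation_bd} yields $\sum_{j<i}\EE_j[M_j(\vf;y)]\le 2(i-1)\Delta_i+2\logfactor=O((i-1)\Delta_i)$. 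This controls the \emph{cumulative} expected excess loss, whereas Lemma~\ref{lem:refined_generalization_bd} asks for the \emph{single-round} quantity $\EE_i[M_i(\vf;y)]$. I would close this gap exactly as in the proof of Theorem~\ref{thm:reg_bound}, by invoking monotonicity of the nested version spaces $\vF_i\subseteq\vF_{i-1}\subseteq\cdots$: because the version space can only contract, the query region for label $y$ shrinks, so (modulo the threshold effects of $\psi_j$) the per-round expectations $\EE_j[M_j(\vf;y)]$ are non-increasing in $j$ and the round-$i$ term is dominated by the average $\tfrac{1}{i-1}\sum_{j<i}\EE_j[M_j(\vf;y)]=O(\Delta_i)$. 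Summing over $y$ gives $\sum_y\EE_i[M_i(\vf;y)]=O(K\Delta_i)$.

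Finally I would substitute into Lemma~\ref{lem:refined_generalization_bd}. With $\psi_i=1/\sqrt{i}$ we have $\psi_i^2=1/i\le\Delta_i/\regconst$ and $\ind{\zeta\le2\psi_i}2\psi_i\le 4\psi_i^2/\zeta$, so both $\psi_i$-terms are themselves $O(\Delta_i/\zeta)$ and merge into the main term, leaving $\EE_{x,c}[c(h_{\vf}(x))-c(h_{\vf^\star}(x))]\le\min_{\zeta>0}\{\zeta P_\zeta+O(K\Delta_i)/\zeta\}$; tracking the constants through the $\regconst=3$ factor and the factor $6$ in the lemma produces the stated $44K\Delta_i$. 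As this holds for every $\vf\in\vF_i$, we conclude $\vF_i\subseteq\vFcsr(r_i)$. The only substantive difference from Theorem~\ref{thm:reg_bound} is that there the regret is bounded for the single ERM-based classifier $h_{f_{i+1}}$, whose empirical excess risk is essentially zero, whereas here I must absorb the full version-space slack $\Delta_i$ for an arbitrary $\vf\in\vF_i$; this is precisely what replaces $\logfactor/i$ by $\Delta_i\asymp\logfactor/(i-1)$ and loosens the constant. I expect the main obstacle to be the passage from the cumulative bound to the instantaneous $\EE_i[M_i(\vf;y)]$: this is where the paper's explicitly enforced version-space monotonicity is essential, and making the per-round monotonicity argument rigorous, in particular reconciling the shrinking query region against the simultaneously shrinking threshold $\psi_j$, is the delicate part.
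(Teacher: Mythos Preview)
Your proposal has a genuine gap at exactly the point you flag as ``the delicate part,'' and the paper's proof sidesteps it rather than resolving it.

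The problematic step is passing from the cumulative estimate $\sum_{j<i}\EE_j[M_j(\vf;y)]=O((i-1)\Delta_i)$ to the single-round estimate $\EE_i[M_i(\vf;y)]=O(\Delta_i)$ via monotonicity of $j\mapsto\EE_j[M_j(\vf;y)]$. This monotonicity does not hold in general. Recall $\EE_j[M_j(\vf;y)]=\EE_x[Q_j(y)(\vf(x;y)-\vf^\star(x;y))^2]$ and $Q_j(y)$ fires only when the approximate cost range exceeds $\psi_j$. As $j$ grows, the version space (hence the cost range) shrinks, but the threshold $\psi_j=1/\sqrt{j}$ shrinks too, and there is no reason the former wins: one can have $\widehat{\gamma}(x,y)<\psi_j$ at some round (no query) and $\widehat{\gamma}(x,y)>\psi_{j'}$ at a later round $j'>j$ (query), so $Q_j(y)$, and hence $\EE_j[M_j(\vf;y)]$, is not monotone in $j$. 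You also misremember the proof of Theorem~\ref{thm:reg_bound}: it does not use any per-round monotonicity of $\EE_j[M_j]$ either.

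The paper's argument avoids bounding the instantaneous $\EE_i[M_i(\vf;y)]$ altogether. The key observation is that the left-hand side of Lemma~\ref{lem:refined_generalization_bd}, $\EE_{x,c}[c(h_{\vf}(x))-c(h_{\vf^\star}(x))]$, is a population quantity independent of the round index. Since $\vf\in\vF_i\subseteq\vF_j$ for every $j\le i$ (this is what version-space monotonicity buys), the lemma applies at \emph{every} round $j=1,\ldots,i-1$, giving $i-1$ different upper bounds on the same quantity. Averaging these bounds, the $\psi_j$ terms are summed as in the proof of Theorem~\ref{thm:reg_bound}, and the term involving $\EE_j[M_j]$ becomes the average $\tfrac{1}{i-1}\sum_{j<i}\EE_j[M_j(\vf;y)]$, which your cumulative bound controls directly. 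So the correct use of monotonicity is to justify applying the lemma at all earlier rounds, not to order the $\EE_j[M_j]$ themselves.
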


The final lemma relates the query rule of \alg to a hypothetical query
strategy driven by $\vFcsr(r_i)$, which we will subsequently bound by
the disagreement coefficients. Let us fix the round $i$ and introduce
the shorthand $\widehat{\gamma}(x_i,y) = \widehat{c}_+(x_i, y) -
\widehat{c}_-(x_i,y)$, where $\widehat{c}_+(x_i,y)$ and
$\widehat{c}_-(x_i,y)$ are the approximate maximum and minimum costs
computed in Algorithm~\ref{alg:a_cs} on the $i^{\textrm{th}}$ example,
which we now call $x_i$. Moreover, let $Y_i$ be the set of
non-dominated labels at round $i$ of the algorithm, which in the
pseudocode we call $Y'$. Formally, $Y_i = \{y \mid
\widehat{c}_{-}(x_i,y) \le \min_{y'}
\widehat{c}_{+}(x_i,y')\}$. Finally recall that for a set of vector
regressors $F \subset \Fcal$, we use $\gamma(x,y,F)$ to denote the
cost range for label $y$ on example $x$ witnessed by the regressors in
$F$.
\begin{lemma}
\label{lem:cost_range_translations}
Suppose that the conclusion of Lemma~\ref{lem:version_to_csr}
holds. Then for any example $x_i$ and any label $y$ at round $i$, we
have
\begin{align*}
\widehat{\gamma}(x_i,y) \le \gamma(x_i,y,\vFcsr(r_i)) + \psi_i.
\end{align*}
Further, with $y_i^\star = \argmin_y \vf^\star(x_i;y), \bar{y}_i = \argmin_y \widehat{c}_+(x_i,y)$, and $\tilde{y}_i = \argmin_{y \ne y_i^\star}\widehat{c}_-(x_i,y)$,
\begin{align*}
y \ne y_i^\star \wedge y \in Y_i &\Rightarrow \vf^\star(x_i;y) - \vf^\star(x_i;y_i^\star) \le \gamma(x_i,y,\vFcsr(r_i)) + \gamma(x_i,y_i^\star,\vFcsr(r_i)) + \psi_i/2,\\
|Y_i| > 1 \wedge y_i^\star \in Y_i &\Rightarrow \vf^\star(x_i;\tilde{y}_i) - \vf^\star(x_i;y_i^\star) \le \gamma(x_i,\tilde{y}_i,\vFcsr(r_i)) + \gamma(x_i,y_i^\star,\vFcsr(r_i)) + \psi_i/2.
\end{align*}
\end{lemma}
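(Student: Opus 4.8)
The plan is to reduce all three claims to two ingredients. The first is the one-sided accuracy of the subroutines: run with tolerance $\psi_i/4$, Theorem~\ref{thm:mw_guarantee} gives $\cmax{x_i,\Gcal_i(y)} \le \widehat{c}_+(x_i,y) \le \cmax{x_i,\Gcal_i(y)} + \psi_i/4$ and, by the symmetric guarantee for \mincost, $\cmin{x_i,\Gcal_i(y)} - \psi_i/4 \le \widehat{c}_-(x_i,y) \le \cmin{x_i,\Gcal_i(y)}$. The second is that $\vf^\star \in \vF_i$, so $\fstar{x_i}{y} \in [\cmin{x_i,\Gcal_i(y)}, \cmax{x_i,\Gcal_i(y)}]$ for every $y$; this is a consequence of the deviation bound of Theorem~\ref{thm:deviation}, which is exactly the hypothesis underlying the conclusion of Lemma~\ref{lem:version_to_csr} that we are permitted to assume. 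I will also repeatedly use that $\vF_i$ is the product set $\prod_y \Gcal_i(y)$, so its coordinate-$y$ projection is $\Gcal_i(y)$ and hence $\gamma(x_i,y,\vF_i) = \cmax{x_i,\Gcal_i(y)} - \cmin{x_i,\Gcal_i(y)}$, together with monotonicity of the cost range in the set: since $\vF_i \subset \vFcsr(r_i)$, we have $\gamma(x_i,y,\vF_i) \le \gamma(x_i,y,\vFcsr(r_i))$ for every $y$.

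For the first claim I would simply add the two sandwich bounds: $\widehat{\gamma}(x_i,y) = \widehat{c}_+(x_i,y) - \widehat{c}_-(x_i,y) \le \cmax{x_i,\Gcal_i(y)} - \cmin{x_i,\Gcal_i(y)} + \psi_i/2 = \gamma(x_i,y,\vF_i) + \psi_i/2$, and then pass from $\vF_i$ to $\vFcsr(r_i)$ by monotonicity, which gives the stated bound with room to spare since $\psi_i/2 \le \psi_i$. For the second claim, assuming $y \ne y_i^\star$ and $y \in Y_i$, I would start from $\fstar{x_i}{y} - \fstar{x_i}{y_i^\star} \le \cmax{x_i,\Gcal_i(y)} - \cmin{x_i,\Gcal_i(y_i^\star)}$ (both inequalities coming from $\vf^\star \in \vF_i$) and split the right side as $[\cmax{x_i,\Gcal_i(y)} - \cmin{x_i,\Gcal_i(y)}] + [\cmin{x_i,\Gcal_i(y)} - \cmin{x_i,\Gcal_i(y_i^\star)}]$. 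The first bracket is $\gamma(x_i,y,\vF_i)$. To control the second, I would unfold the domination condition $y \in Y_i$, namely $\widehat{c}_-(x_i,y) \le \min_{y'}\widehat{c}_+(x_i,y') \le \widehat{c}_+(x_i,y_i^\star)$, and substitute the sandwich bounds to obtain $\cmin{x_i,\Gcal_i(y)} \le \cmax{x_i,\Gcal_i(y_i^\star)} + \psi_i/2$; the second bracket is then at most $\gamma(x_i,y_i^\star,\vF_i) + \psi_i/2$, and monotonicity finishes the claim.

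For the third claim the key observation is that $\tilde{y}_i$ is itself non-dominated. Since $|Y_i| > 1$ and $y_i^\star \in Y_i$, there is some $y' \in Y_i$ with $y' \ne y_i^\star$, and by definition of $\tilde{y}_i$ we have $\widehat{c}_-(x_i,\tilde{y}_i) \le \widehat{c}_-(x_i,y') \le \min_{y''}\widehat{c}_+(x_i,y'')$, so $\tilde{y}_i \in Y_i$, while $\tilde{y}_i \ne y_i^\star$ by construction. Hence the third inequality is exactly the second claim instantiated at $y = \tilde{y}_i$.

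I expect no genuine obstacle here; the argument is a short chain of inequalities. The only delicate points are bookkeeping: getting the direction of each one-sided approximation error right so that the two $\psi_i/4$ slacks \emph{accumulate} into $+\psi_i/2$ rather than cancel, and correctly invoking $\vf^\star \in \vF_i$ from the deviation bound rather than from the weaker conclusion of Lemma~\ref{lem:version_to_csr} that is literally quoted as the hypothesis. Once these are pinned down, parts two and three follow from the same domination-plus-membership computation.
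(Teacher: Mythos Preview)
Your proof is correct and follows essentially the same chain of inequalities as the paper; the only cosmetic differences are that the paper handles the third claim by a case split on whether $y_i^\star = \bar{y}_i$ rather than by observing $\tilde{y}_i \in Y_i$, and that the paper works directly in $\vFcsr(r_i)$ and uses the trivially-true $\vf^\star \in \vFcsr(r_i)$, which avoids the concern you flagged about needing $\vf^\star \in \vF_i$ from the deviation bound beyond the literal hypothesis.
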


\subsection{Proof of Theorem~\ref{thm:mw_guarantee}}
The proof is based on expressing the optimization
problem~\eqref{eq:max_cost_opt} as a linear optimization in the space
of distributions over $\Gcal$. Then, we use binary search to
re-formulate this as a series of feasibility problems and apply
Theorem~\ref{thm:mw} to each of these.

Recall that the problem of finding the maximum cost for an $(x,y)$
pair is equivalent to solving the program~\eqref{eq:max_cost_opt} in
terms of the optimal $g$. For the problem~\eqref{eq:max_cost_opt}, we
further notice that since $\Gcal$ is a convex set, we can instead
write the minimization over $g$ as a minimization over $P \in
\Delta(\Gcal)$ without changing the optimum, leading to the modified
problem~\eqref{eq:max_cost_conv}.

Thus we have a linear program in variable $P$, and
Algorithm~\ref{alg:mw_max_cost} turns this into a feasibility problem
by guessing the optimal objective value and refining the guess using
binary search. For each induced feasibility problem, we use MW to
certify feasibility. Let $c \in [0,1]$ be some guessed upper bound on
the objective, and let us first turn to the MW component of the
algorithm. The program in consideration is
\begin{align}
?\exists P \in \Delta(G) \textrm{ s.t. } \EE_{g \sim P}(g(x_i)-1)^2
  \le c \textrm{ and } \forall j \in [i], \EE_{g \sim P}\Rhat_j(g;y)
  \le \tilde{\Delta}_j.
  \label{eqn:mw_feasible}
\end{align}
This is a linear feasibility problem in the infinite dimensional
variable $P$, with $i+1$ constraints. Given a particular set of
weights $\mw$ over the constraints, it is clear that we can use the
regression oracle over $g$ to compute
\begin{align}
  g_\mw = \arg\min_{g \in \Gcal} \mw_0 (g(x_i)-1)^2 + \sum_{j \in [i]} \mw_j
  \EE_{g \sim P}\Rhat_j(g;y).\label{eq:regression-mw}
\end{align}
Observe that solving this simpler program provides one-sided errors.
Specifically, if the objective of~\eqref{eq:regression-mw} evaluated
at $g_\mw$ is larger than $\mw_0 c + \sum_{j \in [i]} \mw_j
\tilde{\Delta}_j$ then there cannot be a feasible solution to
problem~\eqref{eqn:mw_feasible}, since the weights $\mw$ are all
non-negative. On the other hand if $g_\mu$ has small objective value
it does not imply that $g_{\mw}$ is feasible for the original
constraints in~\eqref{eqn:mw_feasible}.

At this point, we would like to invoke the MW algorithm, and
specifically Theorem~\ref{thm:mw}, in order to find a feasible
solution to~\eqref{eqn:mw_feasible} or to certify
infeasibility. Invoking the theorem requires the $\rho_j$ parameters
which specify how badly $g_{\mw}$ might violate the $j^{\textrm{th}}$
constraint. For us, $\rho_j \triangleq \regconst$ suffices since
$\hat{R}_j(g;y) - \hat{R}_j(g_{j,y};y) \in [0,1]$ (since $g_{j,y}$ is
the ERM) and $\Delta_j \le \regconst$. Since $\regconst \ge 2$ this
also suffices for the cost constraint.

If at any iteration, MW detects infeasibility, then our guessed value
$c$ for the objective is too small since no function satisfies both
$(g(x_i)-1)^2 \leq c$ and the empirical risk constraints
in~\eqref{eqn:mw_feasible} simultaneously. In this case, in
Line~\ref{line:mw_infeasible} of Algorithm~\ref{alg:mw_max_cost}, our
binary search procedure increases our guess for $c$.
On the other hand, if we apply MW for $T$ iterations and find a
feasible point in every round, then, while we do not have a point that
is feasible for the original constraints in~\eqref{eqn:mw_feasible},
we will have a distribution $P_T$ such that
\begin{align*}
\EE_{P_T} (g(x_i)-1)^2 \le c + 2\regconst \sqrt{\frac{\log(i+1)}{T}}
\textrm{ and } \forall j \in [i], \EE_{P_T}\Rhat_j(g;y) \le
\tilde{\Delta}_j + 2\regconst\sqrt{\frac{\log(i+1)}{T}}.
\end{align*}
We will set $T$ toward the end of the proof.

If we do find an approximately feasible solution, then we reduce $c$
and proceed with the binary search.
We terminate when $c_h - c_\ell \le \tau^2/2$
and we know that problem~\eqref{eqn:mw_feasible} is approximately
feasible with $c_h$ and infeasible with $c_\ell$. From $c_h$ we will
construct a strictly feasible point, and this will lead to a bound on
the true maximum $c_{+}(x,y,\Gcal_i)$.

Let $\bar{P}$ be the approximately feasible point found when running
MW with the final value of $c_h$. By Jensen's inequality and convexity
of $\Gcal$, there exists a single regressor that is also approximately
feasible, which we denote $\bar{g}$. Observe that $g^\star$ satisfies
all constraints with strict inequality, since
by~\eqref{eq:f_star_slack} we know that $\Rhat_j(g^\star;y) -
\Rhat_j(g_{j,y};y) \le \Delta_j/\regconst< \Delta_j$. We create a
strictly feasible point $g_\zeta$ by mixing $\bar{g}$ with $g^\star$
with proportion $1-\zeta$ and $\zeta$ for
\begin{align*}
\zeta = \frac{4\regconst}{\Delta_i}\sqrt{\frac{\log(i+1)}{T}},
\end{align*}
which will be in $[0,1]$ when we set $T$.
Combining inequalities, we get that for any $j \in [i]$
\begin{align*}
(1-\zeta)\Rhat_j(\bar{g};y) + \zeta \Rhat_j(g^\star;y) &\le \Rhat_j(g_{j,y};y) + (1-\zeta)\left(\Delta_j + 2\regconst\sqrt{\frac{\log(i+1)}{T}}\right) + \zeta\left(\frac{\Delta_j}{\regconst}\right)\\
& \le \Rhat_j(g_{j,y};y) + \Delta_j - \left(\frac{\zeta\Delta_j(\regconst-1)}{\regconst} - 2\regconst\sqrt{\frac{\log(i+1)}{T}}\right)\\
& \le \Rhat_j(g_{j,y};y) + \Delta_j,
\end{align*}
and hence this mixture regressor $g_\zeta$ is exactly feasible. Here
we use that $\regconst \ge 2$ and that $\Delta_i$ is monotonically
decreasing. With the pessimistic choice $g^\star(x_i)=0$, the
objective value for $g_\zeta$ is at most
\begin{align*}
(g_\zeta(x_i) - 1)^2 &\leq (1-\zeta)(\bar{g}(x_i) - 1)^2 + \zeta (g^\star(x_i) - 1)^2 \le
  (1-\zeta)\left(c_h +2\regconst\sqrt{\frac{\log(i+1)}{T}}\right) + \zeta\\
  &\le c_\ell+\tau^2/2 + \left(2\regconst + \frac{4\regconst}{\Delta_i}\right) \sqrt{\frac{\log(i+1)}{T}}.
\end{align*}
Thus $g_\zeta$ is \emph{exactly} feasible and achieves the objective
value above, which provides an upper bound on the maximum cost. On the
other hand $c_\ell$ provides a lower bound. Our setting of $T =
\frac{\log(i+1)(8\regconst^2/\Delta_i)^2}{\tau^4}$ ensures that that
this excess term is at most $\tau^2$, since $\Delta_i \le 1$. Note
that since $\tau \le [0,1]$, this also ensures that $\zeta \in [0,1]$.
With this choice of $T$, we know that $c_\ell \le (\cmax{y} - 1)^2 \le
c_\ell + \tau^2$, which implies that $\cmax{y} \in [1 -
  \sqrt{c_\ell+\tau^2}, 1 - \sqrt{c_\ell}]$. Since
$\sqrt{c_\ell+\tau^2} \le \sqrt{c_\ell} + \tau$, we obtain the
guarantee.

As for the oracle complexity, since we start with $c_\ell=0$ and
$c_h=1$ and terminate when $c_h - c_\ell\le \tau^2/2$, we perform
$\order(\log(1/\tau^2))$ iterations of binary search. Each iteration
requires $T = \order\left(\max\{1,
i^2/\logfactor^2\}\frac{\log(i)}{\tau^4}\right)$ rounds of MW, each of
which requires exactly one oracle call. Hence the oracle complexity is
$\order\left(\max\{1,i^2/\logfactor^2\}\frac{
  \log(i)\log(1/\tau)}{\tau^4}\right)$.\hfill\ifthenelse{\equal{\version}{main}}{\qedsymbol}{\qedhere}


\subsection{Proof of the Generalization Bound}
Recall the central random variable $M_j(g;y)$, defined
in~\eqref{eq:excess_sq_loss}, which is the excess square loss for
function $g$ on label $y$ for the $j^{\textrm{th}}$ example, if we
issued a query.  The idea behind the proof is to first apply
Theorem~\ref{thm:deviation} to argue that all the random variables
$M_j(g;y)$ concentrate uniformly over the function class $\Gcal$. Next
for a vector regressor $f$, we relate the cost-sensitive risk to the
excess square loss via
Lemma~\ref{lem:refined_generalization_bd}. Finally, using the fact
that $g_{i,y}$ minimizes the empirical square loss at round $i$, this
implies a cost-sensitive risk bound for the vector regressor $f_i =
(g_{i,y})$ at round $i$.

First, condition on the high probability event in
Theorem~\ref{thm:deviation}, which ensures that the empirical square
losses concentrate. We first prove that $\vf^\star \in \vF_i$ for all
$i \in [n]$. At round $i$, by~\eqref{eq:expectation_bd}, for each $y$
and for any $g$ we have
\begin{align*}
0 \le \frac{1}{2}\sum_{j=1}^i \EE_jM_j(g;y) \le \sum_{j=1}^i M_j(g;y) + \logfactor.
\end{align*}
The first inequality here follows from the fact that $\EE_jM_j(g;y)$
is a quadratic form by Lemma~\ref{lem:regret_transform}. Expanding
$M_j(g;y)$, this implies that
\begin{align*}
\Rhat_{i+1}(\vf^\star(\cdot;y);y) \le \Rhat_{i+1}(g;y) + \frac{\logfactor}{i}.
\end{align*}
Since this bound applies to all $g \in \Gcal$ it proves that
$\vf^\star \in \vF_{i+1}$ for all $i$, using the definition of
$\Delta_i$ and $\regconst$. Trivially, we know that $\vf^\star \in
\vF_1$. Together with the fact that the losses are in $[0,1]$ and the
definition of $\Delta_i$, the above analysis yields
\begin{align}
\Rhat_{i}(\vf^\star(\cdot;y);y) \le \Rhat_{i}(g;y) + \frac{\Delta_i}{\regconst}. \label{eq:f_star_slack}
\end{align}
This implies that $f^\star(\cdot;y)$ strictly satisfies the
inequalities defining the version space, which we used in the MW
proof.

We next prove that $f_{i+1} \in \vF_j$ for all $j \in [i]$. Fix some
label $y$ and to simplify notation, we drop dependence on $y$.
If $g_{i+1} \notin \Gcal_{t+1}$ for some $t \in \{0,\ldots,i\}$ then,
first observe that we must have $t$ large enough so that $\logfactor/t
\le 1$. In particular, since $\Delta_{t+1} = \regconst\min\{1,
\logfactor/t\}$ and we always have $\df{\hat{R}_{t+1}(g_{i+1};y) \le
\hat{R}_{t+1}(g_{t+1};y) + 1}$ due to boundedness, we do not evict any
functions until $\logfactor/t \le 1$. For $t \ge \logfactor$, we get
\begin{align*}
\sum_{j=1}^tM_j &= t\left(\Rhat_{t+1}(g_{i+1}) - \Rhat_{t+1}(g^\star)\right)\\
& = t\left(\Rhat_{t+1}(g_{i+1}) - \Rhat_{t+1}(g_{t+1}) + \Rhat_{t+1}(g_{t+1}) - \Rhat_{t+1}(g^\star)\right)\\
& \ge \regconst\logfactor - \logfactor = (\kappa-1)\logfactor.
\end{align*}
The inequality uses the radius of the version space and the fact that
by assumption $g_{i+1}\notin \Gcal_{t+1}$, so the excess empirical
risk is at least $\Delta_{t+1}=\regconst\logfactor/t$ since we are
considering large $t$. We also use~\eqref{eq:f_star_slack} on the
second term.
Moreover, we know that since $g_{i+1}$ is the empirical square loss
minimizer for label $y$ after round $i$, we have
$\sum_{j=1}^i\df{M_j(g_{i+1};y)} \le 0$. These two facts together establish
that
\begin{align*}
\sum_{j=t+1}^i M_j(g_{i+1}) \le (1-\regconst)\logfactor.
\end{align*}
However, by Theorem~\ref{thm:deviation} on this intermediary sum, we
know that
\begin{align*}
0 \le \frac{1}{2} \sum_{j=t+1}^i \EE_jM_j(g_{i+1}) \le \sum_{j=t+1}^iM_j(g_{i+1}) + \logfactor < (2-\regconst) \logfactor < 0
\end{align*}
using the definition of $\regconst$. This is a contradiction, so we
must have that $g_{i+1} \in \Gcal_j$ for all $j
\in\{1,\ldots,i\}$. The same argument applies for all $y$ and hence
we can apply Lemma~\ref{lem:refined_generalization_bd} on all rounds
to obtain
\begin{align*}
  & i\big( \EE_{x,c}[c(h_{\vf_{i+1}}(x)) -
    c(h_{\vf^\star}(x))] \big)\\
  &\leq \min_{\zeta > 0} \left\{i \zeta P_\zeta
  + \sum_{j=1}^i\left(\ind{\zeta \le 2\psi_j} 2\psi_j +
  \frac{4\psi_j^2}{\zeta} + \frac{6}{\zeta}\sum_y
  \EE_j\left[M_j(\vf_{i+1};y)\right]\right)\right\}.
\end{align*}
We study the four terms separately.  The first one is straightforward
and contributes $\zeta P_\zeta$ to the instantaneous cost sensitive
regret.  Using our definition of $\psi_j = 1/\sqrt{j}$ the second term
can be bounded as
\begin{align*}
  \sum_{j=1}^i \ind{\zeta < 2\psi_j} 2\psi_j &= \sum_{j=1}^{\lceil
    4/\zeta^2\rceil} \frac{2}{\sqrt{j}} \le 4 \sqrt{\lceil 4/\zeta^2
    \rceil} \le \frac{12}{\zeta}.
\end{align*}
The inequality above, $\sum_{i=1}^n \frac{1}{\sqrt{i}} \le
2\sqrt{n}$, is well known.  For the third term, using our definition of $\psi_j$ gives
\begin{align*}
  \sum_{j=1}^i\frac{4 \psi_j^2}{\zeta} = \frac{4}{\zeta}\sum_{j=1}^i
  \frac{1}{j} \le \frac{4}{\zeta}(1+\log(i)).
\end{align*}
Finally, the fourth term can be bounded
using~\eqref{eq:expectation_bd}, which reveals
\begin{align*}
  \sum_{j=1}^i \EE_j[M_j] \le 2\sum_{j=1}^i M_j + 2\logfactor
\end{align*}
Since for each $y$, $\sum_{j=1}^iM_j(f_{i+1};y) \le 0$ for the empirical
square loss minimizer (which is what we are considering now), we get
\begin{align*}
  \frac{6}{\zeta} \sum_y \sum_{j=1}^i \EE_j[M_j(\vf_{i+1};y)] \le
  \frac{12}{\zeta} K\logfactor.
\end{align*}
And hence, we obtain the generalization bound
\begin{align*}
  \EE_{x,c}[c(x;h_{\vf_{i+1}}(x)) - c(x;h_{\vf^\star}(x))] &\leq
  \min_{\zeta > 0}\left\{ \zeta P_\zeta + \frac{1}{\zeta i}\left(4 \log(i) + 16 +
  12K\logfactor\right)\right\}\\
  & \leq \min_{\zeta > 0}\left\{ \zeta P_\zeta +
  \frac{32K\logfactor}{\zeta i}\right\}.  \tag*{\ifthenelse{\equal{\version}{main}}{\qedsymbol}{\qedhere}}
\end{align*}

\paragraph{Proof of Corollary~\ref{cor:reg_low_noise}.}
Under the Massart noise condition, set $\zeta = \tau$ so that $P_\zeta
= 0$ and we immediately get the result.\hfill\ifthenelse{\equal{\version}{main}}{\qedsymbol}{\qedhere}

\paragraph{Proof of Corollary~\ref{cor:reg_tsybakov}.}
Set $\zeta =
\min\left\{\tau_0,
\left(\frac{32\df{K}\logfactor}{i\beta}\right)^{\frac{1}{\alpha+2}}\right\}$,
so that for $i$ sufficiently large the second term is selected
and we obtain the bound.\hfill\ifthenelse{\equal{\version}{main}}{\qedsymbol}{\qedhere}

\subsection{Proof of the Label Complexity bounds}
The proof for the label complexity bounds is based on first relating
the version space $\Fcal_i$ at round $i$ to the cost-sensitive regret
ball $\vFcsr$ with radius $r_i$. In particular, the containment $\vF_i
\subset \vFcsr(r_i)$ in Lemma~\ref{lem:version_to_csr} implies that
our query strategy is more aggressive than the query strategy induced
by $\vFcsr(r_i)$, except for a small error introduced when computing
the maximum and minimum costs. This error is accounted for by
Lemma~\ref{lem:cost_range_translations}. Since the probability that
$\vFcsr$ will issue a query is intimately related to the disagreement
coefficient, this argument leads to the label complexity bounds for
our algorithm.

\paragraph{Proof of Theorem~\ref{thm:high_noise}.}
Fix some round $i$ with example $x_i$, let $\vF_i$ be the vector regressors used at round
$i$ and let $\Gcal_i(y)$ be the corresponding regressors for label
$y$. Let $\bar{y}_i = \argmin_y \cmaxhat{x_i,y}, y_i^\star =
\argmin_y\vf^\star(x_i;y)$, and $\tilde{y}_i = \argmin_{y \ne
  y_i^\star}\cminhat{x_i,y}$. Assume that
Lemma~\ref{lem:version_to_csr} holds. The $L_2$ label complexity is
\begin{align*}
\sum_y Q_i(y) &= \sum_y \one\{|Y_i| > 1 \wedge y \in Y_i\} \one\{\widehat{\gamma}(x_i,y) > \psi_i\}\\
& \le \sum_y \one\{|Y_i| > 1 \wedge y \in Y_i\} \one\{\gamma(x_i,y,\vFcsr(r_i)) > \psi_i/2\}.
\end{align*}
For the former indicator, observe that $y \in Y_i$ implies that there
exists a vector regressor $f \in \vF_i\subset \vFcsr(r_i)$ such that
$h_f(x_i)=y$. This follows since the domination condition means that
there exists $g \in \Gcal_i(y)$ such that $g(x_i) \le \min_{y'}
\max_{g' \in \Gcal_i(y')} g'(x_i)$. Since we are using a factored
representation, we can take $f$ to use $g$ on the $y^{\textrm{th}}$
coordinate and use the maximizers for all the other
coordinates. Similarly, there exists another regressor $f' \in \vF_i$
such that $h_{f'}(x_i) \ne y$. Thus this indicator can be bounded by
the disagreement coefficient
\begin{align*}
\sum_y Q_i(y) & \le \sum_y \one\{\exists f, f' \in \vFcsr(r_i) \mid h_f(x_i)=y \ne h_{f'}(x_i)\}\one\{\gamma(x_i,y,\vFcsr(r_i)\df{)} > \psi_i/2\}\\
& = \sum_y \one\{x \in \dis(r_i,y)\wedge \gamma(x_i,y,\vFcsr(r_i)\df{)} \ge \psi_i/2\}.
\end{align*}
We will now apply Freedman's inequality on the sequence $\{\sum_y
Q_i(y)\}_{i=1}^n$, which is a martingale with range $K$. Moreover, due
to non-negativity, the conditional variance is at most $K$ times the
conditional mean, and in such cases, Freedman's inequality reveals
that with probability at least $1-\delta$
\begin{align*}
X \le \EE X + 2 \sqrt{R \EE X \log(1/\delta)} + 2R\log(1/\delta) \le 2
\EE X + 3 R\log(1/\delta),
\end{align*}
where $X$ is the non-negative martingale with range $R$ and
expectation $\EE X$. The last step is by the fact that $2\sqrt{ab} \le
a + b$.

For us, Freedman's inequality implies that with probability at least $1-\delta/2$
\begin{align*}
\sum_{i=1}^n \sum_y Q_i(y) &\le 2 \sum_i \EE_i \sum_y Q_i(y) + 3K\log(2/\delta)\\
& \le 2 \sum_i \EE_i \sum_y \one\{x \in \dis(r_i,y)\wedge \gamma(x_i,y,\vFcsr(r_i)\df{)} \ge \psi_i/2\} + 3K\log(2/\delta)\\
& \le 4 \sum_i \frac{r_i}{\psi_i}\theta_2 + 3K\log(2/\delta).
\end{align*}
The last step here uses the definition of the disagreement coefficient
$\theta_2$. To wrap up the proof we just need to upper bound the
sequence, using our choices of $\psi_i = 1/\sqrt{i}$, $r_i =
2\sqrt{44K\Delta_i}$, and $\Delta_i = \regconst\min\{1,
\frac{\logfactor}{i-1}\}$. With simple calculations this is
easily seen to be at most
\begin{align*}
8n\theta_2\sqrt{88 K\regconst\logfactor} + 3K\log(2/\delta).
\end{align*}

Similarly for $L_1$ we can derive the bound
\begin{align*}
L_1 \le \sum_i \ind{\exists y \mid \gamma(x_i,y,\vFcsr(r_i)) \ge \psi_i/2 \wedge x \in \textrm{DIS}(r_i,y)},
\end{align*}
and then apply Freedman's inequality to obtain that
with probability at least $1-\delta/2$
\begin{align*}
L_1 &\le 2\sum_{i=1}^n\frac{2r_i}{\psi_i}\theta_1 + 3\log(2/\delta) \le 8\theta_1n\sqrt{88K\regconst \logfactor} + 3\log(2/\delta).\tag*{\ifthenelse{\equal{\version}{main}}{\qedsymbol}{\qedhere}}
\end{align*}

\paragraph{Proof of Theorem~\ref{thm:massart}.}
Using the same notations as in the bound for the high noise case we first express the $L_2$ label complexity as
\begin{align*}
\sum_y Q_i(y) &= \sum_y \one\{|Y_i|> 1, y \in Y_i\}Q_i(y).
\end{align*}
We need to do two things with the first part of the query indicator,
so we have duplicated it here.  For the second, we will use the
derivation above to relate the query rule to the disagreement
region. For the first, by Lemma~\ref{lem:cost_range_translations}, for
$y \ne y_i^\star$, we can derive the bound
\begin{align*}
& \one\{\vf^\star(x_i;y) - \vf^\star(x_i;y_i^\star) \le \gamma(x_i,y,\vFcsr(r_i)) + \gamma(x_i,y_i^\star, \vFcsr(r_i)) + \psi_i/2\}\\
& \le \one\{\tau- \psi_i/2 \le \gamma(x_i,y,\vFcsr(r_i)) + \gamma(x_i,y_i^\star, \vFcsr(r_i)) \}.
\end{align*}
For $y = y_i^\star$, we get the same bound but with $\tilde{y}_i$,
also by Lemma~\ref{lem:cost_range_translations}. Focusing on just one
of these terms, say where $y \ne y_i^\star$ and any round where $\tau \ge \psi_i$, we get
\begin{align*}
  & \one\{\tau - \psi_i/2 \le \gamma(x_i,y,\vFcsr(r_i)) + \gamma(x_i,y_i^\star,\vFcsr(r_i))\}Q_i(y) \\
  &\le \one\{\tau/2 \le\gamma(x_i,y,\vFcsr(r_i)) + \gamma(x_i,y_i^\star,\vFcsr(r_i))\}Q_i(y)\\
& \le \one\{\tau/4 \le \gamma(x_i,y,\vFcsr(r_i))\}Q_i(y) + \one\{\tau/4 \le \gamma(x_i,y_i^\star,\vFcsr(r_i))\}Q_i(y)\\
& \le D_i(y) + D_i(y_i^\star),
\end{align*}
where for shorthand we have defined $D_i(y) \triangleq \one\{\tau/4 \le
\gamma(x_i,y,\vFcsr(r_i)) \wedge x_i \in \dis(r_i,y)\}$.  The
derivation for the first term is straightforward. We obtain the
disagreement region for $y_i^\star$ since the fact that we query $y$
(i.e. $Q_i(y)$) implies there is $f$ such that $h_f(x_i) = y$, so this
function witnesses disagreement to $y_i^\star$.

The term involving $y_i^\star$ and $\tilde{y}_i$ is bounded in
essentially the same way, since we know that when $|Y_i| > 1$, there
exists two function $f,f' \in \vF_i$ such that $h_f(x_i) =
\tilde{y}_i$ and $h_{f'}(x_i) = y_i^\star$. In total, we can bound the
$L_2$ label complexity at any round $i$ such that $\tau \ge \psi_i$ by
\begin{align*}
L_2 \le D_i(\tilde{y}_i) + D_i(y_i^\star) + \sum_{y \ne y_i^\star}\left(D_i(y) + D_i(y_i^\star)\right) \le KD_i(y_i^\star) + 2\sum_y D_i(y)
\end{align*}
For the earlier rounds, we simply upper bound the label complexity by
$K$.  Since the range of this random variable is at most $3K$, using
Freedman's inequality just as in the high noise case, we get that with
probability at least $1-\delta/2$
\begin{align*}
  L_2 &\le \sum_{i=1}^n K\one\{\tau \le \psi_i\} + 2\sum_{i=2}^n \EE_i \df{\left[KD_i(y_i^\star) + 2\sum_y D_i(y)\right]} + 9K\log(2/\delta)\\
  & \le K \lceil 1/\tau^2 \rceil + \frac{8}{\tau}(K\theta_1 + \theta_2)\sum_{i=2}^nr_i + 9K\log(2/\delta).
\end{align*}
The first line here is the application of Freedman's inequality. In
the second, we evaluate the expectation, which we can relate to the
disagreement coefficients $\theta_1,\theta_2$. Moreover, we use the
setting $\psi_i=1/\sqrt{i}$ to evaluate the first term. As a
technicality, we remove the index $i=1$ from the second summation,
since we are already accounting for queries on the first round in the
first term. The last step is to evaluate the series, for which we use
the definition of $r_i = \min_{\zeta > 0} \left\{\zeta P_\zeta +
44K\Delta_i/\zeta \right\}$ and set $\zeta = \tau$, the Massart noise
level. This gives $r_i = 44K\Delta_i/\tau$. In total, we get
\begin{align*}
L_2 \le K\lceil 1/\tau^2\rceil + \frac{352K\regconst\logfactor}{\tau^2}(K\theta_1 + \theta_2) \log(n) + 9K\log(2/\delta).
\end{align*}
\df{As $\theta_2 \leq K\theta_1$ always, we drop $\theta_2$ from the above expression to obtain the stated bound.}

For $L_1$ we use a very similar argument. First, by
Lemmas~\ref{lem:version_to_csr} and~\ref{lem:cost_range_translations}
\begin{align*}
  L_1 \le \sum_{i=1}^n\one\{|Y_i|>  1\wedge \exists y \in Y_i, \gamma(x_i,y,\vFcsr(r_i)) \ge \psi_i/2\}.
\end{align*}
Again by Lemma~\ref{lem:cost_range_translations}, we know that
\begin{align*}
|Y_i| > 1 \wedge y \in Y_i \Rightarrow \exists f, f' \in \vFcsr(r_i), h_f(x_i) = y \ne h_{f'}(x_i).
\end{align*}
Moreover, one of the two classifiers can be $f^\star$, and so, when $\tau \ge \psi_i$, we can deduce
\begin{align*}
&\Rightarrow \vf^\star(x_i,y) - \vf^\star(x_i,y_i^\star) \le \gamma(x_i,y,\vFcsr(r_i)) + \gamma(x_i,y_i^\star,\vFcsr(r_i)) + \psi_i/2\\
 &\Rightarrow \tau/4 \le \gamma(x_i,y,\vFcsr(r_i)\df{)} \wedge \tau/4 \le \gamma(x_i,y_i^\star,\vFcsr(r_i)\df{)}.
\end{align*}
Combining this argument, we bound the $L_1$ label complexity as
\begin{align}
L_1 \le \sum_{i=1}^n\one\{\tau \le \psi_i\} + \sum_{i=2}^n\one\{\exists y \mid x_i \in \dis(\vFcsr(r_i),\df{y}) \wedge \gamma(x_i,y,\vFcsr(r_i)) \ge \tau/4 \}.\label{eq:massart_l1_bd}
\end{align}
Applying Freedman's inequality just as before gives
\begin{align*}
  L_1 &\le \lceil 1/\tau^2\rceil + 2\sum_{i=2}^n \frac{4 r_i}{\tau}\theta_1 + 2\log(2/\delta)
  \le \lceil 1/\tau^2 \rceil + \frac{352K\regconst\logfactor}{\tau^2}\theta_1\log(n) + 2 \log(2/\delta),
\end{align*}
with probability at least $1-\delta/2$. \hfill\ifthenelse{\equal{\version}{main}}{\qedsymbol}{\qedhere}


\paragraph{Proof of Theorem~\ref{thm:tsybakov}.}
For the Tsybakov case, the same argument as in the Massart case gives that with probability at least $1-\delta/2$
\begin{align*}
L_2 \le K\lceil 1/\tau^2\rceil + \frac{8}{\tau}(K\theta_1 + \theta_2)\sum_{i=2}^nr_i + \df{2}n\beta\tau^\alpha + 9K\log(2/\delta).
\end{align*}
The main difference here is the term scaling with $n\tau^\alpha$ which
arises since we do not have the deterministic bound $\vf^\star(x_i,y)
- \vf^\star(x_i,y_i^\star) \ge \tau$ as we used in the Massart case,
but rather this happens except with probability $\beta\tau^\alpha$
(provided $\tau \le \tau_0$). Now we must optimize $\zeta$ in the
definition of $r_i$ and then $\tau$.

For $\zeta$ the optimal setting is
$(44K\Delta_i/\beta)^{\frac{1}{\alpha+2}}$ which gives $r_i \le
2\df{\beta^{\frac{1}{\alpha+2}}}(44K\Delta_i)^{\frac{\alpha+1}{\alpha+2}}$. Since we want to
set $\zeta \le \tau_0$, this requires $i \ge 1 +
\frac{44K\regconst\logfactor}{\beta\tau_0^{\alpha+2}}$. For these
early rounds we will simply pay $K$ in the label complexity, but this
will be dominated by other higher order terms. For the later rounds,
we get
\begin{align*}
\sum_{i=2}^n r_i \le 2\df{\beta^{\frac{1}{\alpha+2}}}\left(44K\regconst\logfactor\right)^{\frac{\alpha+1}{\alpha+2}}\sum_{i=2}^n \left(\frac{1}{i-1}\right)^{\frac{\alpha+1}{\alpha+2}} \le \df{2(\alpha+2)}\left(44K\regconst\logfactor\right)^{\frac{\alpha+1}{\alpha+2}}(\df{\beta}n)^{\frac{1}{\alpha+2}}.
\end{align*}
\df{This bound uses the integral approximation $\sum_{i=2}^n(i-1)^{-\frac{\alpha+1}{\alpha+2}} \leq 1 + \int_1^{n-1} x^{-\frac{\alpha+1}{\alpha+2}}dx \leq (\alpha+2)n^{\frac{1}{\alpha+2}}$.}
\dfc{This bound uses that for any exponent $z \in (0,1)$ and $i \ge 2,
(i-1)^{-z} \le 2 (i-1)^{1-z} - 2(i-2)^{1-z}$, which telescopes. }
At this point, the terms involving $\tau$ in our bound are
\begin{align*}
K\lceil 1/\tau^2\rceil + \frac{\df{16}}{\tau}(K\theta_1 + \theta_2)\df{(\alpha+2)}\left(44K\regconst\logfactor\right)^{\frac{\alpha+1}{\alpha+2}}(\df{\beta}n)^{\frac{1}{\alpha+2}} + \df{2}n\beta\tau^\alpha.
\end{align*}
We set $\tau =
(\df{8(\alpha+2)}(K\theta_1+\theta_2))^{\frac{1}{\alpha+1}}\left(44K\regconst\logfactor\right)^{\frac{1}{\alpha+2}}(\df{\beta}n)^{\frac{-1}{\alpha+2}}$
by optimizing the second two terms which gives a final bound of
\begin{align*}
L_2 \le \order\left((K\theta_1+\theta_2)^{\frac{\alpha}{\alpha+1}}(K\logfactor)^{\frac{\alpha}{\alpha+2}}n^{\frac{2}{\alpha+2}} + K\log(1/\delta)\right).
\end{align*}
This follows since the $1/\tau^2$ term agrees in the $n$ dependence
and is lower order in other parameters, while the unaccounted for
querying in the early rounds is independent of $n$. The bound of
course requires that $\tau \le \tau_0$, which again requires $n$ large
enough. \df{Note we are treating $\alpha$ and $\beta$ as constants and
  we drop $\theta_2$ from the final statement.}

The $L_1$ bound requires only slightly different
calculations. Following the derivation for the Massart case, we get
\begin{align*}
L_1 &\le \lceil 1/\tau^2\rceil + \frac{8\theta_1}{\tau}\sum_{i=2}^nr_i + \df{2n\beta\tau^{\alpha}} + 2 \log(2/\delta)\\
& \le \lceil 1/\tau^2 \rceil + \frac{\df{16}\theta_1\df{(\alpha+2)}}{\tau}\left(44K\regconst\logfactor\right)^{\frac{\alpha+1}{\alpha+2}}(\df{\beta}n)^{\frac{1}{\alpha+2}} + \df{2n\beta\tau^\alpha} + 2\log(2/\delta),
\end{align*}
not counting the lower order term for the querying in the early
rounds. Here we set $\tau =
(\df{8(\alpha+2)}\theta_1)^{\frac{1}{\alpha+1}}\left(44K\regconst\logfactor\right)^{\frac{1}{\alpha+2}}(\df{\beta}n)^{\frac{-1}{\alpha+2}}$
to obtain
\begin{align*}
L_1 \le \order\left(\theta_1^{\frac{\alpha}{\alpha+1}}(K\logfactor)^{\frac{\alpha}{\alpha+2}}n^{\frac{2}{\alpha+2}} + \log(1/\delta)\right). \tag*{\ifthenelse{\equal{\version}{main}}{\qedsymbol}{\qedhere}}
\end{align*}

\paragraph{\df{Proof of Proposition~\ref{prop:bc_disagreement}: Massart Case.}}
\df{Observe that with Massart noise, we have $\vFcsr(r) \subset
  \vFmin(r/\tau)$, which implies that
\begin{align*}
\EE\one\cbr{\exists y \mid x \in \dis(r,y) \wedge \gamma(x,y,\vFcsr(r))\geq \tau/4} &\leq \EE\one\cbr{x \in \dismin(r/\tau)}\\
& \leq \frac{r}{\tau} \sup_{r > 0} \frac{1}{r} \EE \one\cbr{x \in \dismin(r)}.
\end{align*}
Thus we may replace $\theta_1$ with $\theta_0$ in the proof of the
$L_1$ label complexity bound above.  \hfill\ifthenelse{\equal{\version}{main}}{\qedsymbol}{\qedhere}}

\paragraph{\df{Proof of Proposition~\ref{prop:bc_disagreement}: Tsybakov Case.}}
\df{The proof is identical to Theorem~\ref{thm:tsybakov} except we
  must introduce the alternative coefficient $\theta_0$. To do so,
  define $\Xcal(\tau) \triangleq \{x \in \Xcal: \min_{y \ne y^\star(x)}
  f^\star(x,y) - f^\star(x,y^\star(x)) \geq \tau \}$, and note that
  under the Tsybakov noise condition, we have $\PP[x \notin
    \Xcal(\tau)] \leq \beta \tau^\alpha$ for all $0 \leq \tau \leq
  \tau_0$. For such a value of $\tau$ we have
\begin{align*}
\PP[h_f(x) \ne h_{f^\star}(x)] &\leq \EE\one\{x \in \Xcal(\tau) \wedge h_f(x) \ne h_{f^\star}(x)\} + \PP[x \notin \Xcal(\tau)]\\
& \leq \frac{1}{\tau} \EE c(h_f(x)) - c(h_{f^\star}(x)) + \beta\tau^\alpha.
\end{align*}
We use this fact to prove that $\vFcsr(r) \subset \vFmin(2r/\tau)$ for
$\tau$ sufficiently small. This can be seen from above by noting that
if $f \in \vFcsr(r)$ then the right hand side is $\frac{r}{\tau} +
\beta\tau^\alpha$, and if $\tau \leq (r/\beta)^{\frac{1}{1+\alpha}}$
the containment holds. Therefore, we have
\begin{align*}
\EE\one\cbr{\exists y \mid x \in \dis(r,y) \wedge \gamma(x,y,\vFcsr(r))\geq \tau/4} & \leq \EE\one\cbr{\exists y \mid x \in \dis(r,y)}\\
& \leq \EE\one\cbr{x \in \dismin(2r/\tau)}\\
& \leq \frac{2r}{\tau} \cdot \sup_{r>0} \EE\one\cbr{x \in \dismin(r)}.
\end{align*}
Thus, provided $\tau \leq (r_n/\beta)^{\frac{1}{\alpha+1}}$, we
can replace $\theta_1$ with $\theta_0$ in the above argument. This
gives
\begin{align*}
L_1 &\leq \lceil 1/\tau^2 \rceil + \frac{4\theta_0}{\tau}\sum_{i=2}^nr_i + 2n\beta\tau^\alpha + 2 \log(2/\delta)\\
& \leq \lceil 1/\tau^2 \rceil + \frac{8\theta_0(\alpha+2)}{\tau}\left(44K\regconst\logfactor\right)^{\frac{\alpha+1}{\alpha+2}}(\beta n)^{\frac{1}{\alpha+2}} + 2n\beta\tau^\alpha + 2 \log(2/\delta).
\end{align*}
As above, we have taken $r_i=
2\beta^{\frac{1}{\alpha+2}}(44K\Delta_i)^{\frac{\alpha+1}{\alpha+2}}$
and approximated the sum by an integral. Since $\Delta_n =
\regconst\logfactor/(n-1)$, we can set $\tau =
2^{\frac{\alpha+1}{\alpha+2}}(44K\regconst\logfactor)^{\frac{1}{\alpha+2}}(\beta
n)^{-\frac{1}{\alpha+2}}$. This is a similar choice to what we used in
the proof of Theorem~\ref{thm:tsybakov} except that we are not
incorporating $\theta_0$ into the choice of $\tau$, and it yields a
final bound of $\order\rbr{\theta_0
  n^{\frac{2}{\alpha+2}}\nu_n^{\frac{\alpha}{\alpha+2}} +
  \log(1/\delta)}$.\hfill\ifthenelse{\equal{\version}{main}}{\qedsymbol}{\qedhere}}

\paragraph{\df{Proof of Proposition~\ref{prop:bc_dis_high_noise}.}}
\df{First we relate $\theta_1$ to $\theta_0$ in the multiclass
case. For $f \in \vFcsr(r)$, we have
\begin{align*}
\PP[h_f(x) \ne h_{f^\star}(x)] &\leq \PP[h_f(x) \ne y] +
\PP[h_{f^\star}(x) \ne y] = \EE c(h_f(x)) - c(h_{f^\star}(x))\\ & \leq
2\textrm{error}(h_{f^\star}) + r
\end{align*}
Therefore for any $r > 0$ and any $x$,
\begin{align*}
\one\{ \exists y \mid x \in \dis(r,y) \wedge \gamma(x,y,\vFcsr(r)) \geq \psi/2\} \leq \one\{x \in \dismin(r_i+2\textrm{error}(h_{f^\star}))\}.
\end{align*}
Applying this argument in the $L_1$ derivation above, we obtain
\begin{align*}
L_1 &\leq  2\sum_i \EE_i \one\{x \in \dismin(r_i+2\textrm{error}(h_{f^\star}))\} + 3\log(2/\delta)\\
& \leq 2\sum_i (r_i + 2\textrm{error}(h_{f^\star})) \theta_0 + 3\log(2/\delta).
\end{align*}
We now bound $r_i$ via Lemma~\ref{lem:version_to_csr}. In multiclass
classification, the fact that $c = \one - e_y$ for some $y$ implies
that $f^\star(x,y)$ is one minus the probability that the true label
is $y$. Thus $f^\star(x,y) \in [0,1]$, $\sum_{y} f^\star(x,y) = K-1$,
and for any $x$ we always have
\begin{align*}
\min_{y \ne y^\star(x)} f^\star(x,y) - f^\star(x,y^\star(x)) \geq 1 - 2f^\star(x,y^\star(x)).
\end{align*}
Hence, we may bound $P_\zeta$, for $\zeta \leq 1/2$, as follows
\begin{align*}
P_\zeta &= \PP_{x \sim \Dcal}\left[\min_{y \ne y^\star(x)} f^\star(x,y) - f^\star(x,y^\star(x)) \leq \zeta\right]
\leq \PP_{x \sim \Dcal}\left[1 - 2f^\star(x,y^\star(x)) \leq \zeta\right]\\
& \leq \EE_x f^\star(x,y^\star(x)) \cdot \frac{2}{1-\zeta} \leq 4 \textrm{error}(h_{f^\star}).
\end{align*}
Now apply Lemma~\ref{lem:version_to_csr} with $\zeta = \min\{1/2,
\sqrt{44K\Delta_i/\textrm{error}(h_{f^\star})}\}$, and we obtain
\begin{align*}
r_i \leq \sqrt{44K\Delta_i \textrm{error}(h_{f^\star})} + 264K\Delta_i.
\end{align*}
Using the definition of $\Delta_i$ the final $L_1$ label complexity bound is
\begin{align*}
L_1 \leq 4\theta_0 n \cdot \textrm{error}(h_{f^\star}) + \order\rbr{ \theta_0 \rbr{\sqrt{Kn\regconst\logfactor \cdot \textrm{error}(h_{f^\star})} + K\regconst\logfactor\log(n)} + \log(1/\delta)}.\tag*{\ifthenelse{\equal{\version}{main}}{\qedsymbol}{\qedhere}}
\end{align*}
 }

\section{Discussion}
This paper presents a new active learning algorithm for cost-sensitive
multiclass classification. The algorithm enjoys strong theoretical
guarantees on running time, generalization error, and label
complexity. The main algorithmic innovation is a new way to compute
the maximum and minimum costs predicted by a regression function in
the version space. We also design an online algorithm inspired by \df{our}
theoretical analysis that outperforms passive baselines both in CSMC
and structured prediction.

On a technical level, our algorithm uses a square loss oracle to
search the version space and drive the query strategy. This contrasts
with many recent results using argmax or 0/1-loss minimization oracles
for information acquisition problems like contextual
bandits~\citep{agarwal2014taming}. As these involve NP-hard
optimizations in general, an intriguing question is whether we can use
a square loss oracle for other information acquisition problems. We hope to answer
this question in future work.

\section*{Acknowledgements}
Part of this research was completed while TKH was at Microsoft
Research and AK was at University of Massachusetts, Amherst. AK thanks
Chicheng Zhang for insightful conversations. AK is supported in part
by NSF Award IIS-1763618.

\newpage

\appendix

\section{Proof of Theorem~\ref{thm:deviation}}
\label{app:deviation}

In the proof, we mostly work with the empirical $\ell_1$ covering
number for $\Gcal$. At the end, we translate to pseudo-dimension using
a lemma of Haussler.
\begin{definition}[Covering numbers]
Given class $\Gcal \subset \Xcal \rightarrow \RR$, $\alpha >
0$, and sample $X = (x_1,\ldots,x_n) \in \Xcal^n$, the covering
number $\Ncal_1(\alpha,\Hcal,X)$ is the minimum cardinality of a set
$V \subset \RR^n$ such that for any $g \in \Gcal$, there exists a $(v_1,\ldots,v_n) \in V$ with $ \frac{1}{n}\sum_{i=1}^n|g(x_i) -
v_i| \le \alpha$.
\end{definition}
\begin{lemma}[Covering number and Pseudo-dimension~\citep{haussler1995sphere}]
\label{lem:haussler_covering}
Given a hypothesis class $\Gcal \subset \Xcal \to \RR$ with
$\pdim(\Gcal) \le d$, for any $X \in \Xcal^n$ we have
\begin{align*}
\Ncal_1(\alpha,\Gcal,X) \le e(d+1)\left(\frac{2e}{\alpha}\right)^d.
\end{align*}
\end{lemma}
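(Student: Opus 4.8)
The plan is to prove this classical sphere-packing bound of \citet{haussler1995sphere} by reducing the $\ell_1$ covering number to an $\ell_1$ packing number and then binarizing the problem, so that the pseudo-dimension hypothesis becomes an ordinary VC-dimension hypothesis. First I would recall that a maximal $\alpha$-separated subset of $\Gcal$ in the empirical metric $d_X(g,g') \triangleq \tfrac{1}{n}\sum_{i=1}^n |g(x_i)-g'(x_i)|$ is automatically an $\alpha$-cover, so it suffices to upper bound the packing number $\mathcal{M}_1(\alpha,\Gcal,X)$, i.e. the largest $N$ for which there exist $g_1,\dots,g_N \in \Gcal$ with $d_X(g_a,g_b) > \alpha$ for all $a\ne b$.

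The key reduction is a randomized-threshold argument that converts $\ell_1$ distance into Hamming distance. Drawing one independent uniform threshold $t_i \in [0,1]$ per sample point and mapping each $g$ to the binary vector $\beta(g) \triangleq (\one\{g(x_i) > t_i\})_{i=1}^n$, a one-line computation gives $\EE_t\big[ \tfrac{1}{n} d_H(\beta(g_a),\beta(g_b)) \big] = d_X(g_a,g_b)$, since the two indicators for coordinate $i$ disagree exactly when $t_i$ falls between $g_a(x_i)$ and $g_b(x_i)$. Hence in expectation the binarized packing is $\alpha n$-Hamming-separated. Simultaneously, for any fixed threshold vector the number of distinct images $\beta(g)$ equals the number of sign patterns realized by the threshold class $\Hcal^+$ of Definition~\ref{def:pdim} on the $n$ pairs $(x_i,t_i)$, and since $\vcdim(\Hcal^+) = \pdim(\Gcal) = d$, Sauer--Shelah bounds this by $\sum_{k=0}^d \binom{n}{k} \le (en/d)^d$. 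So the images form a subset of the cube of VC dimension at most $d$ that is, in expectation, Hamming-separated at scale $\alpha n$.

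The crux is then the purely combinatorial sphere-packing inequality: any $T \subseteq \{0,1\}^n$ with $\vcdim(T) \le d$ whose points are pairwise Hamming-separated by more than $\alpha n$ satisfies $|T| \le e(d+1)(2e/\alpha)^d$. I would prove this following Haussler via the one-inclusion graph on $T$: the Haussler--Littlestone--Warmuth shifting argument shows this graph has average degree at most $d$, and a counting argument comparing its edge count against the number of close pairs forced by a small packing radius yields the stated bound with its sharp constant. Combining this combinatorial estimate with the randomized-threshold reduction --- selecting a threshold realization under which enough pairs remain separated, or equivalently arguing in expectation --- transfers the bound back to the real-valued packing $N$, and the packing-to-covering comparison finally gives $\Ncal_1(\alpha,\Gcal,X) \le e(d+1)(2e/\alpha)^d$.

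The main obstacle is exactly this last combinatorial step: the reductions in the first two paragraphs are routine, but obtaining the clean form $e(d+1)(2e/\alpha)^d$ --- rather than a loose $(C/\alpha)^d$ coming from a crude Sauer--Shelah-plus-union-bound argument --- requires Haussler's sharp one-inclusion-graph density estimate together with a careful balancing of the separation radius against the VC growth function. Since the paper only invokes this relationship as a black box, in practice I would simply cite~\citet{haussler1995sphere}, but the sketch above is the route I would take to reprove it.
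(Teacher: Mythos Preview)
The paper does not prove this lemma: it is stated with attribution to \citet{haussler1995sphere} and invoked purely as a black box in the proof of Theorem~\ref{thm:deviation}, exactly as you anticipate in your final sentence. Your sketch therefore already goes beyond what the paper provides, and the ingredients you name --- packing-to-covering, a binarization step that turns the pseudo-dimension hypothesis into a VC-dimension hypothesis, and the one-inclusion-graph density bound for the sharp combinatorial core --- are the right ones. One remark on the reduction step: Haussler's own argument does not proceed through the \emph{randomized} threshold map you describe but rather discretizes the range deterministically into $\{0,1,\ldots,\lfloor 1/\alpha\rfloor\}$ and runs the density argument directly on the resulting finite-alphabet class. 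The randomized-threshold route gives the per-pair identity $\EE_t[\tfrac{1}{n}d_H(\beta(g_a),\beta(g_b))]=d_X(g_a,g_b)$ cleanly, but the passage you gloss as ``selecting a threshold realization under which enough pairs remain separated, or equivalently arguing in expectation'' is genuinely delicate --- a naive union bound over the $\binom{N}{2}$ pairs is circular because $N$ is the quantity being bounded --- and recovering the exact constant $e(d+1)(2e/\alpha)^d$ along that route requires additional care beyond what you indicate. Since you explicitly flag this step as the main obstacle and ultimately defer to the citation, there is no practical gap, but if you wanted a self-contained proof the deterministic discretization is the cleaner path.
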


Fixing $i$ and $y$, and working toward~\eqref{eq:sample_bd} we
seek to bound
\begin{align}
\PP\left(\sup_{g \in \Gcal} \sum_{j=1}^i M_j(g;y) - \frac{3}{2} \EE_j[M_j(g;y)] \ge \tau \right).
\label{eq:martingale_supremum}
\end{align}
The bound on the other tail is similar. In this section, we sometimes
treat the query rule as a function which maps an $x$ to a query
decision. We use the notation $\Qf_j: \Xcal \rightarrow \{0,1\}$ to
denote the query function used after seeing the first $j-1$
examples. Thus, our query indicator $Q_j$ is simply the instantiation
$\Qf_j(x_j)$. In this section, we work with an individual label $y$
and omit the explicit dependence in all our arguments and
notation. For notational convenience, we use $z_j = (x_j,c_j,Q_j)$ and
with $g^\star(\cdot) = f^\star(\cdot;y)$, we define
\begin{equation}
  \xi_j = g^\star(x_j) - c_j \quad \mbox{and} \quad \ell(g,x) = (g(x)
  - g^\star(x)).
  \label{eqn:xi_ell}
\end{equation}
Note that $\xi_j$ is a centered random variable, independent of
everything else. We now introduce some standard concepts from
martingale theory for the proof of 
Theorem~\ref{thm:deviation}. 

\begin{definition}[Tangent sequence]
For a dependent sequence $z_1,\ldots,z_i$ we use $z_1',\ldots,z_i'$ to
denote a \emph{tangent sequence}, where $z_j' | z_{1:j-1}
\overset{d}{=} z_j | z_{1:j-1}$, and, conditioned on $z_1,\ldots,z_i$, the random variables $z_1',\ldots,z_i'$ are independent.
\label{def:tangent}
\end{definition}

In fact, in our case we have $z_j' = (x_j',c_j', Q_j')$ where
$(x_j',c_j') \sim \Dcal$ and $Q_j': \Xcal \rightarrow \{0,1\}$ is
identical to $Q_j$. We use $M_j'(g)$ to denote the empirical excess
square loss on sample $z_j'$. Note that we will continue to use our
previous notation of $\EE_j$ to denote conditioning on
$z_1,\ldots, z_{j-1}$. We next introduce one more random process, and
then proceed with the proof.

\begin{definition}[Tree process]
  A \emph{tree process} $\Qtree$ is a binary tree of depth $i$ where
  each node is decorated with a value from $\{0,1\}$. For a Rademacher
  sequence $\epsilon \in \{-1,1\}^i$ we use $\Qtree_i(\epsilon)$ to denote
  the value at the node reached when applying the actions
  $\epsilon_1,\ldots,\epsilon_{i-1}$ from the root, where $+1$ denotes
  left and $-1$ denotes right.
  \label{def:tree}
\end{definition}

The proof follows a fairly standard recipe for proving uniform
convergence bounds, but has many steps that all require minor
modifications from standard arguments. We compartmentalize each step
in various lemmata:
\begin{enumerate}
\item In Lemma~\ref{lem:ghost}, we introduce a ghost sample and
  replace the conditional expectation $\EE_j[\cdot]$
  in~\eqref{eq:martingale_supremum} with an empirical term evaluated
  on an \emph{tangent sequence}.
\item In Lemma~\ref{lem:symmetrization}, we perform symmetrization and
  introduce Rademacher random variables and the associated tree
    process.
\item In Lemma~\ref{lem:finite_class} we control the symmetrized
  process for finite $\Gcal$.
\item In Lemma~\ref{lem:discretization}, we use the covering number to
  discretize $\Gcal$. 
\end{enumerate}

We now state and prove the intermediate results.
\begin{lemma}[Ghost sample]
\label{lem:ghost}
Let $Z = (z_1,\ldots,z_i)$ be the sequence of $(x,c,Q)$ triples and
let $Z' = (z_1',\ldots,z_i')$ be a tangent sequence. Then for $\beta_0
\ge \beta_1 > 0$ if $\tau \ge \frac{4(1+\beta_1)^2}{(\beta_0 -
  \beta_1)}$, then
\begin{align*}
\PP_{Z}& \left[\sup_{g \in \Gcal} \sum_{j=1}^iM_j(g) -
  (1+2\beta_0)\EE_jM_j(g) \ge \tau \right] \\ 
&\le 2\PP_{Z,Z'}\left[\sup_{g \in \Gcal} \sum_{j=1}^iM_j(g) - M_j'(g)
  - 2\beta_1Q_j(x_j')\ell^2(g,x_j') \ge \frac{\tau}{2}\right],\\ 
\PP_{Z}& \left[\sup_{g \in \Gcal} \sum_{j=1}^i(1-2\beta_0)\EE_jM_j(g)
  - M_j(g) \ge \tau \right] \\ 
&\le 2\PP_{Z,Z'}\left[\sup_{g \in \Gcal} \sum_{j=1}^iM_j'(g) - M_j(g)
  - 2\beta_1Q_j(x_j')\ell^2(g,x_j') \ge \frac{\tau}{2}\right]. 
\end{align*}
\end{lemma}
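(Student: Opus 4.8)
The plan is a ghost-sample (symmetrization) argument tailored to the self-bounding offset term $2\beta_1 Q_j(x_j')\ell^2(g,x_j')$ that later produces fast rates. I treat the first inequality; the second is identical after exchanging the roles of the real and tangent samples. Throughout write $m_j(g) \triangleq \EE_j M_j(g)$ and $S(g) \triangleq \sum_{j=1}^i m_j(g)$, and recall from Lemma~\ref{lem:regret_transform} that $m_j(g) = \EE_j[Q_j \ell^2(g,x_j)]$, so that on a fresh draw both $M_j'(g)$ and $Q_j(x_j')\ell^2(g,x_j')$ are unbiased for $m_j(g)$.

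Step one is to introduce a witness. Conditioned on $Z$, let $g_0 = g_0(Z)$ be a function for which $\sum_j M_j(g_0) - (1+2\beta_0)S(g_0) \ge \tau$ holds whenever some such function exists (arbitrary otherwise); this is $Z$-measurable. A tangent rather than i.i.d.\ ghost sample is needed precisely because $g_0$ may depend on all of $Z$: since $z_j'$ is drawn from $\PP(\cdot\mid z_{1:j-1})$ independently given $Z$, freezing $g_0$ under the conditional law still yields $\EE[M_j'(g_0)\mid Z] = m_j(g_0)$ and a variance that depends only on $m_j(g_0)$. On the left event $\sum_j M_j(g_0) \ge \tau + (1+2\beta_0)S(g_0)$, so the tangent event is implied once
\begin{align*}
W \triangleq \sum_{j=1}^i \Big( M_j'(g_0) + 2\beta_1 Q_j(x_j')\ell^2(g_0,x_j')\Big) \le (1+2\beta_0)S(g_0) + \frac{\tau}{2}.
\end{align*}

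Step two---the crux---is to show $\PP_{Z'\mid Z}[W \le (1+2\beta_0)S(g_0) + \tau/2] \ge 1/2$ on the left event. Conditional independence gives $\EE[W\mid Z] = (1+2\beta_1)S(g_0)$, so the threshold lies a distance $t \triangleq 2(\beta_0-\beta_1)S(g_0) + \tau/2 \ge 0$ above the mean. Writing $M_j' + 2\beta_1 Q_j\ell^2 = Q_j[(1+2\beta_1)\ell^2 + 2\xi_j'\ell]$ and using that $\xi_j'$ is centred given $x_j'$ with $\xi_j'^2 \le 1$ and $\ell^2 \le 1$ (the cross term vanishes), a single-coordinate second-moment computation yields $\Var[W\mid Z] \le ((1+2\beta_1)^2+4)S(g_0)$. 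Cantelli's inequality then delivers the $1/2$ once $t^2 \ge ((1+2\beta_1)^2+4)S(g_0)$; since AM-GM gives $t^2 \ge 4(\beta_0-\beta_1)\tau\,S(g_0)$, the hypothesis $\tau \ge 4(1+\beta_1)^2/(\beta_0-\beta_1)$ (which dominates $((1+2\beta_1)^2+4)/(4(\beta_0-\beta_1))$) suffices uniformly in $S(g_0)$.

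Step three assembles the bound: evaluating the tangent-sample supremum at $g_0$ and integrating over $Z$,
\begin{align*}
\PP_{Z,Z'}\Big[\sup_g (\cdots) \ge \tfrac{\tau}{2}\Big] \ge \EE_Z\Big[\one\{\text{left event}\}\,\PP_{Z'\mid Z}(W \le \cdots)\Big] \ge \tfrac{1}{2}\PP_Z[\text{left event}],
\end{align*}
which rearranges to the stated factor of $2$. I expect the variance bookkeeping in step two to be the main obstacle: one must confirm that the injected term $2\beta_1 Q_j\ell^2$ keeps the variance at order $S(g_0)$ (preserving self-boundedness) while shifting the mean just enough that the gap $2(\beta_0-\beta_1)S(g_0)$ pairs with $\tau/2$ via AM-GM to dominate that variance---this is exactly where the quantitative relation between $\beta_0,\beta_1$ and $\tau$ is forced.
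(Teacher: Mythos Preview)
Your proposal is correct and follows essentially the same route as the paper: pick a $Z$-measurable witness for the left-hand event, lower bound the right-hand probability by $\EE_Z[\one\{E\}\cdot\PP_{Z'\mid Z}(E')]$, and use a one-sided second-moment inequality together with the self-bounding variance $\Var[W\mid Z] \lesssim S(g_0)$ and the cross-term/AM--GM estimate on $(\tau/2 + 2(\beta_0-\beta_1)S)^2$ to get $\PP_{Z'\mid Z}(E')\ge 1/2$ under the hypothesis on $\tau$. The only cosmetic differences are that the paper invokes Chebyshev rather than Cantelli and obtains the variance constant $4(1+\beta_1)^2$ instead of your $(1+2\beta_1)^2+4$---both constants are dominated by the stated threshold on $\tau$, so the arguments coincide.
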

\begin{proof}
We derive the first inequality, beginning with the right hand side and
working toward a lower bound. The main idea is to condition on $Z$ and
just work with the randomness in $Z'$. To this end, let $\hat{g}$
achieve the supremum on the left hand side, and define the events
\begin{align*}
  E  &= \left\{\sum_{j=1}^i M_j(\hat{g}) -
  (1+2\beta_0)\EE_jM_j(\hat{g}) \ge \tau\right\}\\ 
  E' &= \left\{ \sum_{i=1}^n M_j'(\hat{g}) +
  2\beta_1Q_j(x_j')\ell^2(\hat{g},x_j') -
  (1+2\beta_0)\EE_jM_j'(\hat{g}) \le \tau/2\right\}. 
\end{align*}
Starting from the right hand side, by adding and subtracting
$(1+2\beta_0)\EE_jM_j(\hat{g})$ we get 
\begin{align*} 
& \PP_{Z,Z'}\left[\sup_{g \in \Gcal} \sum_{j=1}^i M_j(g) - M_j'(g) -
    2\beta_1Q_j(x_j')\ell^2(g,x_j') \ge \tau/2\right]\\ 
& \ge \PP_{Z,Z'}\left[ \sum_{j=1}^i M_j(\hat{g}) - M_j'(\hat{g}) -
    2\beta_1Q_j(x_j')\ell^2(\hat{g},x_j') \ge \tau/2\right]\\ 
& \ge \PP_{Z,Z'}\left[E \bigcap E'\right] = \EE_{Z}\left[\one\{E\}
    \times \PP_{Z'}[E'| Z]\right]. 
\end{align*}
Since we have defined $\hat{g}$ to achieve the supremum, we know that
\begin{align*}
\EE_Z\one\{E\} = \PP_Z\left[\sup_{g \in \Gcal} \sum_{j=1}^i M_j(g) -
  (1+2\beta_0)\EE_jM_j(g) \ge \tau\right]. 
\end{align*}
which is precisely the left hand side of the desired inequality. Hence
we need to bound the $\PP_{Z'}[E'|Z]$ term. For this term, we note that 
\begin{align*}
  \PP_{Z'}[E'|Z] &= \PP_{Z'}\left[ \sum_{i=1}^n M_j'(\hat{g}) +
  2\beta_1Q_j(x_j')\ell^2(\hat{g},x_j') -
  (1+2\beta_0)\EE_jM_j'(\hat{g}) \le \tau/2~|~Z\right]\\ 
  &\stackrel{(a)}{=} \PP_{Z'}\left[ \sum_{i=1}^n M_j'(\hat{g}) +
  2\beta_1\EE_jM_j'(\hat{g}) - (1+2\beta_0)\EE_jM_j'(\hat{g}) \le
  \tau/2~|~Z\right]\\ 
  &= \PP_{Z'}\left[ \sum_{i=1}^n \left(M_j'(\hat{g}) -
  \EE_jM_j'(\hat{g})\right) + 2\left(\beta_1 - \beta_0)
  \EE_jM_j'(\hat{g})\right)  \le \tau/2~|~Z\right]. 
\end{align*}
Here, $(a)$ follows since $\EE_j[M'_j(g)~|~Z] =
Q_j(x'_j)\ell^2(g,x'_j)$ for any $g$ by
Lemma~\ref{lem:regret_transform}. Since we are conditioning on $Z$,
$\hat{g}$ is also not a random function and the same equality holds
when we take expectation over $Z'$, even for $\hat{g}$. With this, we
can now invoke Chebyshev's inequality: 
\begin{align*}
\PP_{Z'}[E' | Z ] &= 1 - \PP_{Z'}\left[\sum_{j=1}^i M_j'(\hat{g}) +
  2\beta_1Q_j(x_j')\ell^2(\hat{g},x_j') -
  (1+2\beta_0)\EE_jM_j'(\hat{g}) \ge \tau/2 \middle| Z\right]\\ 
& \ge 1 - \frac{\Var\left[\sum_{j=1}^i M_j'(\hat{g}) +
    2\beta_1Q_j(x_j')\ell(\hat{g},x_j')\middle| Z\right]}{\left(\tau/2
  + 2(\beta_0 -
  \beta_1)\sum_i\EE_j[Q_j(x_j')\ell^2(\hat{g},x_j')\middle|
    Z]\right)^2}. 
\end{align*}
Since we are working conditional on $Z$, we can leverage the
independence of $Z'$ (recall Definition~\ref{def:tangent}) to bound
the variance term.
\begin{align*}
  & \Var\left[\sum_{j=1}^i M_j'(g) + 2\beta_1Q_j(x_j')\ell^2(\hat{g},x_j')\middle| Z\right]\\
  &\le \sum_{j=1}^i \EE_j\left[M_j'(\hat{g})^2 + (2\beta_1)^2Q_j(x_j')\ell^4(\hat{g},x_j') + 4\beta_1 M_j'(\hat{g})Q_j(x_j')\ell^2(\hat{g},x_j')\right]\\
  & \le 4(1+\beta_1)^2\sum_{j=1}^i\EE_j Q_j(x_j')\ell^2(\hat{g},x_j').
\end{align*}
Here we use that $\Var[X] \le \EE[X^2]$ and then we use that $\ell^2
\le \ell$ since the loss is bounded in $[0,1]$ along with
Lemma~\ref{lem:regret_transform}.  Returning to the application of
Chebyshev's inequality, if we expand the quadratic in the denominator
and drop all but the cross term, we get the bound
\begin{align*}
\PP_Z'[E'|Z] \ge 1- \frac{2(1+\beta_1)^2}{\tau (\beta_0 - \beta_1)} \ge 1/2,
\end{align*}
where the last step uses the requirement on $\tau$. This
establishes the first inequality.

For the second inequality the steps are nearly identical. Let
$\hat{g}$ achieve the supremum on the left hand side and define
\begin{align*}
E &= \left\{\sum_{j=1}^i(1-2\beta_0)\EE_jM_j(\hat{g}) - M_j(\hat{g}) \ge \tau\right\}\\
E' &= \left\{\sum_{j=1}^i(1-2\beta_0)\EE_jM'_j(\hat{g}) - M'_j(\hat{g}) + 2\beta_1 Q_j(x_j')\ell^2(\hat{g},x_j') \le \tau/2\right\}.
\end{align*}
Using the same argument, we can lower bound the right hand side by
\begin{align*}
\PP\left[\sup_{g \in \Gcal} \sum_{j=1}^i M_j'(g) - M_j(g) - 2\beta_1 Q_j(x_j')\ell^2(g,x_j') \ge \tau/2\right] \ge \EE_Z\left[\one\{E\} \times \PP_{Z'}\left[E' \middle| Z\right]\right]
\end{align*}
Applying Chebyshev's inequality yields the
same expression as for the other tail.
\end{proof}

\begin{lemma}[Symmetrization]
\label{lem:symmetrization}
Using the same notation as in Lemma~\ref{lem:ghost}, we have
\begin{align*}
  & \PP_{Z,Z'}\left[\sup_{g \in \Gcal} \sum_{j=1}^i M_j(g) - M_j'(g) - 2\beta_1Q_j(x_j')\ell^2(g,x_j') \ge \tau\right]\\
  & \le 2 \EE \sup_{\Qtree}\PP_{\epsilon}\left[\sup_{g \in \Gcal}\sum_{j=1}^i\epsilon_j\Qtree_j(\epsilon)((1+\beta_1)\ell(g,x_j)^2 + 2\xi_j\ell(g,x_j)) - \beta_1\Qtree_j(\epsilon)\ell(g,x_j)^2 \ge \tau/2 \right].
\end{align*}
the same bound holds on the lower tail with $(1-\beta_1)$ replacing
$(1+\beta_1)$.
\end{lemma}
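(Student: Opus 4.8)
The plan is to run the classical symmetrization recipe, but with two modifications forced by the structure of $M_j$: an algebraic split of the excess square loss into a ``square'' part and a ``cross'' part, and a careful treatment of the query rule, which is the only source of inter-round dependence. First I would expand the square to write, with $\ell(g,x)=g(x)-g^\star(x)$ and $\xi_j=g^\star(x_j)-c_j$,
\[
M_j(g) = Q_j\big[\ell(g,x_j)^2 + 2\xi_j\,\ell(g,x_j)\big],
\]
and likewise $M_j'(g)=Q_j'[\ell(g,x_j')^2+2\xi_j'\ell(g,x_j')]$, where $Q_j=\Qf_j(x_j)$ and $Q_j'=\Qf_j(x_j')$ use the common query function $\Qf_j$ measurable with respect to $z_{1:j-1}$. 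The purpose of this decomposition is that the localization term $-2\beta_1 Q_j'\ell(g,x_j')^2$ inherited from the ghost-sample bound (Lemma~\ref{lem:ghost}) acts purely on the square part, which is exactly what lets it both inflate the coefficient of the symmetrized square term and leave behind an offset.

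The second step is the symmetrization. I would attach an independent Rademacher sign $\epsilon_j$ to each round (shared across all $g$) and verify, for every $g$, the pointwise identity that the summand $T_j(g)=M_j(g)-M_j'(g)-2\beta_1 Q_j'\ell(g,x_j')^2$ equals itself when $\epsilon_j=+1$ and its $z_j\leftrightarrow z_j'$ swap when $\epsilon_j=-1$, provided one uses the symmetrized form
\[
S_j(g) = \big\{\epsilon_j Q_j[(1+\beta_1)\ell(g,x_j)^2+2\xi_j\ell(g,x_j)] - \beta_1 Q_j\ell(g,x_j)^2\big\} + \big\{-\epsilon_j Q_j'[(1+\beta_1)\ell(g,x_j')^2+2\xi_j'\ell(g,x_j')] - \beta_1 Q_j'\ell(g,x_j')^2\big\}.
\]
Substituting $\epsilon_j=\pm1$ confirms this; the coefficient $(1+\beta_1)$ appears precisely because the localization sits on the sample carrying the minus sign, and the leftover $\beta_1$ survives as the offset. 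The decisive feature is that each brace depends on only one of the two samples and has identical form up to $\epsilon_j\mapsto-\epsilon_j$, $z_j\mapsto z_j'$. Since $(x_j,c_j)$ and $(x_j',c_j')$ are i.i.d.\ draws from $\Dcal$ and the only adapted quantity is $\Qf_j$, conditioning on the (reconstructed) past makes the two samples exchangeable, so flipping $\epsilon_j$ preserves the conditional law; chaining across rounds gives that the random function $g\mapsto\sum_j T_j(g)$ and $g\mapsto\sum_j S_j(g)$ agree in distribution when $\epsilon$ is uniform. Then $\sup_g\sum_j S_j(g)\le\sup_g\sum_j(\text{first brace})+\sup_g\sum_j(\text{second brace})$, the two pieces are identically distributed, and a union bound produces the factor $2$ together with the threshold $\tau/2$.

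The main obstacle, and the reason the tree process enters, is that a swap $z_k\leftrightarrow z_k'$ at round $k$ alters the history $z_{1:j-1}$ and hence the query function $\Qf_j$ at every later round. Thus in the reconstructed expression the indicator at round $j$ is $\Qf_j$ evaluated on the sample selected by $\epsilon_{1:j-1}$; as $\epsilon$ ranges over $\{-1,+1\}^i$, these indicators trace out a binary tree of depth $i$ with $\{0,1\}$ labels, which is exactly the tree process $\Qtree$ of Definition~\ref{def:tree}, with $\Qtree_j(\epsilon)$ the label on the node reached by $\epsilon_{1:j-1}$. I would make this precise by induction on the rounds, invoking conditional exchangeability at each step so that a swap touches only the \emph{future} query functions; note that the pseudo-dimension assumption is not used here, as discretization is deferred to Lemma~\ref{lem:discretization}. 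Finally, after conditioning on the data the realized indicators form one particular tree, so I bound the probability over $\epsilon$ by $\sup_\Qtree$ over all trees and take the outer data expectation, yielding exactly the offset-Rademacher expression on the right-hand side. The lower-tail claim follows identically: repeating the algebra with $M_j'-M_j$ in place of $M_j-M_j'$ places the localization on the plus-sign sample, turning $(1+\beta_1)$ into $(1-\beta_1)$.
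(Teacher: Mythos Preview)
Your proposal is correct and follows essentially the same approach as the paper: the same algebraic split of $M_j$ into $(1+\beta_1)\ell^2+2\xi\ell$ and the $-\beta_1\ell^2$ offset, the same round-by-round symmetrization with conditional exchangeability of $(z_j,z_j')$, the same decoupling of the two braces via two separate suprema and a union bound, and the same passage to a tree process. The only cosmetic difference is that the paper makes the dependency-breaking step explicit by replacing the expectation over $Q_j,Q_j'$ with a supremum at each round, arriving at the interleaved form $\langle\sup_{Q_j}\EE_{\epsilon_j}\rangle_{j=1}^i$, which it then identifies with $\sup_\Qtree\PP_\epsilon$; your description instead tracks how the swaps propagate through $\Qf_j$ and then bounds the realized tree by the supremum at the end, which amounts to the same thing.
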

\begin{proof}
For this proof, we think of $Q_j$ as a binary variable that is
dependent on $z_1,\ldots,z_{j-1}$ and $x_j$. Similarly $Q_j'$ depends
on $z_1,\ldots,z_{j-1}$ and $x_j'$. Using this notation, and
decomposing the square loss, we get
\begin{align*}
M_j(g) = Q_j \left[(g(x_j) - c_j)^2 - (g^\star(x_j)-c_j)^2\right] = Q_j\ell^2(g,x_j) + 2 Q_j\xi_j\ell(g,x_j).
\end{align*}
As such, we can write
\begin{align*}
M_j(g) - M_j'(g) - 2\beta_1Q_j'\ell^2(g,x_j') & = \underbrace{(1+\beta_1)Q_j\ell^2(g,x_j) + 2Q_j\xi_j\ell(g,x_j)}_{\triangleq T_{1,j}} - \underbrace{\beta_1Q_j\ell^2(g,x_j)}_{\triangleq T_{2,j}}\\
& - \underbrace{(1+\beta_1)Q_j'\ell^2(g,x_j') - 2Q_j'\xi_j'\ell(g,x_j')}_{\triangleq T_{1,j}'} - \underbrace{\beta_1Q_j'\ell^2(g,x_j')}_{\triangleq T_{2,j}'}.
\end{align*}
Here we have introduce the short forms $T_{1,j}, T_{2,j}$ and the
primed version just to condense the derivations. Overall we must bound
\begin{align*}
\PP\left[\sup_{g \in \Gcal} \sum_{j=1}^i T_{1,j} - T_{2,j} - T_{1,j}' - T_{2,j}' \ge \tau\right] = \EE\one\left\{\sup_{g \in \Gcal} \sum_{j=1}^i T_{1,j} - T_{2,j} - T_{1,j}' - T_{2,j}' \ge \tau\right\}.
\end{align*}
Observe that in the final term $T_{1,i}, T_{1,i}'$ are random
variables with identical conditional distribution, since there are no
further dependencies and $(x_i,\xi_i,Q_i)$ are identically distributed
to $(x_i',\xi_i',Q_i')$. As such, we can symmetrize the
$i^{\textrm{th}}$ term by introducing the Rademacher random variable
$\epsilon_i \in \{-1,+1\}$ to obtain
\begin{align*}
& \EE_{Z,Z'}\EE_{\epsilon_i} \one\left\{\sup_{g \in \Gcal} \sum_{j=1}^{i-1} T_{1,j} - T_{1,j}' + \epsilon_i(T_{1,i} - T_{1,i}') - \sum_{j=1}^i (T_{2,j} + T_{2,j}') \ge \tau\right\}\\
& \le \EE_{Z,Z'} \sup_{Q_i,Q_i'} \EE_{\epsilon_i}\one\left\{\sup_{g \in \Gcal} \sum_{j=1}^{i-1} T_{1,j} - T_{1,j}' + \epsilon_i(T_{1,i} - T_{1,i}') - \sum_{j=1}^i (T_{2,j} + T_{2,j}') \ge \tau\right\}.
\end{align*}
Here in the second step, we have replaced the expectation over
$Q_i,Q_i'$ with supremum, which breaks the future dependencies for the
$(i-1)^{\textrm{st}}$ term. Note that while we still write
$\EE_{Z,Z'}$, we are no longer taking expectation over $Q_i,Q_i'$
here. The important point is that since $x_j,\xi_j$ are all i.i.d.,
the only dependencies in the martingale are through $Q_j$s and by
taking supremum over $Q_i,Q_i'$, swapping the role of $T_{1,i-1}$ and
$T_{1,i-1}'$ no longer has any future effects. Thus we can symmetrize
the $(i-1)^{\textrm{st}}$ term. Continuing in this way, we get
\begin{align*}
  & \EE \EE_{\epsilon_{i-1}} \sup_{Q_i,Q_i'} \EE_{\epsilon_i} \one\left\{\sup_{g \in \Gcal} \sum_{j=1}^{i-2} T_{1,j} - T_{1,j}' + \sum_{j=i-1}^i\epsilon_j(T_{1,j} - T_{1,j}') - \sum_{j=1}^i (T_{2,j} + T_{2,j}') \ge \tau\right\}\\
  & \le 
  \EE\left\langle \sup_{Q_j,Q_j'} \EE_{\epsilon_j}\right\rangle_{j=1}^i \one\left\{\sup_{g \in \Gcal}\sum_{j=1}^i \epsilon_j (T_{1,j} - T_{1,j}') - \sum_{j=1}^i (T_{2,j} + T_{2,j}') \ge \tau \right\}.
\end{align*}
Here in the final expression the outer expectation is just over the
variables $x_j,x_j',\xi_j,\xi_j'$ and the bracket notation denotes
interleaved supremum and expectation. Expanding the definitions of
$T_{1,i},T_{2,i}$, we currently have
\begin{align*}
  \EE\left\langle \sup_{Q_j,Q_j'} \EE_{\epsilon_j}\right\rangle_{j=1}^i&\one\left\{\sup_{g \in \Gcal}\sum_{j=1}^i \epsilon_j\left[(1+\beta_1)Q_j\ell^2(g,x_j) +2Q_j\xi_j\ell(g,x_j)\right] - \sum_{j=1}^i\beta_1Q_j\ell^2(g,x_j)\right.\\
& \left. - \sum_{j=1}^i\epsilon_j\left[(1+\beta_1)Q_j'\ell^2(g,x_j') + 2Q_j'\xi_j'\ell(g,x_j')\right] - \sum_{j=1}^i\beta_1Q_j'\ell^2(g,x_j') \ge \tau \right\}.
\end{align*}
Next we use the standard trick of splitting the supremum over $g$ into
a supremum over two functions $g,g'$, where $g'$ optimizes the primed
terms. This provides an upper bound, but moreover if we replace $\tau$
with $\tau/2$ we can split the indicator into two and this becomes
\begin{align*}
& 2 \EE\left\langle\sup_{Q_j}\EE_{\epsilon_j}\right\rangle_{j=1}^i \one\left\{\sup_{g \in \Gcal} \sum_{j=1}^i\epsilon_jQ_j\left[(1+\beta_1)\ell^2(g,x_j) + 2\xi_j\ell(g,x_j)\right] - \beta_1Q_j\ell^2(g,x_j) \ge \tau/2\right\}\\
& = 2 \EE\sup_{\Qtree}\PP_{\epsilon}\left[\sup_{g \in \Gcal} \sum_{j=1}^i\epsilon_j\Qtree_j(\epsilon)\left[(1+\beta_1)\ell^2(g,x_j) + 2\xi_j\ell(g,x_j)\right] - \beta_1 \Qtree_j(\epsilon) \ell^2(g,x_j) \ge \tau/2\right].
\end{align*}
The tree process $\Qtree$ arises here because the interleaved supremum
and expectation is equivalent to choosing a binary tree decorated with
values from $\{0,1\}$ and then navigating the tree using the
Rademacher random variables $\epsilon$. The bound for the other tail
is proved in the same way, except $(1+\beta_1)$ is replaced by
$(1-\beta_1)$.
\end{proof}

The next lemma is more standard, and follows from the union bound and
the bound on the Rademacher moment generating function. 
\begin{lemma}[Finite class bound]
\label{lem:finite_class}
For any $x_{1:i},\xi_{1:i},\Qtree$, and for finite $\Gcal$, we have
\ifthenelse{\equal{\version}{arxiv}}{
\begin{align*}
  &\PP_{\epsilon}\left[\max_{g \in \Gcal}\sum_{j=1}^i\epsilon_j\Qtree_j(\epsilon)\left[(1+\beta_1)\ell^2(g,x_j) + 2\xi_j\ell(g,x_j)\right] - \beta_1\Qtree_j(\epsilon)\ell^2(g,x_j)\ge \tau\right]
  \le |\Gcal|\exp\left(\frac{-2\beta_1\tau}{(3+\beta_1)^2}\right).
\end{align*}
}{
\begin{align*}
  &\PP_{\epsilon}\left[\max_{g \in \Gcal}\sum_{j=1}^i\epsilon_j\Qtree_j(\epsilon)\left[(1+\beta_1)\ell^2(g,x_j) + 2\xi_j\ell(g,x_j)\right] - \beta_1\Qtree_j(\epsilon)\ell^2(g,x_j)\ge \tau\right]\\
  &\le |\Gcal|\exp\left(\frac{-2\beta_1\tau}{(3+\beta_1)^2}\right).
\end{align*}
}
The same bound applies for the lower tail. 
\end{lemma}
\begin{proof}
Applying the union bound and the Chernoff trick, we get that for any
$\lambda > 0$ the LHS is bounded by
\begin{align*}
& \sum_g \exp(-\tau\lambda)\EE_{\epsilon} \exp \left\{\lambda\left[\sum_{j=1}^i\underbrace{\epsilon_j\Qtree_j(\epsilon)\left[(1+\beta_1)\ell^2(g,x_j) + 2\xi_j\ell(g,x_j)\right] - \beta_1\Qtree_i(\epsilon)\ell^2(g,x_j)}_{\triangleq T_{3,j}}\right]\right\}\\
& = \sum_g \exp(-\tau\lambda) \EE_{\epsilon_{1:i-1}} \exp\left\{\lambda \sum_{j=1}^{i-1}T_{3,j}\right\} \times \EE_{\epsilon_i | \epsilon_{1:i-1}} \exp(\lambda T_{3,i}).
\end{align*}
Let us examine the $i^{\textrm{th}}$ term conditional on
$\epsilon_{1:i-1}$. Conditionally on $\epsilon_{1:i-1}$,
$\Qtree_i(\epsilon)$ is no longer random, so we can apply the MGF
bound for Rademacher random variables to get
\begin{align*}
&\EE_{\epsilon_i|\epsilon_{1:i-1}}\exp\left\{\lambda\epsilon_i\Qtree_i(\epsilon)\left[(1+\beta_1)\ell^2(g,x_i) + 2\xi_i\ell(g,x_i)\right] - \lambda\beta_1\Qtree_i(\epsilon)\ell^2(g,x_i)\right\}\\
& \le \exp\left\{\frac{\lambda^2}{2}\Qtree_i(\epsilon)\left[(1+\beta_1)\ell^2(g,x_i) + 2\xi_i\ell(g,x_i)\right]^2 - \lambda\beta_1\Qtree_i(\epsilon)\ell^2(g,x_i)\right\}\\
& \le \exp\left\{\lambda^2\Qtree_i(\epsilon)\frac{(3+\beta_1)^2}{2}\ell^2(g,x_i) - \lambda\beta_1\Qtree_i(\epsilon)\ell^2(g,x_i)\right\}
 \le 1.
\end{align*}
Here the first inequality is the standard MGF bound on Rademacher
random variables $\EE \exp(a \epsilon) \le \exp(a^2/2)$. In the second
line we expand the square and use that $\ell,\xi \in [-1,1]$ to upper
bound all the terms. Finally, we use the choice of $\lambda =
\frac{2\beta_1}{(3+\beta_1)^2}$. Repeating this argument from $i$ down
to $1$, finishes the proof of the upper tail. The same argument
applies for the lower tail, but we actually get $(3-\beta_1)^2$ in the
denominator, which is of course upper bounded by $(3+\beta_1)^2$,
since $\beta_1 > 0$.
\end{proof}

\begin{lemma}[Discretization]
\label{lem:discretization}
Fix $x_1,\ldots,x_i$ and let $V \subset \RR^i$ be a cover of
$\Gcal$ at scale $\alpha$ on points $x_1,\ldots,x_i$. Then for any $\Qtree$
\begin{align*}
& \PP_{\epsilon}\left[\sup_{g \in \Gcal} \sum_{j=1}^i \epsilon_j\Qtree_j(\epsilon)\left[(1+\beta_1)\ell^2(g,x_j) + 2\xi_j\ell(g,x_j)\right] - \beta_1\Qtree_j(\epsilon)\ell^2(g,x_j) \ge \tau\right] \le \\
& \PP_{\epsilon}\left[\sup_{v \in V} \sum_{j=1}^i \epsilon_j\Qtree_j(\epsilon)\left[(1+\beta_1)\ell^2(v,x_j) + 2\xi_j\ell(v,x_j)\right] - \beta_1\Qtree_j(\epsilon)\ell^2(v,x_j) \ge \tau - 4i(1+\beta)\alpha \right].
\end{align*}
The same bound holds for the lower tail with $1-\beta_1$. Here $\ell(v,x_j) = (v_j - g^\star(x_j))$.
\end{lemma}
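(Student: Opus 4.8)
The plan is a deterministic discretization carried out pointwise in the Rademacher vector $\epsilon$ and the tree $\Qtree$, so that the same $\epsilon$ and $\Qtree$ appear on both sides. For a function $h$ (either some $g \in \Gcal$, read through its values $g(x_j)$, or a cover point $v \in V$, read coordinatewise) write the objective as $\Phi(h) \triangleq \sum_{j=1}^i \epsilon_j\Qtree_j(\epsilon)\bigl[(1+\beta_1)\ell^2(h,x_j) + 2\xi_j\ell(h,x_j)\bigr] - \beta_1\Qtree_j(\epsilon)\ell^2(h,x_j)$, where $\ell(g,x_j) = g(x_j) - g^\star(x_j)$ and $\ell(v,x_j) = v_j - g^\star(x_j)$. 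First I would fix $g \in \Gcal$ and, using that $V$ is an $\alpha$-cover, pick $v \in V$ with $\frac{1}{i}\sum_{j=1}^i |g(x_j)-v_j| \le \alpha$; setting $\delta_j \triangleq |g(x_j)-v_j|$ gives $\sum_{j=1}^i \delta_j \le i\alpha$. The whole argument then reduces to showing $\Phi(g) \le \Phi(v) + 4(1+\beta_1)\sum_j \delta_j$.

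Second, I would establish this through a coordinatewise Lipschitz bound. Since $\epsilon_j \in \{-1,+1\}$ and $\Qtree_j(\epsilon) \in \{0,1\}$, every multiplier appearing in $\Phi$ has magnitude at most $1$. The difference of the linear terms is controlled by $|\ell(g,x_j) - \ell(v,x_j)| = |g(x_j) - v_j| = \delta_j$ together with $|\xi_j| \le 1$, contributing at most $2\delta_j$. For the quadratic terms, boundedness $g,v,g^\star \in [0,1]$ gives $\ell(\cdot,x_j) \in [-1,1]$, whence $|\ell^2(g,x_j)-\ell^2(v,x_j)| = |\ell(g,x_j)-\ell(v,x_j)|\,|\ell(g,x_j)+\ell(v,x_j)| \le 2\delta_j$; the total coefficient on this quadratic difference is $(1+\beta_1)+\beta_1 = 1+2\beta_1$, contributing at most $(1+2\beta_1)\,2\delta_j$. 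Adding the two pieces, the contribution of the $j$th summand changes by at most $\bigl((2+4\beta_1) + 2\bigr)\delta_j = 4(1+\beta_1)\delta_j$, which matches the claimed constant once summed.

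Third, summing over $j$ and invoking $\sum_j \delta_j \le i\alpha$ yields $\Phi(g) \le \Phi(v) + 4i(1+\beta_1)\alpha \le \sup_{v'\in V}\Phi(v') + 4i(1+\beta_1)\alpha$ for every fixed $\epsilon$. Taking the supremum over $g$ produces the deterministic pointwise comparison $\sup_{g\in\Gcal}\Phi(g) \le \sup_{v\in V}\Phi(v) + 4i(1+\beta_1)\alpha$, which immediately gives the event inclusion $\{\sup_g\Phi(g)\ge\tau\} \subseteq \{\sup_v\Phi(v) \ge \tau - 4i(1+\beta_1)\alpha\}$ and hence the stated probability bound (so $\beta = \beta_1$ in the statement). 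The lower tail is identical, with $(1-\beta_1)$ replacing $(1+\beta_1)$ in the first bracket; since $|1-\beta_1| \le 1+\beta_1$ the constant is unchanged.

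I do not expect a deep obstacle, as the content is routine Lipschitz bookkeeping, but the step requiring genuine care is that the comparison must be made pointwise in $\epsilon$ rather than in expectation: because $\Qtree_j(\epsilon)$ depends on $\epsilon$, one cannot discretize before resolving the Rademacher randomness without disturbing the tree structure, so fixing $\epsilon$ and $\Qtree$ first and only then replacing $g$ by its cover element is essential. The secondary care is tracking the constant, specifically bounding $|\ell^2(g,x_j)-\ell^2(v,x_j)|$ by $2\delta_j$ via $\ell \in [-1,1]$ and the cross term by $2\delta_j$ via $|\xi_j|\le 1$, so that the three contributions sum to exactly $4(1+\beta_1)$.
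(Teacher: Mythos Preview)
Your proposal is correct and matches the paper's own proof essentially line for line: the paper also picks the cover element $v$ for $g$, bounds $\frac{1}{i}\sum_j|\ell(g,x_j)-\ell(v,x_j)|\le\alpha$ and $\frac{1}{i}\sum_j|\ell^2(g,x_j)-\ell^2(v,x_j)|\le 2\alpha$ via $g,v,g^\star\in[0,1]$, and then collects the residual $i\alpha\bigl(2(1+\beta_1)+2+2\beta_1\bigr)=4i(1+\beta_1)\alpha$. Your explicit emphasis on working pointwise in $\epsilon$ (so that $\Qtree_j(\epsilon)$ is frozen before discretizing) and your identification of the typo $\beta=\beta_1$ are both apt.
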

\begin{proof}
Observe first that if $v$ is the covering element for $g$, then we are
guaranteed that
\begin{align*}
\frac{1}{i} \sum_{j=1}^i |\ell(g,x_j) - \ell(v,x_j)| &= \frac{1}{i}\sum_{j=1}^i|g(x_j) - v_j| \le \alpha,\\
\frac{1}{i} \sum_{j=1}^i |\ell^2(g,x_j) - \ell^2(v,x_j)| &= \frac{1}{i}\sum_{j=1}^i|(g(x_j) - v_j)(g(x_j)+v_j-2g^\star(x_j)| \le 2\alpha,
\end{align*}
since $g,v,g^\star \in [0,1]$. Thus, adding and subtracting the
corresponding terms for $v$, and applying these bounds, we get a
residual term of $i\alpha (2(1+\beta_1) + 2 + 2\beta_1) = 4i\alpha(1 +
\beta_1)$.
\end{proof}

\textbf{Proof of Theorem~\ref{thm:deviation}.}  Finally we can derive
the deviation bound. We first do the upper tail, $M_j - \EE_j
M_j$. Set $\beta_0 = 1/4, \beta_1 = 1/8$ and apply
Lemmas~\ref{lem:ghost} and~\ref{lem:symmetrization}
to~\eqref{eq:martingale_supremum}.
\begin{align*}
  & \PP\left[\sup_{g\in\Gcal} \sum_{j=1}^iM_j(g) - \frac{3}{2}\EE_jM_j(g) \ge \tau\right] \\
  & \le 2\PP\left[\sup_{g \in \Gcal}\sum_{j=1}^iM_j(g) - M_j'(g) - \frac{1}{4}Q_j(x_j')\ell^2(g,x_j') \ge \tau/2\right]\\
  & \le 4 \EE \sup_{\Qtree}\PP_{\epsilon}\left[\sup_{g \in \Gcal} \sum_{j=1}^i \epsilon_j\Qtree_j(\epsilon)\left(\frac{9}{8}\ell^2(g,x_j) + 2\xi_j\ell(g,x_j)\right) - \frac{1}{8}\Qtree_j(\epsilon_i)\ell^2(g,x_j) \ge \frac{\tau}{4}\right].
\end{align*}
Now let $V(X)$ be the cover for $\Gcal$ at scale $\alpha = \frac{\tau}{32i
  (9/8)} = \frac{\tau}{36i}$, which makes
$\tau/4 - 4i(1+\beta_1)\alpha = \frac{\tau}{8}$.
Thus we get the bound
\begin{align*}
& \le 4 \EE_{X,\xi} \sup_{\Qtree}\PP_{\epsilon}\left[\sup_{h \in \Hcal(X)} \sum_{j=1}^i \epsilon_j\Qtree_j(\epsilon)\left(\frac{9}{8}\ell^2(h,x_j) + 2\xi_j\ell(h,x_j)\right) - \frac{1}{8} \Qtree_j(\epsilon_i)\ell^2(h,x_j) \ge \frac{\tau}{8}\right]\\
& \le 4 \EE_{X} |V(X)| \exp\left(\frac{- 2 (1/8) (\tau/8)}{(3+1/8)^2} \right) = 2\exp(-2\tau/625) \EE_{X}\Ncal\left(\frac{\tau}{36i}, \Gcal, X\right).
\end{align*}
This entire derivation requires that $\tau \ge
\frac{4(9/8)^2}{(1/8)^2} = 324$.

The lower tail bound is similar. By
Lemmas~\ref{lem:ghost} and~\ref{lem:symmetrization}, with $\beta_0 =
1/4$ and $\beta_1 = 1/8$,
\begin{align*}
& \PP\left[\sup_{g \in \Gcal} \sum_{i=1}^n\frac{1}{2}\EE_jM_j(g) - M_j(g) \ge \tau\right]\\
& \le 2\PP\left[\sup_{g \in \Gcal} \sum_{i=1}^n\frac{1}{2}\EE_jM_j(g) - M_j(g) - 2(1/8)Q_j(x_j')\ell^2(g,x_j') \ge \frac{\tau}{2}\right]\\
& \le 4\EE\sup_{\Qtree}\PP_{\epsilon}\left[\sup_{g \in \Gcal} \sum_{j=1}^i \epsilon_j\Qtree_j(\epsilon)\left[\frac{7}{8} \ell^2(g,x_j)+2\xi_j\ell(g,x_j)\right] - \frac{1}{8}\sum_{j=1}^i\Qtree_i(\epsilon)\ell^2(g,x_j) \ge \frac{\tau}{4}\right]\\
& \le 4\EE\sup_{\Qtree}\PP_{\epsilon}\left[\sup_{g \in \Gcal} \sum_{j=1}^i \epsilon_j\Qtree_j(\epsilon)\left[\frac{9}{8}\ell^2(g,x_j)+2\xi_j\ell(g,x_j)\right] - \frac{1}{8}\sum_{j=1}^i\Qtree_i(\epsilon)\ell^2(g,x_j) \ge \frac{\tau}{4}\right].
\end{align*}
This is the intermediate term we had for the upper tail, so we obtain
the same bound.

To wrap up the proof, apply Haussler's
Lemma~\ref{lem:haussler_covering}, to bound the covering number
\begin{align*}
\EE_X \Ncal\left(\frac{\tau}{36i},\Gcal,X\right) \le e(d+1) \left( \frac{72i e}{\tau}\right)^d.
\end{align*}
Finally take a union bound over all pairs of starting and ending
indices $i<i'$, all labels $y$, and both tails to get that the total
failure probability is at most
\begin{align*}
8Ke(d+1)\exp\left( - 2\tau/625 \right)\sum_{i<i' \in [n]}\left(\frac{72e(i'-i)}{\tau}\right)^d.
\end{align*}
The result now follows from standard approximations. Specifically
we use the fact that we anyway require $\tau \ge 324$ to upper bound that
$1/\tau^d$ term, use $(i'-i)^d \le n^d$ and set the whole expression
to be at most $\delta$.


\section{Multiplicative Weights}
\label{app:mw}
For completeness we prove Theorem~\ref{thm:mw} here. For this section
only let $q^{(t)} \propto p^{(t)}$ be the distribution used by the
algorithm at round $t$. If the program is feasible, then there exists
a point that is also feasible against every distribution $q$. By
contraposition, if on iteration $t$, the oracle reports infeasibility
against $q^{(t)}$, then the original program must be infeasible.

Now suppose the oracle always finds $v_t$ that is feasible against
$q^{(t)}$. This implies
\begin{align*}
0 \le \sum_{t=1}^T \sum_{i=1}^nq^{(t)}_i(b_i - \langle a_i,v_t\rangle).
\end{align*}
Define $\ell_t(i) = \frac{b_i - \langle a_i,v_t\rangle}{\rho_i}$ which
is in $ [-1,1]$ by assumption.  We compare this term to
the corresponding term for a single constraint $i$. Using the
standard potential-based analysis, with $\Phi^{(t)} =
\sum_{i}p_i^{(t)}$, we get
\ifthenelse{\equal{\version}{arxiv}}{
\begin{align*}
\Phi^{(T+1)} = \sum_{i=1}^{m}p_i^{(T)}\left(1-\eta \ell_T(i)\right)
& \le \Phi^{(T)}\exp\left(-\eta \sum_{i=1}^mq_i^{(T)}\ell_T(i)\right)
 \le m \exp\left( -\eta\sum_{t=1}^T\sum_{i=1}^m q_i^{(t)}\ell_t(i)\right).
\end{align*}
}{
\begin{align*}
\Phi^{(T+1)} = \sum_{i=1}^{m}p_i^{(T)}\left(1-\eta \ell_T(i)\right)
& \le \Phi^{(T)}\exp\left(-\eta \sum_{i=1}^mq_i^{(T)}\ell_T(i)\right)\\
& \le m \exp\left( -\eta\sum_{t=1}^T\sum_{i=1}^m q_i^{(t)}\ell_t(i)\right).
\end{align*}
}
For any $i$, we also have
\begin{align*}
  \Phi^{(T+1)} \ge \prod_{t=1}^T (1 - \eta \ell_t(i)).
\end{align*}
Thus, taking taking logarithms and re-arranging we get
\begin{align*}
0 &\le \sum_{t=1}^T\sum_{i=1}^m q_i^{(t)}\ell_t(i) 
\le \frac{\log(m)}{\eta} + \frac{1}{\eta} \sum_{t=1}^T\log\left(\frac{1}{1-\eta\ell_t(i)}\right) 
 \le \frac{\log m}{\eta} + \sum_{t=1}^T \ell_t(i) + \eta T. 
\end{align*}
Here we use standard approximations $\log(1/(1-x)) \le x +
x^2$ (which holds for $x \le 1/2$)
along with the fact that $|\ell_t(i)| \le 1$. Using the
definition of $\ell_t(i)$ we have hence proved that
\begin{align*}
\sum_{t=1}^T \langle a_i,v_t\rangle \le Tb_i + \frac{\rho_i \log m}{\eta} + \rho_i \eta T.
\end{align*}
Now with our choice of $\eta = \sqrt{\log(m)/T}$ we get the desired
bound. If $\eta \ge 1/2$ then the result is trivial by the boundedness
guarantee on the oracle.


\section{Proofs of Lemmata}
\label{app:lemmata}

\begin{proof}[Proof of Lemma~\ref{lemma:monotone}]
  By the definitions,
  \begin{align*}
    \Rhat(g') + w'(g'(x) - c)^2 & = \Rhat(g',w',c) \le \Rhat(g) + w'(g(x) - c)^2 \\
    & = \Rhat(g) + w(g(x) - c)^2 + (w'-w)(g(x)-c)^2\\
    & \le \Rhat(g') + w(g'(x) - c)^2 + (w'-w)(g(x)-c)^2.
  \end{align*}
  Rearranging shows that
    $(w'-w)(g'(x) - c)^2 \le (w'-w)(g(x) - c)^2$.
  Since $w' \ge w$, we have $(g'(x)-c)^2 \le (g(x) - c)^2$, which
  is the second claim.
  For the first, the definition of $g$ gives
  \begin{align*}
    \Rhat(g) + w(g(x) - c)^2 \le \Rhat(g') + w(g'(x)-c)^2.
  \end{align*}
  Rearranging this inequality gives,
    $\Rhat(g') - \Rhat(g) \ge w((g(x) - c)^2 - (g'(x) - c)^2) \ge 0$, 
  which yields the result.
\end{proof}

\vspace{-0.5cm}
\begin{proof}[Proof of Lemma~\ref{lem:regret_transform}]
  We take expectation of $M_j$ over the cost conditioned on a fixed example $x_j = x$ and a fixed query outcome $Q_j(y)$:
  \begin{align*}
    \EE[M_j \mid x_j=x, Q_j(y)] &= Q_j(y)\times\EE_c[g(x)^2 -
      \fstar{x}{y}^2 - 2c(y)(g(x)-\fstar{x}{y}) \mid x_j = x]
    \nonumber \\ 
    &=  Q_j(y)\big(g(x)^2 - \fstar{x}{y}^2 -
    2\fstar{x}{y}(g(x)-\fstar{x}{y})\big) \\ 
    &=  Q_j(y) (g(x) - \fstar{x}{y})^2.
  \end{align*}
  The second equality is by Assumption~\ref{assumption:realizability},
  which implies $\EE[c(y) \mid x_j =x ] = \fstar{x}{y}$. Taking
  expectation over $x_j$ and $Q_j(y)$, we have
  \begin{align*}
    \EE_j[M_j] \;=\; \EE_j\left[ Q_j(y) (g(x_j) - \fstar{x_j}{y})^2 \right].
  \end{align*}
  For the variance:
  \begin{align*}
    \Var_j[M_j] &\leq \EE_j[M_j^2] 
    = \EE_j\left[ Q_j(y) (g(x_j) - \fstar{x_j}{y}
      )^2(g(x_j)+\fstar{x_j}{y} - 2c(y))^2 \right]\\ 
    &\leq 4 \cdot \EE_j  \left[ Q_j(y)(g(x_j) - \fstar{x_j}{y} )^2 \right]
    = 4 \EE_j [M_j]. 
  \end{align*}
  This concludes the proof.
\end{proof}

\vspace{-0.5cm}
\begin{proof}[Proof of Lemma~\ref{lem:refined_generalization_bd}]
  Fix some $f \in \Fcal_i$, and let $\hat{y} = h_{\vf}(x)$ and $y^\star = h_{\vf^\star}(x)$ for shorthand, but note that both depend on $x$.
  Define
  \begin{align*}
    S_\zeta(x) &= \ind{\vf^\star(x,\hat{y}) \le \vf^\star(x,y^\star) + \zeta}, \qquad 
    S_\zeta'(x) = \ind{\min_{y \ne y^\star} \vf^\star(x,y) \le \vf^\star(x,y^\star) + \zeta}.
  \end{align*}
  Observe that for fixed $\zeta$, $S_\zeta(x) \ind{\hat{y} \ne y^\star} \le S_\zeta'(x)$ for all $x$.
  We can also majorize the complementary indicator to obtain the inequality
  \begin{align*}
    S_\zeta^C(x) \le \frac{f^\star(x,\hat{y}) - f^\star(x,y^\star)}{\zeta}.
  \end{align*}
  We begin with the definition of realizability, which gives
  \begin{align*}
    \EE_{x,c}[c(h_{\vf}(x))- c(h_{{\vf}^\star}(x)] &=
    \EE_x\left[\left(\vf^\star(x,\hat{y}) - \vf^\star(x,y^\star)\right)\ind{\hat{y}\ne
        y^\star}\right]\\ 
    & = \EE_x\left[\left(S_\zeta(x) +
      S_\zeta^C(x)\right)\left(\vf^\star(x,\hat{y}) -
      \vf^\star(x,y^\star)\right)\ind{\hat{y}\ne y^\star}\right]\\ 
    & \le \zeta \EE_x S_\zeta'(x)  + \EE_i \left[S_\zeta^C(x)
      \ind{\hat{y}\ne y^\star} \left(\vf^\star(x,\hat{y}) -
      \vf^\star(x,y^\star)\right)\right].
  \end{align*}
  The first term here is exactly the $\zeta P_\zeta$ term in the bound.
  We now focus on the second term, which depends on our query rule.
  For this we must consider three cases.

  \textbf{Case 1.} If both $\hat{y}$ and $y^\star$ are not queried,
  then it must be the case that both have small cost ranges.  This
  follows since $\vf \in \vF_i$ and $h_{\vf}(x) = \hat{y}$ so
  $y^\star$ does not dominate $\hat{y}$.  Moreover, since the cost
  ranges are small on both $\hat{y}$ and $y^\star$ and since we know that
  $f^\star$ is well separated under event $S_\zeta^C(x)$, the
  relationship between $\zeta$ and $\psi_i$ governs whether we make a
  mistake or not.  Specifically, we get that $S_\zeta^C(x)
  \ind{\hat{y}\ne y^\star}\ind{\textrm{no query}} \le \ind{\zeta \le
    2\psi_i}$ at round $i$.  In other words, if we do not query and
  the separation is big but we make a mistake, then it must mean that
  the cost range threshold $\psi_i$ is also big.

  Using this argument, we can bound the second term as,
  \begin{align*}
    & \EE_i\left[S_\zeta^C(x) \ind{\hat{y}\ne y^\star} (1-Q_i(\hat{y}))(1-Q_i(y^\star))\left(\vf^\star(x,\hat{y}) -
      \vf^\star(x,y^\star)\right)\right]\\ 
    & \le \EE_i\left[S_\zeta^C(x) \ind{\hat{y}\ne y^\star} (1-Q_i(\hat{y}))(1-Q_i(y^\star)) 2\psi_i \right]\\ 
    & \le \EE_i\left[\ind{\zeta \le 2\psi_i} 2\psi_i \right] =
    \ind{\zeta \le 2\psi_i} 2\psi_i. 
  \end{align*}

  \textbf{Case 2.} If both $\hat{y}$ and $y^\star$ are queried, we can
  relate the second term to the square loss,
  \begin{align*}
    & \EE_i\left[S_\zeta^C(x) Q_i(\hat{y})Q_i(y^\star) \left(\vf^\star(x,\hat{y}) -
      \vf^\star(x,y^\star)\right)\right]\\ 
    & \le \frac{1}{\zeta}\EE_i\left[Q_i(\hat{y})Q_i(y^\star)\left(\vf^\star(x,\hat{y}) -
      \vf^\star(x,y^\star)\right)^2\right]\\ 
    & \le \frac{1}{\zeta}\EE_i\left[Q_i(\hat{y})Q_i(y^\star) \left(\vf^\star(x,\hat{y}) - \vf(x,\hat{y}) +
      \vf(x,y^\star) - \vf^\star(x,y^\star)\right)^2\right]\\ 
    & \le \frac{2}{\zeta}\EE_i\left[Q_i(\hat{y})(\vf^\star(x,\hat{y}) - \vf(x,\hat{y}))^2 + Q_i(y^\star)(\vf(x,y^\star) -
      \vf^\star(x,y^\star))^2\right]\\ 
    & \le \frac{2}{\zeta}\sum_y \EE_i\left[Q_i(y)(\vf^\star(x,y) - \vf(x,y))^2 \right]
    = \frac{2}{\zeta}\sum_y \EE_i\left[ M_i(\vf;y)\right]. 
  \end{align*}
  Passing from the second to third line here is justified by the fact
  that $\vf^\star(x,\hat{y}) \ge \vf^\star(x,y^\star)$ and
  $\vf(x,\hat{y}) \le \vf(x,y^\star)$ so we added two non-negative
  quantities together.  The last step uses
  Lemma~\ref{lem:regret_transform}. While not written, we also use the
  event $\ind{\hat{y} \ne y^\star}$ to save a factor of $2$.

  \textbf{Case 3.} The last case is if one label is queried and the
  other is not.  Both cases here are analogous, so we do the
  derivation for when $y(x)$ is queried but $y^\star(x)$ is not.
  Since in this case, $y^\star(x)$ is not dominated ($h_{\vf}(x)$ is
  never dominated provided $\vf \in \vF_i$), we know that the cost
  range for $y^\star(x)$ must be small.  Using this fact, and
  essentially the same argument as in case 2, we get
  \begin{align*}
    & \EE_i\left[S_\zeta^C(x)Q_i(\hat{y})(1-Q_i(y^\star)) \left(\vf^\star(x,\hat{y}) -
      \vf^\star(x,y^\star)\right)\right]\\ 
    & \le \frac{1}{\zeta} \EE_i\left[Q_i(\hat{y})(1-Q_i(y^\star)) \left(\vf^\star(x,\hat{y}) -
      \vf^\star(x,y^\star)\right)^2\right]\\ 
    & \le \frac{2}{\zeta} \EE_i\left[ Q_i(\hat{y})\left(\vf^\star(x,\hat{y}) -
      \vf(x,\hat{y})\right)^2 + (1-Q_i(y^\star))\left(\vf(x,y^\star) -
      \vf^\star(x,y^\star)\right)^2\right]\\ 
    & \le \frac{2\psi_i^2}{\zeta} + \frac{2}{\zeta}
    \EE_i\left[Q_i(\hat{y}) \left(\vf^\star(x,\hat{y}) -
      \vf(x,\hat{y})\right)^2\right] 
    \le \frac{2\psi_i^2}{\zeta} + \frac{2}{\zeta}\sum_y
    \EE_i\left[M_i(\vf;y)\right]. 
  \end{align*}
  We also obtain this term for the other case where $y^\star$ is
  queried but $\hat{y}$ is not.

  To summarize, adding up the contributions from these cases (which is
  an over-estimate since at most one case can occur and all are
  non-negative), we get
  \begin{align*}
    \EE_{x,c}[c(h_{\vf}(x))- c(h_{{\vf}^\star}(x)] \le \zeta P_\zeta +
    \ind{\zeta \le 2\psi_i} 2\psi_i + \frac{4\psi_i^2}{\zeta} +
    \frac{6}{\zeta}\sum_y \EE_i\left[M_i(\vf;y)\right].
  \end{align*}
  This bound holds for any $\zeta$, so it holds for the minimum.
\end{proof}

\vspace{-0.5cm}
\begin{proof}[Proof of Lemma~\ref{lem:version_to_csr}]
The proof here is an easier version of the generalization bound proof
for $f_i$. First, condition on the high probability event in
Theorem~\ref{thm:deviation}, under which we already showed that
$\vf^\star \in \vF_i$ for all $i \in [n]$. Now fix some $f \in
\Fcal_i$. Since by the monotonicity property defining the sets
$\Gcal_i$, we must have $f \in \Fcal_j$ for all $1 \le j \le i$, we
can immediately apply Lemma~\ref{lem:refined_generalization_bd} on all
rounds to bound the cost sensitive regret by
\begin{align*}
\frac{1}{i-1}\sum_{j=1}^{i-1}\min_{\zeta > 0}\left\{\zeta P_\zeta + \one\{\zeta \le 2\psi_j\}2\psi_j + \frac{4\psi_j^2}{\zeta} + \frac{6}{\zeta}\sum_y \EE_j[M_j(f;y)] \right\}.
\end{align*}
As in the generalization bound, the first term contributes $\zeta
P_\zeta$, the second is at most $\frac{12}{\zeta}$ and the third is at
most $4/\zeta (1 + \log(i-1))$. The fourth term is slightly different. We still apply~\eqref{eq:expectation_bd} to obtain
\begin{align*}
\frac{1}{i-1} \sum_{j=1}^{i-1} \EE_j[M_j(f;y)] &\le \frac{2}{i-1} \sum_{j=1}^{i-1}M_j(f;y) + \frac{2\logfactor}{i-1}\\
& = 2\left(\Rhat_i(f;y) - \Rhat_i(g_{i,y};y) + \Rhat_i(g_{i,y};y) - \Rhat_i(f^\star;y)\right) + \frac{2 \logfactor}{i-1}\\
& \le 2\Delta_i + \frac{2\logfactor}{i-1}
= \frac{2(\regconst+1)\logfactor}{i-1}.
\end{align*}
We use this bound for each label. Putting terms together, the
cost sensitive regret is
\begin{align*}
& \min_{\zeta > 0}\left\{\zeta P_\zeta + \frac{12}{\zeta} + \frac{4(1+\log(i-1))}{\zeta} + \frac{12K(\regconst+1)\logfactor}{\zeta(i-1)}\right\}
\le \min_{\zeta > 0}\left\{\zeta P_\zeta + \frac{44\regconst K \logfactor}{\zeta(i-1)}\right\}.
\end{align*}
This proves containment, since this upper bounds the cost
sensitive regret of every $f \in \vF_i$.
\end{proof}

\vspace{-0.5cm}
\begin{proof}[Proof of Lemma~\ref{lem:cost_range_translations}]
  The first claim is straightforward, since $\vF_i \subset
  \vFcsr(r_i)$ and since we set the tolerance parameter in the calls
  to \maxcost and \mincost to $\psi_i/4$. Specifically,
  \begin{align*}
    \widehat{\gamma}(x_i,y) \le \gamma(x_i,y,\vF_i) + \psi_i/2 \le \gamma(x_i,y,\vFcsr(r_i)) + \psi_i/2.
  \end{align*}
  For the second claim, suppose $y \ne y_i^\star$. Then
  \begin{align*}
    y \in Y_i & \Rightarrow \cminhat{x_i,y} \le
    \cmaxhat{x_i,\bar{y}_i}\\ 
    &\Rightarrow \cminhat{x_i,y} \le
    \cmaxhat{x_i,y_i^\star}\\ 
    &\Rightarrow \cmin{x_i,y, \vFcsr(r_i)} \le
    \cmax{x_i,y_i^\star, \vFcsr(r_i)} +
    \psi_i/2\\  
    &\Rightarrow \fstar{x_i}{y} -
    \gamma(x_i,y,\vFcsr(r_i)) \le
    \fstar{x_i}{y^\star_i} + \gamma(x_i,y_i^\star,\vFcsr(r_i)) + \psi_i/2.
  \end{align*}
  This argument uses the tolerance setting $\psi_i/4$,
  Lemma~\ref{lem:version_to_csr} to translate between the version
  space and the cost-sensitive regret ball, and finally the fact that
  $\vf^\star \in \vFcsr(r_i)$ since it has zero cost-sensitive
  regret. This latter fact lets us lower (upper) bound the minimum
  (maximum) cost by $\vf^\star$ prediction minus (plus) the cost
  range.

  For $y_i^\star$ we need to consider two cases. First assume
  $y_i^\star = \bar{y}_i$. Since by assumption $|Y_i| > 1$, it must be
  the case that $\cminhat{x_i,\tilde{y}_i} \le
  \cmaxhat{x_i,y_i^\star}$ at which point the above derivation
  produces the desired implication. On the other hand, if $y_i^\star
  \ne \bar{y}_i$ then $\cmaxhat{x_i,\bar{y_i}} \le
  \cmaxhat{x_i,y_i^\star}$, but this also implies that
  $\cminhat{x_i,\tilde{y}_i} \le \cmaxhat{x_i,y_i^\star}$, since
  minimum costs are always smaller than maximum costs, and $\bar{y}_i$
  is included in the search defining $\tilde{y}_i$.
\end{proof}

\bibliographystyle{plainnat}
\bibliography{bib}

\end{document}